\documentclass[conference]{IEEEtran}
\usepackage{multirow}
\usepackage[table]{xcolor}
\usepackage[numbers]{natbib}
\usepackage{multicol}
\usepackage[bookmarks=true]{hyperref}
\usepackage{amsmath,amsfonts,amsthm,amssymb}
\usepackage{dsfont}
\allowdisplaybreaks
\usepackage[caption=false,font=footnotesize,labelfont=sf,textfont=sf]{subfig}
\usepackage[T1]{fontenc}
%
\usepackage{graphicx}


\usepackage{nicematrix}
\usepackage{float} 
\graphicspath{{Images/}} 
\usepackage{calc} 

\usepackage{enumitem} 
\usepackage{xspace}
\usepackage{cleveref}
\usepackage{bm}
\usepackage{tabularx}
\usepackage{multirow}
\usepackage[noend,ruled,linesnumbered]{algorithm2e}\RestyleAlgo{ruled}
 \SetCommentSty{mycommfont}
 \SetKw{Return}{return}
 \SetKw{Continue}{continue}


\begin{document}
\title{Effective Sampling for Robot Motion Planning 
Through the Lens of Lattices}


\author{\authorblockN{Itai Panasoff and Kiril Solovey}
\authorblockA{Viterbi Faculty of Electrical and Computer Engineering\\
Technion--Israel Institute of Technology, Haifa, Israel\\
itaip@campus.technion.ac.il, kirilsol@technion.ac.il}
}

\maketitle

\newcommand{\ignore}[1]{}

\def\P{\mathcal{P}} \def\C{\mathcal{C}} \def\H{\mathcal{H}}
\def\F{\mathcal{F}} \def\U{\mathcal{U}} \def\L{\mathcal{L}}
\def\O{\mathcal{O}} \def\I{\mathcal{I}} \def\S{\mathcal{S}}
\def\G{\mathcal{G}} \def\Q{\mathcal{Q}} \def\I{\mathcal{I}}
\def\T{\mathcal{T}} \def\L{\mathcal{L}} \def\N{\mathcal{N}}
\def\V{\mathcal{V}} \def\B{\mathcal{B}} \def\D{\mathcal{D}}
\def\W{\mathcal{W}} \def\R{\mathcal{R}} \def\M{\mathcal{M}}
\def\X{\mathcal{X}} \def\A{\mathcal{A}} \def\Y{\mathcal{Y}}
\def\L{\mathcal{L}}

\def\dS{\mathbb{S}} \def\dT{\mathbb{T}} \def\dC{\mathbb{C}}
\def\dG{\mathbb{G}} \def\dD{\mathbb{D}} \def\dV{\mathbb{V}}
\def\dH{\mathbb{H}} \def\dN{\mathbb{N}} \def\dE{\mathbb{E}}
\def\dR{\mathbb{R}} \def\dM{\mathbb{M}} \def\dm{\mathbb{m}}
\def\dB{\mathbb{B}} \def\dI{\mathbb{I}} \def\dM{\mathbb{M}}
\def\dZ{\mathbb{Z}}

\def\open{\textup{OPEN}}
\def\opennext{\textup{OPEN}_\textup{next}}
\def\visited{\textup{VISITED}}

\def\E{\mathbf{E}} 

\def\eps{\varepsilon}
\def\epsilon{\varepsilon}

\def\itai#1{\textcolor{blue}{(\textbf{Itai:} #1})}
\def\kiril#1{\textcolor{red}{(\textbf{Kiril:} #1})}
\def\yaniv#1{\textcolor{cyan}{(\textbf{Yaniv:} #1})}
\def\ido#1{\textcolor{brown}{(\textbf{Ido:} #1})}
\def\roy#1{\textcolor{teal}{(\textbf{Roy:} #1})}

\def\niceparagraph#1{\vspace{5pt} \noindent \textbf{#1}}

\def\dt{\,\mathrm{d}t}
\def\dx{\,\mathrm{d}x}
\def\dy{\,\mathrm{d}y}
\def\drho{\,\mathrm{d}\rho}

\theoremstyle{definition}
\newtheorem{definition}{Definition}
\newtheorem{problem}{Problem}
\theoremstyle{theorem}
\newtheorem{lemma}{Lemma}
\newtheorem{cor}{Corollary}
\newtheorem{thm}{Theorem}
\newtheorem{claim}{Claim}

\newcommand{\prm}{{\tt PRM}\xspace}
\newcommand{\prmstar}{{\tt PRM}$^*$\xspace}
\newcommand{\rrt}{{\tt RRT}\xspace}
\newcommand{\est}{{\tt EST}\xspace}
\newcommand{\grrt}{{\tt GEOM-RRT}\xspace}
\newcommand{\rrtstar}{{\tt RRT}$^*$\xspace}
\newcommand{\rrg}{{\tt RRG}\xspace}
\newcommand{\btt}{{\tt BTT}\xspace}
\newcommand{\fmt}{{\tt FMT}$^*$\xspace}
\newcommand{\dfmt}{{\tt DFMT}$^*$\xspace}
\newcommand{\dprm}{{\tt DPRM}$^*$\xspace}
\newcommand{\mstar}{{\tt M}$^*$\xspace}
\newcommand{\drrtstar}{{\tt dRRT}$^*$\xspace}
\newcommand{\sst}{{\tt SST}\xspace}
\newcommand{\sststar}{{\tt SST}$^*$\xspace}
\newcommand{\stride}{{\tt STRIDE}\xspace}
\newcommand{\aorrt}{{\tt AO-RRT}\xspace}
\newcommand{\aorrtrebuilding}{{\tt Multi-tree AO-RRT}\xspace}
\newcommand{\aorrtnopruning}{{\tt AO-RRT}\xspace}
\newcommand{\aoest}{{\tt AO-EST}\xspace}
\newcommand{\kpiece}{{\tt KPIECE}\xspace}
\newcommand{\hybrrttwo}{{\tt HybAO-RRT}\xspace}
\newcommand{\hybrrttwostride}{{\tt HybAO-RRT-STRIDE}\xspace}
\newcommand{\hybrrttwoest}{{\tt HybRRT2\!.\!0-EST}\xspace}
\newcommand{\rrttwo}{{\tt AO-RRT}\xspace}
\newcommand{\aorrtprune}{{\tt AO-RRT Pruning}\xspace}
\newcommand{\rrtbc}{{\tt BCRRT}\xspace}
\newcommand{\rrtbctwo}{{\tt BCRRT2\!.\!0}\xspace}

\newcommand{\xmin}{x_{\textup{min}}}
\newcommand{\Xnear}{X_{\textup{near}}}
\newcommand{\Xgoal}{\X_{\textup{goal}}}
\newcommand{\xgoal}{x_{\textup{goal}}}
\newcommand{\xinit}{x_{\textup{init}}}
\newcommand{\xnew}{x_{\textup{new}}}
\newcommand{\xnear}{x_{\textup{near}}}
\newcommand{\xrand}{x_{\textup{rand}}}
\newcommand{\xrandtwo}{x_{\textup{rand2}}}
\newcommand{\ygoal}{y_{\textup{goal}}}
\newcommand{\Ygoal}{\Y_{\textup{goal}}}
\newcommand{\Yfree}{\Y_{\textup{free}}}
\newcommand{\Xfree}{\X_{\textup{free}}}
\newcommand{\yinit}{y_{\textup{init}}}
\newcommand{\ynew}{y_{\textup{new}}}
\newcommand{\ynear}{y_{\textup{near}}}
\newcommand{\yrand}{y_{\textup{rand}}}
\newcommand{\ymin}{y_{\textup{min}}}
\newcommand{\xparent}{x_{\textup{parent}}}
\newcommand{\cmin}{c_{\textup{min}}}
\newcommand{\cmax}{c_{\textup{max}}}
\newcommand{\crand}{c_{\textup{rand}}}
\newcommand{\cnew}{c_{\textup{new}}}
\newcommand{\cnear}{c_{\textup{near}}}
\newcommand{\Tprop}{T_{\textup{prop}}}
\newcommand{\trand}{t_{\textup{rand}}}
\newcommand{\tnew}{t_{\textup{new}}}
\newcommand{\urand}{u_{\textup{rand}}}
\newcommand{\unew}{u_{\textup{new}}}
\newcommand{\pinew}{\pi_{\textup{new}}}
\newcommand{\pimin}{\pi_{\textup{min}}}

\newcommand{\zv}{\vec{0}}

\newcommand{\randomstate}{\textsc{random-state}}
\newcommand{\sample}{\textsc{sample}}
\newcommand{\nearest}{\textsc{nearest}}
\newcommand{\near}{\textsc{near}}
\newcommand{\steer}{\textsc{steer}}
\newcommand{\collisionfree}{\textsc{collision-free}}
\newcommand{\propagate}{\textsc{propagate}}
\newcommand{\newstate}{\textsc{new-state}}
\newcommand{\propstate}{\textsc{prop-state}}
\newcommand{\propcost}{\textsc{prop-cost}}
\newcommand{\cost}{\textsc{cost}\xspace}
\newcommand{\tracepath}{\textsc{trace-path}}
\newcommand{\nulll}{\textsc{null}}

\newcommand{\de}{\delta,\epsilon}
\newcommand{\decomp}{$(\delta,\epsilon)$-complete\xspace}
\newcommand{\decomps}{$(\delta,\epsilon)$-completeness\xspace}

\newcommand{\ZN}{\mathbb{Z}^d}
\newcommand{\DN}{D_d^*}
\newcommand{\AN}{A_d^*}
\newcommand{\Lattices}{$\ZN,\DN,\AN$ }
\newcommand{\XL}{\X_{\Lambda}^{\delta,\epsilon}}
\newcommand{\XZ}{\X_{\mathbb{Z}^d}^{\delta,\epsilon}}
\newcommand{\XD}{\X_{D_d^*}^{\delta,\epsilon}}
\newcommand{\XA}{\X_{A_d^*}^{\delta,\epsilon}}
\newcommand{\XR}{\X_{\text{rnd}}^{\delta,\epsilon}}

\newcommand{\td}{\tilde{d}}

\newcommand{\vol}{\textup{vol}}

\newcommand{\btheta}{\Bar{\theta}}

\newcommand{\Rcov}{\R_{\text{cover}}}

\newcommand{\glo}{\textsc{glo}\xspace}
\newcommand{\loc}{\textsc{loc}\xspace}
\newcommand{\rnd}{\textsc{rnd}\xspace}
\newcommand{\rndm}{\textsc{rnd}$^-$\xspace}

\newif\ifincludeappendix
\includeappendixtrue  
\newcommand{\appendixtext}[1]{%
  \ifincludeappendix
    in~\ref{#1}.%
  \else
    in the supplementary material)%
  \fi
}
\newcommand{\conditionaltext}[2]{%
  \ifincludeappendix
    #1%
  \else
    #2%
  \fi
}


\begin{abstract}
Sampling-based methods for motion planning, which capture the structure of the robot's free space via (typically random) sampling, have gained popularity due to their scalability, simplicity, and for offering global guarantees, such as probabilistic completeness and asymptotic optimality. Unfortunately, the practicality of those guarantees remains limited as they do not provide insights into the behavior of motion planners for a finite number of samples (i.e., a finite running time). In this work, we harness lattice theory and the concept of $\bm{(\delta,\eps)}$-completeness by Tsao et al.~(2020) to construct deterministic sample sets that endow their planners with strong finite-time guarantees while minimizing running time. In particular, we introduce a highly-efficient deterministic sampling approach based on the $\bm{A_d^*}$ lattice, which 
is the best-known geometric covering in dimensions $\bm{\leq 21}$. Using our new sampling approach, we obtain at least an order-of-magnitude speedup over existing deterministic and uniform random sampling methods for complex motion-planning problems. Overall, our work provides deep mathematical insights while advancing the practical applicability of sampling-based motion planning. 
\href{https://github.com/MRSTechnion/lattice-sampling-mp}{https://github.com/MRSTechnion/lattice-sampling-mp}
\end{abstract} 

\IEEEpeerreviewmaketitle

\section{Introduction}
Motion planning is a key ingredient in autonomous robotic systems, whose aim is computing collision-free trajectories for a robot operating in environments cluttered with obstacles~\cite{lavalle2006planning}. 
Over the years, various approaches have been developed for tackling the problem, including potential fields~\cite{luo2024potential}, geometric methods~\cite{halperin2017algorithmic}, and optimization-based approaches~\cite{SchulmanDHLABPPGA14,MalyutaEtAl2022,MarcucciEA23}. 
In this work, we focus on sampling-based planners (SBPs), which aim to capture the structure of the robot's free space through graph approximations that result from configuration sampling (typically in a random fashion) and connecting nearby samples. 
SBPs have enjoyed popularity in recent years due to their relative scalability, in terms of the number of robot degrees of freedom (DoFs), and the ease of their implementation~\cite{OrtheyCK24}. 

\begin{figure*}[h!]
  \centering
  \subfloat[$\X_{\dZ_2}^{\delta,\epsilon}$ sample set.]{
    \includegraphics[width=0.27\textwidth, trim={2.2cm 1.7cm 0.9cm 1.0cm},clip]{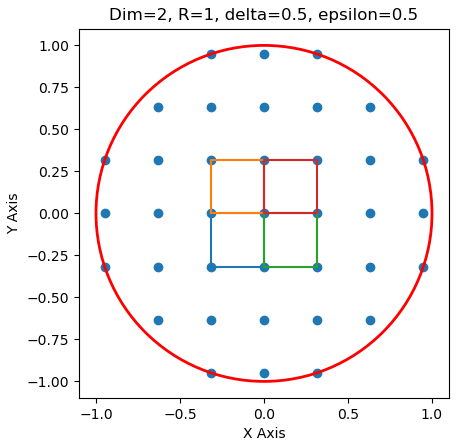}
    }
  \hfil
  \subfloat[$\X_{D_2^*}^{\delta,\epsilon}$ sample set.]{
    \includegraphics[width=0.27\textwidth, trim={2.2cm 1.8cm 0.9cm 1.0cm},clip]{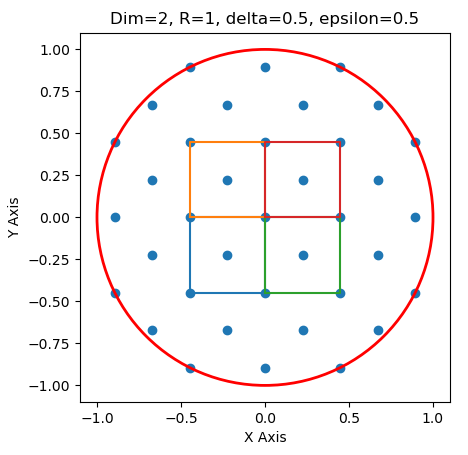}
    }
  \hfil
  \subfloat[$\X_{A_2^*}^{\delta,\epsilon}$  sample set.]{
    \includegraphics[width=0.27\textwidth, trim={2.4cm 1.7cm 0.9cm 1.0cm},clip]{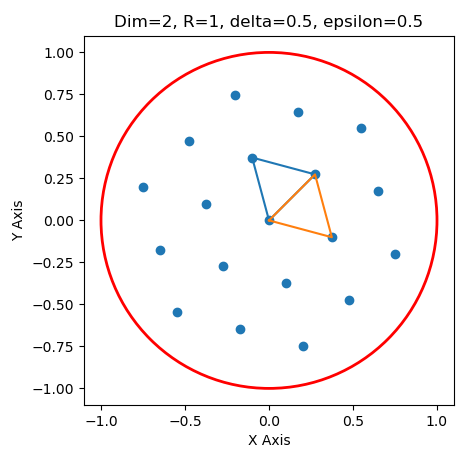}
    }
  \caption{Sample sets within a fixed disc in $\dR^2$, derived from the lattices $\dZ^2, D_2^*$ and $A^*_2$, which yield \decomp guarantees for the same values of $\delta$ and $\eps$. The set $\X_{\dZ_2}^{\delta,\epsilon}$ can be viewed as a tessellation of space using cubes. The set $\X_{D_2^*}^{\delta,\epsilon}$ is obtained by placing a (rescaled) standard grid, and then placing another point in the middle of each cube. The set $\X_{A_2^*}^{\delta,\epsilon}$ can be viewed as a rescaled hexagonal grid as each point is surrounded by a hexagon whose vertices are points in the set. Note that the density of $\X_{\dZ^2}^{\delta,\eps}$ and $\X_{D^*_2}^{\delta,\eps}$ is the same, and higher than the density of $\X_{A^*_2}^{\delta,\eps}$.}
  \label{fig:2d_lattices}
\end{figure*}

Another key benefit is the ability of SBPs to escape local minima (unlike potential fields) and global solution guarantees (in contrast, optimization-based approaches~\cite{SchulmanDHLABPPGA14}, which typically provide only local guarantees). Earlier work on the theoretical foundations of SBPs has focused on deriving probabilistic completeness (PC) guarantees for methods such as PRM~\cite{kavraki1996probabilistic} or RRT~\cite{LaVKuf01,KunzS14,Kleinbort.Solovey.ea.19}. PC implies that the probability of a given planner finding a solution (if one exists) converges to one as the number of samples tends to infinity. The work of~\citet{karaman2011sampling} initiated studying the quality of the solution returned by SBPs. Specifically, they introduced the planners PRM* and RRT*, and proved that the solution length of those planners converges to the optimum as the number of samples tends to infinity---a property called asymptotic optimality (AO). Subsequent work has introduced even more powerful AO planners for geometric~\cite{JSCP15,GammellBS20} and dynamical~\cite{HauserZ16,LiETAL16} systems.

Unfortunately, the practical relevance of the aforementioned theoretical findings remains limited due to the lack of meaningful finite-time implications. Specifically, when a solution is obtained using a finite number of samples, it is unclear to what extent its quality can be improved with additional computation time. Moreover, in cases where no solution is returned, it is uncertain whether a solution does not exist or if the algorithm simply failed to find one. Developing finite-time bounds through randomized sampling continues to be a significant challenge~\cite{DobsonMB15,shaw2024towards}.

Deterministic sampling methods such as grid sampling or Halton sequences~\cite{lavalle2006planning}, where samples are generated according to a geometric principle, can improve the performance of SBPs in practice and simplify the algorithm analysis. Specifically, some deterministic sampling procedures have a significantly lower dispersion than uniform random sampling, which implies that the former requires fewer samples to cover the search space to a desired resolution~\cite{janson2018deterministic}. 
Recently, Tsao et al.~\cite{tsao2020sample} have leveraged deterministic sampling to disrupt the asymptotic analysis paradigm by introducing a significantly stronger notion than AO, called \decomps, that yields finite-time guarantees for PRM-based algorithms such as PRM*~\cite{karaman2011sampling}, FMT*~\cite{JSCP15}, BIT*~\cite{GammellBS20}, and GLS~\cite{MandalikaCSS19}. Informally, a \emph{finite} sample set is \decomp for a given approximation factor $\eps>0$ and clearance parameter $\delta>0$, if the corresponding planner returns a solution whose length is at most $(1+\eps)$ times the length of the shortest $\delta$-clear solution. If no solution is found using a \decomp sample set then no solution of clearance $\delta$ exists. 

The work of~\citet{tsao2020sample} derived a relation between \decomps and geometric space coverage to obtain lower bounds on the number of samples necessary to achieve \decomps, as well as upper bounds accompanied with explicit (deterministic) sampling distributions. A follow-up work by~\citet{dayan2023near} has introduced an even more compact \decomp sample distribution that is more efficient than the one proposed in~\cite{tsao2020sample} or rectangular grid sampling. In particular, the staggered grid~\cite{dayan2023near} consists of two shifted and rescaled copies of the rectangular grid (see Figure~\ref{fig:2d_lattices} and Figure~\ref{fig:3d_lattices}). 

However, the work~\cite{dayan2023near} still leaves a significant gap between the lower bound in~\cite{tsao2020sample} and the upper bound obtained with the staggered grid. In practice, this gap limits the applicability of the \decomps theory to relatively low dimensions (up to dimension 6) due to the large number of samples currently needed to satisfy this property, which can lead to excessive running times. 

\vspace{5pt}
\noindent \textbf{Contribution.} In this work, we develop a theoretical framework for obtaining highly-efficient \decomp sample sets by leveraging the foundational theory of lattices\footnote{Lattices are point sets exhibiting a regular geometric structure, which are obtained by transforming the integer lattice $\dZ^d$. For instance, the aforementioned rectangular grid and the staggered grid can be viewed as lattices.}~\cite{conway2013sphere}, which has been instrumental in diverse areas from number theory~\cite{siegel_geometry_numbers}, coding theory~\cite{ebeling2013lattices}, and crystallography~\cite{sands1994introduction}. Specifically, we show that lattices can be transformed to obtain \decomp sample sets (Theorem~\ref{thm:decomp_lattices}) and develop tight theoretical bounds on their size (Theorem~\ref{thm:general_sample_complexity}), which allows to compare between different sample sets qualitatively. 
Using this machinery, we not only refine and generalize previous results on the staggered grid~\cite{dayan2023near} but also introduce a new highly efficient \decomp sample set that is based on the $\AN$ lattice, which is famous for its minimalist coverage properties~\cite{conway2013sphere}. We also initiate the study of a new property, which estimates the computational cost resulting from using a given sample set in a more informative manner than sample complexity. In particular, the property called collision-check complexity captures the amount of collision checks, which is typically a computational bottleneck.

From a practical perspective, when solving motion-planning problems using lattice-based sample sets, we show that our $\AN$-based sample sets can result in at least order-of-magnitude improvement in terms of running time over staggered-grid samples and two orders of magnitude improvements over rectangular grids. Moreover, $\AN$-based sample sets are vastly superior in practice to the widely-used uniform random sampling, which is evident in improved running times, success rates, and solution quality.

\vspace{5pt}
\noindent \textbf{Organization.} In Section~\ref{sec:preliminaries} we review basic definitions on motion planning and \decomps, and formally define our objectives. In Section~\ref{sec:lattices}, we develop a general tool for transforming lattices into \decomp sample sets. We obtain sample-complexity bounds for lattice-based sample sets in Section~\ref{sec:sample_complexity}, and generalize those bounds to collision-check complexity in Section~\ref{sec:collision_complexity}. We evaluate the practical implications of our theory in Section~\ref{sec:experiments}, and conclude with a discussion of limitations and future directions in Section~\ref{sec:future}.

\section{Preliminaries}\label{sec:preliminaries}
The motion-planning problem concerns computing a collision-free path for a robot in an environment cluttered with obstacles. We concider a holonomic robot with a configuration space $\C = \dR^d$. 
The dimension $d\geq 2$ represents the DoF and is finite. A motion planning problem is a tuple $\M:= (\C_f, q_{s},  q_{g})$, where $\C_f\subseteq \C$ is the free space (the set of collision-free configurations), and $q_s,q_g\in \C_f$ are the start and goal configurations, respectively. A solution for $\M$ is a continuous collision-free path $\pi:[0,1]\to\C_f$ that begins at $\pi(0) = q_{s}$ and ends at $\pi(1) = q_g$.

Two critical properties of a given path $\pi$ for a problem $\M= (\C_f, q_{s},  q_{g})$, are its length $\ell(\pi)\geq 0$, and its clearance. For a given value $\delta\geq 0$, we say that the path $\pi$ is $\delta$-clear if $\bigcup_{0 \leq t \leq 1}\B_{\delta}(\pi(t)) \subseteq \C_f$, where $\B_\rho(p)$ is the $d$-dimensional closed Euclidean ball with radius $\rho>0$ centered at $p\in\mathbb{R}^d$. We denote $\B_\rho:=\B_\rho(o)$, where $o$ is the origin of $\dR^d$.

\subsection{Probabilistic roadmaps and completeness}
We present a formal definition of the Probabilistic Roadmap (PRM) method \cite{kavraki1996probabilistic}, which constructs a discrete graph capturing the connectivity of $\C_f$ through sampling. Albeit sampling usually refers to a randomized process, here we consider deterministic sampling, as was recently done in~\cite{tsao2020sample,dayan2023near}.\footnote{With deterministic sampling, the term ``probabilistic'' in PRM can seem misleading. Nevertheless, we choose to stick to PRM considering its popularity and the underlying graph structure it represents, which is a key to our analysis.} We emphasize that our analysis below is not confined to PRMs, and applies to various PRM-based planners, as mentioned above. 

For a given motion planning problem $\M = (\C_f,q_s,q_g)$, a sample (point) set $\X\subset \C$, and a connection radius $r>0$, PRM generates a graph denoted by $G_{\mathcal{M}(\X, r)} = (V,E)$. The vertex set $V$ consists of all collision-free configurations in $\X \cup \{ q_s, q_g\}$. The set of undirected edges, $E$, consists of all the vertex pairs $u, v\in V$ such that the Euclidean distance between them is at most $r$, and the straight-line segment $\overline{uv}$ between them is collision-free. That is, 
\begin{align}
V := & (\X\cup \{ q_s, q_g\}) \cap \C_f, \nonumber \\
E := & \left\{ \{v, u\} \in V\times V : \lVert v-u\rVert\leq r, \overline{uv} \subset \C_f \right\}. \label{eq:graph}
\end{align}

In this work, we are interested in obtaining sample sets and connection radii for PRM that achieve a desired solution quality in terms of path length. Unlike most theoretical results for SBP, which consider asymptotic guarantees, here we rely on a stronger deterministic notion.

\begin{definition}[($\delta,\epsilon$)-completeness~\cite{tsao2020sample}] Given a sample set $\X\subset \C$ and connection radius $r>0$, the pair $(\X, r)$ is ($\delta,\epsilon$)-complete for a clearance parameter $\delta>0$ and stretch factor $\epsilon>0$, if for \emph{every} $\delta$-clear problem $\mathcal{M}=(\C_f, q_s, q_g)$, the  graph $G_{\mathcal{M}(\X, r)}$ contains a path from $q_s$ to $q_g$ with length at most $(1+\eps)$ times the optimal $\delta$-clear length, denoted by $\text{OPT}_\delta$. That is, it holds that
\[
    \ell(G_{\mathcal{M}(\X, r)},q_s, q_g)\leq (1+\epsilon)\text{OPT}_\delta,
\]
where $\ell(G_{\mathcal{M}(\X, r)},q_s, q_g)$ denotes the length of the shortest path from $q_s$ to $q_g$ in the graph $G_{\mathcal{M}(\X, r)}$.
\end{definition}

The property of ($\delta,\epsilon$)-completeness has several key advantages over asymptotic notions, such as PC and AO. First, there exists (with probability $1$) a \emph{finite} sample set $\X$ and radius $r\in (0,\infty)$  that jointly guarantee ($\delta,\epsilon$)-completeness. Second, if a solution is not found using a \decomp pair $(\X,r)$, then no $\delta$-clear solution exists. That is, ($\delta,\epsilon$)-completeness can be used for deterministic infeasibility proofs~\cite{li2023sampling}. Third, the computational complexity of constructing a PRM graph can be tuned according to the desired values of $\delta$ and $\epsilon$. 



\subsection{Problem definition}
A $(\delta,\epsilon)$-complete sample set $\X$ and radius $r>0$ can be obtained by constructing a so-called $\beta$-cover~\cite{dayan2023near}. 

\begin{definition}
For a given $\beta>0$, a sample set $\X\subset \dR^d$ is a $\beta$-cover if for every point $p\in \C=\dR^d$ there exists a sample $x \in \X$ such that $\|p-x\|\leq \beta$.
\end{definition}

Note that the coverage property above is defined with respect to the whole configuration space, rather than a specific free space. The connection between $(\delta,\epsilon)$-completeness and $\beta$-cover is established in the following lemma.
\begin{lemma}[Completeness-cover relation~\cite{tsao2020sample}]\label{lem:cover}
  Fix $\delta >0$ and $\epsilon>0$. Suppose that a sample set $\X$ is a ${\beta^*}$-cover, where
  \begin{equation}
    {\beta^*}={\beta^*}(\delta,\epsilon):=\frac{\delta\epsilon}{\sqrt{1+\epsilon^2}}.
  \end{equation}
  Then $(\X,{r^*})$ is $(\delta,\epsilon)$-complete, where
  \begin{equation}
      {r^*}={r^*}(\delta,\epsilon):=\frac{2\delta(1+\epsilon)}{\sqrt{1+\epsilon^2}}.
  \end{equation}
\end{lemma}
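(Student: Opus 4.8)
The plan is to prove the lemma constructively. Fix an arbitrary $\delta$-clear problem $\M=(\C_f,q_s,q_g)$, let $\pi^*$ be an optimal $\delta$-clear path, and write $L:=\ell(\pi^*)=\text{OPT}_\delta$. I will exhibit inside $G_{\M(\X,r^*)}$ an explicit polyline from $q_s$ to $q_g$ of length at most $(1+\epsilon)L$; the polyline is obtained by placing waypoints along $\pi^*$ and ``snapping'' each to a nearby point of the cover $\X$. Throughout I use the identities, immediate from the formulas for $\beta^*$ and $r^*$, that $\beta^*\le\delta$, that $c:=2\beta^*/\epsilon=2\delta/\sqrt{1+\epsilon^2}=2\sqrt{\delta^2-(\beta^*)^2}$, and that $c+2\beta^*=r^*$.

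First I would fix the waypoint spacing $c$ above and choose points $q_s=p_0,p_1,\dots,p_n=q_g$ in order along $\pi^*$ with $\|p_i-p_{i+1}\|\le c$ and with every intermediate sub-arc of $\pi^*$ of length at least $c$: walk along $\pi^*$ from $q_s$, emitting a new waypoint the first time the Euclidean distance from the previous one reaches $c$, and terminating at $q_g$ (the degenerate case $n=1$, i.e.\ $q_g\in\B_c(q_s)$, is treated separately below). This choice forces both $\sum_i\|p_i-p_{i+1}\|\le\ell(\pi^*)=L$ and $(n-1)c\le L$. Since $\X$ is a $\beta^*$-cover, for each $1\le i\le n-1$ pick $x_i\in\X$ with $\|x_i-p_i\|\le\beta^*$, and set $x_0:=q_s$, $x_n:=q_g$. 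As $\beta^*\le\delta$ and each $p_i$ lies on the $\delta$-clear path $\pi^*$, we get $x_i\in\B_{\beta^*}(p_i)\subseteq\B_\delta(p_i)\subseteq\C_f$, so every $x_i$ is a vertex of $G_{\M(\X,r^*)}$.

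Next I would check that each $\{x_i,x_{i+1}\}$ is an edge. The radius bound is immediate from the triangle inequality: $\|x_i-x_{i+1}\|\le\|p_i-p_{i+1}\|+2\beta^*\le c+2\beta^*=r^*$. For collision-freeness the key geometric claim is $\overline{x_ix_{i+1}}\subseteq\B_\delta(p_i)\cup\B_\delta(p_{i+1})\subseteq\C_f$, the last inclusion holding because $\pi^*$ passes through both $p_i$ and $p_{i+1}$. To prove the claim, note that $\overline{x_ix_{i+1}}$ lies in the convex hull of $\B_{\beta^*}(p_i)\cup\B_{\beta^*}(p_{i+1})$, which is exactly the set of points within distance $\beta^*$ of the segment $\overline{p_ip_{i+1}}$. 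If $\overline{x_ix_{i+1}}$ meets the perpendicular bisector of $p_ip_{i+1}$, the meeting point $y$ projects onto the midpoint of $p_ip_{i+1}$, so Pythagoras gives $\|y-p_i\|\le\sqrt{(\|p_i-p_{i+1}\|/2)^2+(\beta^*)^2}\le\sqrt{(c/2)^2+(\beta^*)^2}=\delta$; since $\min(\|y-p_i\|,\|y-p_{i+1}\|)$ is maximized over the segment either at this crossing point or at an endpoint (where it is at most $\beta^*\le\delta$), every point of $\overline{x_ix_{i+1}}$ lies within $\delta$ of $p_i$ or of $p_{i+1}$, which is the claim. Hence $\{x_i,x_{i+1}\}\in E$.

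It then follows that $x_0x_1\cdots x_n$ is a $q_s$--$q_g$ path in $G_{\M(\X,r^*)}$, and summing over it gives length at most $\sum_i\|p_i-p_{i+1}\|+2(n-1)\beta^*\le L+2\beta^*\cdot(L/c)=(1+\epsilon)L$, using $2\beta^*/c=\epsilon$; the case $n=1$ is handled by the same two-ball argument applied to the single segment $\overline{q_sq_g}$, whose length is $\|q_s-q_g\|\le c\le L$. This yields $\ell(G_{\M(\X,r^*)},q_s,q_g)\le(1+\epsilon)\text{OPT}_\delta$, as required. I expect the only nontrivial step to be the geometric claim in the edge argument: everything else is bookkeeping with the triangle inequality, but it is precisely there that the constants are pinned down --- the ``Pythagorean'' identity $\delta^2=(c/2)^2+(\beta^*)^2$ together with the length constraint $c=2\beta^*/\epsilon$ is what forces $\beta^*=\delta\epsilon/\sqrt{1+\epsilon^2}$ and then $r^*=c+2\beta^*=2\delta(1+\epsilon)/\sqrt{1+\epsilon^2}$. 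I would take care to phrase the convex-hull/capsule bound so that it is valid for all $\epsilon>0$, not only for $\epsilon<1$ where $\beta^*<c/2$.
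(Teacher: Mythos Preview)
The paper does not prove this lemma; it is quoted verbatim from \cite{tsao2020sample} and used as a black box. There is therefore nothing in the present paper to compare your argument against.

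That said, your proof is correct and is essentially the standard constructive argument one expects for this type of statement: discretize the optimal $\delta$-clear path at Euclidean spacing $c=2\delta/\sqrt{1+\epsilon^2}$, snap the interior waypoints to cover points, verify edge membership via the ``two-ball'' containment $\overline{x_ix_{i+1}}\subseteq\B_\delta(p_i)\cup\B_\delta(p_{i+1})$, and control the length via $(n-1)c\le L$. The Pythagorean identity $(c/2)^2+(\beta^*)^2=\delta^2$ that you isolate is exactly what fixes the constants. Your treatment of the bisector-crossing case is right; the only place to tighten the exposition is to state explicitly that when the segment $\overline{x_ix_{i+1}}$ does \emph{not} meet the bisector, the entire segment lies on one side, so for the far endpoint (say $x_{i+1}$) one has $\|x_{i+1}-p_i\|\le\|x_{i+1}-p_{i+1}\|\le\beta^*$, which closes that branch. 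The $n=1$ case and the endpoint bookkeeping ($x_0=q_s$, $x_n=q_g$ contribute only $\beta^*$ each, giving the $2(n-1)\beta^*$ term) are handled correctly.
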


The lemma prescribes an approach for constructing sample sets and connection radii that satisfy the $(\delta,\epsilon)$-completeness requirement. 
Considering that we will fix the radius $r^*:=r^*(\delta,\epsilon)$ throughout this work, we will say that a sample set $\X$ is $(\delta,\epsilon)$-complete if the pair $(\X,r^*)$ is $(\delta,\epsilon)$-complete.

\Cref{lem:cover} still leaves a critical unresolved question: How do we find a \decomp sample $\X$ minimizing the computation time of the PRM graph and subsequent methods? In this work, we are interested in the following two related problems. The first deals with finding a sample set of minimal size, which can be used as a proxy for computation time. 

\begin{problem}[Sample complexity]\label{problem:sample}
  For a given $\delta>0,\epsilon>0$, find a $(\delta,\epsilon)$-complete sample set $\X$ of  minimal \emph{sample complexity}, i.e., $|\X\cap \B_{r^*}|$. 
\end{problem}

Previous work~\cite{tsao2020sample,dayan2023near} has considered a slightly different notion for sample complexity aiming to minimize the global expression $|\X\cap \C|$ with a bounded $\C$. We believe that our local notion, within an ${r^*}$-ball, is more informative and optimistic, as typical problems have obstacles, and the solution lies in a small subset of the search space. Furthermore, it would allow us to obtain tighter analyses by exploiting methods from discrete geometry that reason about ball structures.

Previous work has introduced several sample sets and analyzed their sample complexity~\cite{tsao2020sample,dayan2023near}. In this work, we consider more compact sets. Moreover, we introduce a new notion that better captures the computational complexity of constructing a PRM graph. In particular, we leverage the observation that the computational complexity of sampling-based planning is typically dominated by the amount of collision checks performed~\cite{KleinbortSH16}. Furthermore, nearest-neighbor search, which is another key contributor to the algorithm's computational complexity, 
can be eliminated for deterministic samples, assuming that they have a regular structure (as for lattice-based samples, which we describe below).

Collision checks are run both on the PRM vertices and edges, where edge checks are usually performed via dense sampling of configurations along the edges and individually validating each configuration. Thus, the total number of collision checks is proportional to the total edge length of the graph. We use this observation to develop a more accurate proxy for computational complexity. In particular, we will estimate the length of edges adjacent to the origin point $o \in \dR^d$, which is a vertex in all the sample sets introduced below. Moreover, due to the regularity of the sets, the attribute below is equal across all vertices (in the absence of obstacles).

\begin{problem}[Collision-check complexity]\label{problem:collision}
  For given \mbox{$\delta>0,\epsilon>0$}, find a $(\delta,\epsilon)$-complete sample set $\X$ of  minimal \emph{collision-check complexity}, i.e., minimizing the expression
  \[
      CC_\X:=\sum_{x\in \X\cap \B_{r^*}} \|x\|.
  \]
\end{problem}

\section{Lattice-based sample sets}\label{sec:lattices}
We derive sample sets optimizing sample complexity (Problem~\ref{problem:sample}) and collision-check complexity (Problem~\ref{problem:collision}) both in theory and experiments. We focus on sample sets induced by lattices. 

A lattice is a point set in Euclidean space with a regular structure~\cite{conway2013sphere}.

\begin{definition}[Lattice]\label{definition:lattice}
  A lattice $\Lambda$ is defined as all the linear combinations (with integer coefficients) of a basis\footnote{A basis can be of full rank ($m=N$) or subdimensional ($m<N$). A basis can be non-unique.} $E_\Lambda=\{e_i\in \dR^N\}_{i=1}^m$ of rank $1\leq m\leq N$, i.e.,
  \[\Lambda:=\left\{\sum_{i=1}^m a_i e_i\middle| a_i\in\mathbb{Z},e_i\in E_\Lambda\right\}.\]
\end{definition}

It would be convenient to view lattices through their generator matrices. 
\begin{definition}[Lattice generator]\label{definition:generator}
    The generator matrix $G_\Lambda$ of a lattice $\Lambda$ with basis $E_\Lambda=\{e_i\in \dR^N\}_{i=1}^m$ is an $m\times N$ matrix such that for every  $1\leq i\leq m$, the row $i$ is equal to $e_i$. Note that 
\(\Lambda=\left\{a\cdot G_\Lambda\middle| a\in\dZ^{1\times m}\right\}.\)
Additionally, define $\det(\Lambda):=\det(G_{\Lambda}G_{\Lambda}^t)$.
\end{definition}

\subsection{Useful lattices}
We describe three lattices, visualized in~\Cref{fig:2d_lattices,fig:3d_lattices}. The first lattice is a simple rectangular grid, which is provided to benchmark more complicated and efficient lattices. Below, we fix the dimension $d\geq 2$.

\begin{definition}[$\dZ^d$ lattice]
   The $\dZ^d$ lattice is defined by the identity generator matrix $I\in \dR^{d\times d}$, with $\det(\ZN)=1$~\cite[p.~106]{conway2013sphere}.
\end{definition}


More efficient sample sets can be generated via the $D_d^*$ lattice~\cite[p120]{conway2013sphere}. This lattice was also presented in~\cite{dayan2023near}, where it was called a ``staggered grid''. In this work, we provide improved sample complexity bounds for lattice-based sample sets (including for the $D_d^*$ lattice 
and the $A_d^*$ lattice defined later on), following~\cite{conway2013sphere}, and develop theoretical bounds for collision-check complexity.

\begin{definition}[$D_d^*$ lattice]
   The $D_d^*$ lattice is defined by the  generator matrix
   \begin{align*}
     G_{\DN}=
     \begin{pmatrix}
         1 & 0 & 0 & \dots & 0 & 0 \\
         0 & 1 & 0 & \dots & 0 & 0 \\
         \vdots & \vdots & \vdots & \ddots & \vdots & 0 \\
         0 & 0 & 0 & \dots & 1 & 0 \\
         \frac{1}{2} & \frac{1}{2} & \frac{1}{2} & \dots & \frac{1}{2} & \frac{1}{2}
     \end{pmatrix}\in \dR^{d\times d}, 
 \end{align*}
 with $\det(\DN)=\frac{1}{4}$~\cite[p.~120]{conway2013sphere}.
 \end{definition}
 

\begin{figure}[t]
  \centering
  \subfloat[$\X_{\dZ_3}^{\delta,\epsilon}$ sample set.]{
    \includegraphics[width=0.46\columnwidth,clip]{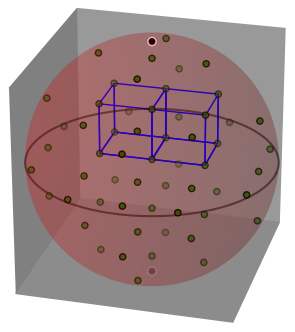}
    }
  \hfill
  \subfloat[$\X_{D_3^*}^{\delta,\epsilon}=\X_{A_3^*}^{\delta,\epsilon}$ sample sets.]{
    \includegraphics[width=0.46\columnwidth,clip]{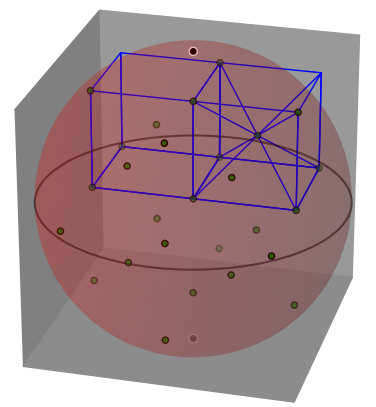}
    }
  \caption{\decomp sample sets in $\dR^3$ derived from the lattices $\dZ^3, D_3^*$ and $A^*_3$. Note the sets $\X_{D_d^*}^{\delta,\epsilon},\X_{A_d^*}^{\delta,\epsilon}$ coincide for $d=3$, and diverge for $d\geq 4$.
Note that the density of  $\X_{D^*_3}^{\delta,\eps}$ and $\X_{A^*_3}^{\delta,\eps}$ (also known as the Body-Centered Cubic structure in crystallography), and is lower than the density of $\X_{\dZ^3}^{\delta,\eps}$.}
  \label{fig:3d_lattices}
\end{figure}

The following $A^*_d$ lattice~\cite[p115]{conway2013sphere} leads to even more efficient sample sets. This lattice is also called a "hexagonal grid", and was previously used for 2D path planning~\cite{BAILEY2021103560,TengEA17}. This work is the first to consider its application in dimensions $d\geq 3$, and moreover, in the context of \decomps guarantees.


\begin{definition}[$A^*_d$ lattice]
  The $A^*_d$ lattice is defined through the generator matrix \begin{align*}
    G_{\AN}=
    \begin{pmatrix}
        1 & -1 & 0 & 0 & \dots & 0 & 0 \\
        1 & 0 & -1 & 0 & \dots & 0 & 0 \\
        \vdots & \vdots & \vdots & \vdots & \ddots & \vdots & 0 \\
        1 & 0 & 0 & 0 & \dots & -1 & 0 \\
        \frac{-d}{d+1} & \frac{1}{d+1} & \frac{1}{d+1} & \frac{1}{d+1} & \dots & \frac{1}{d+1} & \frac{1}{d+1}
    \end{pmatrix}_{d\times (d+1)}\!\!\!,
\end{align*}
with $\det(\AN)=\frac{1}{\sqrt{d+1}}$~\cite[p.~115]{conway2013sphere}.
\end{definition}

Note that $A^*_d$ is contained in $\mathbb{R}^{d+1}$ (due to the number of rows of the generator matrix), but the lattice itself is $d$-dimensional as it lies in a $d$-dimensional hyperplane (for any lattice point $(x_1,\ldots x_{d+1})\in A^*_d$ it holds that  $\sum_{i=1}^{d+1} x_i=0$). 
Our motivation for considering $\AN$ is its low \emph{density}, defined as the average number of spheres (centered on lattice points)  containing a point of the space~\cite{conway2013sphere}. In particular, $\AN$ is the best lattice covering (and best covering in general) in terms of density for dimension $d\leq 5$ (see~\cite{ryshkov1975solution}) and overall the best \emph{known} covering for $d\leq 21$. 

\subsection{From lattices to \decomp sample sets}
We derive sample sets from the lattices above by transforming the lattices such that the resulting point sets lie in $\dR^d$ and form $\beta^*$-covers (Lemma~\ref{lem:cover}). To achieve that, we will leverage the geometry of the lattices and their covering radius, which is defined below.

\begin{definition} (Covering radius~\cite{conway2013sphere})
   For a point set $\X\subset \dR^d$, a covering radius is defined to be
    \[
        f_{\X} = \sup_{y\in\dR^d}\inf_{x\in\X} \|x-y\|.
    \]
\end{definition}
When considering the covering radius of $\AN$, which lies in $\dR^{d+1}$, we will abuse the above definition to refer to its covering radius in the $d$-dimensional plane $\sum_{i=1}^{d+1}x_{d+1}=0$. Note that in order to cover $\dR^d$ with balls of radius $\rho>0$ centered at the points of a set $\X$, it must hold that $\rho\geq f_{\X}$.


\begin{thm} \label{thm:decomp_lattices}
  Fix $\delta>0,\epsilon>0$, and take $\beta^*$ as defined in \Cref{lem:cover}. Then the following sample sets are \decomp:
  \begin{enumerate}
      \item $\XZ:=\left\{\frac{2\beta^*}{\sqrt{d}}\cdot v,v\in\mathbb{Z}^d\right \}=\frac{2\beta^*}{\sqrt{d}}\cdot \ZN$.
      \item $\XD:=\\
      \begin{cases}
        d\text{ is odd: } \left\{\frac{4\beta^*}{\sqrt{2d-1}}G_{\DN}^t\cdot v,v\in\mathbb{Z}^d\right \}=\frac{4\beta^*}{\sqrt{2d-1}}\cdot \DN,\\
        d\text{ is even: }
        \left\{\sqrt{\frac{8}{d}}\beta^* G_{\DN}^t\cdot v,v\in\mathbb{Z}^d\right\}=\sqrt{\frac{8}{d}}\beta^*\cdot \DN.
      \end{cases}$
      \item $\XA:=\left\{\sqrt{\frac{12\left(d+1\right)}{d\left(d+2\right)}}\beta^* T\cdot v,v\in\mathbb{Z}^d\right\}$, 
       where   \\$T:=\begin{pmatrix}
                    1 &  1  & \dots & 1 & a - 1\\
                    -1 & 0  & \dots & 0 & a \\
                    0 & -1  & \dots & 0 & a \\
                    \vdots & \vdots  &  \ddots & \vdots & \vdots \\
                    0 & 0  &  \dots & -1 & a \\
                \end{pmatrix}\in \dR^{d \times d}$ and $a=\frac{1}{d+1 - \sqrt{d+1}}$. 
  \end{enumerate}
\end{thm}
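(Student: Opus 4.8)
The plan is to invoke \Cref{lem:cover}: it suffices to show that each of the three sample sets is a $\beta^*$-cover of $\dR^d$. Since all three sets are obtained from a lattice $\Lambda$ by a linear transformation $M$ (namely $M = \frac{2\beta^*}{\sqrt{d}} I$ for $\XZ$, $M = c\, G_\Lambda^t$ for $\XD$ with the stated scalars $c$, and $M = c\, T$ for $\XA$), the covering radius scales multiplicatively under $M$ \emph{provided $M$ acts as a similarity} (a uniform scaling composed with an isometry) on the affine hull of $\Lambda$. Concretely, if $\Lambda' = s\cdot R\cdot \Lambda$ for a scalar $s>0$ and an orthogonal map $R$, then $f_{\Lambda'} = s\, f_\Lambda$, so it is enough to (i) identify the covering radius $f_\Lambda$ of each base lattice in its standard normalization, and (ii) verify that the prescribed transformation is a similarity with the scaling factor chosen so that $s\, f_\Lambda = \beta^*$.

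For $\XZ$ this is immediate: the covering radius of $\ZN$ is $\tfrac{\sqrt d}{2}$ (the farthest point in a unit cube is its center), scaling by $\tfrac{2\beta^*}{\sqrt d}$ gives covering radius exactly $\beta^*$. For $\XD$, I would quote the covering radius of $\DN$ from \cite{conway2013sphere}, which is $\tfrac{\sqrt d}{2}$ for $d$ even and $\tfrac{\sqrt{2d-1}}{4}$ for $d$ odd (wait --- I should double-check the parity split against the cited page, but the scalars $\sqrt{8/d}$ and $\tfrac{4}{\sqrt{2d-1}}$ in the theorem are precisely $\beta^*/f_{\DN}$ for those two cases, so the bookkeeping is forced); here the transformation is $c\,G_{\DN}^t$ acting on $\dZ^d$, which reparametrizes the same lattice $\DN$ scaled by $c$, and since $G_{\DN}$ is square this is a genuine scaling of $\dR^d$ up to the linear reindexing of lattice coordinates --- I need to note that multiplying the generator by $G_{\DN}^t$ on the left versus generating $\{a G_{\DN}\}$ gives the same point set up to the orthogonal change of coordinates implicit in $G_{\DN} G_{\DN}^t$ being the Gram matrix, hence covering radius is preserved and then scaled by $c$. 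For $\XA$, the covering radius of $\AN$ (within its $d$-dimensional hyperplane) is $\sqrt{\tfrac{d(d+2)}{12(d+1)}}$ by \cite[p.~115]{conway2013sphere}, and the scalar $\sqrt{\tfrac{12(d+1)}{d(d+2)}}$ is exactly its reciprocal times $\beta^*$; the remaining content is to check that $T\cdot \dZ^d$, sitting inside $\dR^d$, is an \emph{isometric} copy of $\AN\subset\dR^{d+1}$ (equivalently, that $T^t T$ equals the Gram matrix of some basis of $\AN$), which is where the peculiar constant $a = \tfrac{1}{d+1-\sqrt{d+1}}$ enters --- it is tuned to make the last column of $T$ have the right norm and the right inner products with the other columns.

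The main obstacle is the $\XA$ case: one must exhibit an explicit $d\times d$ basis of $\AN$ (e.g. the rows of $G_{\AN}$, which span a rank-$d$ lattice in the hyperplane $\sum x_i = 0$), compute its Gram matrix $G_{\AN} G_{\AN}^t$, and verify $T^t T = G_{\AN} G_{\AN}^t$ (up to a benign permutation/sign reindexing), thereby certifying that $T\dZ^d$ is an isometric embedding of $\AN$ into $\dR^d$ and that its covering radius therefore equals $f_{\AN}$. The quantity $a$ is precisely the value for which the Gram matrix of the columns of $T$ matches; I expect this to reduce to two scalar identities (diagonal entries equal, off-diagonal entries equal), each a short computation in $d$ and $\sqrt{d+1}$. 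Once the isometry is established, applying the scaling factor and then \Cref{lem:cover} finishes all three items uniformly. A minor secondary point to handle carefully is the abuse of the covering-radius definition for $\AN$ noted before the theorem: since $T$ maps onto all of $\dR^d$, for $\XA$ the covering is genuinely of $\dR^d$ (not a hyperplane), so I should state the isometry as mapping the hyperplane $\{\sum x_i = 0\}\subset \dR^{d+1}$ bijectively onto $\dR^d$ and transporting the covering radius accordingly.
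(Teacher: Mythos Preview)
Your overall strategy---reduce to Lemma~\ref{lem:cover}, quote the covering radii of $\ZN,\DN,\AN$ from \cite{conway2013sphere}, and choose the scaling so that the rescaled lattice has covering radius exactly $\beta^*$---is precisely the paper's. Where you diverge is in the $\AN$ step. The paper \emph{constructs} $T$ as the composite $T=EPG_{\AN}^t$, where $G_{\AN}^t$ sends $\dZ^d$ into the hyperplane $H=\{\sum x_i=0\}\subset\dR^{d+1}$, $P$ is the Householder reflection taking $H$ onto $\{x_{d+1}=0\}$, and $E$ drops the last coordinate; isometry then follows because reflections and coordinate deletions restricted to $\{x_{d+1}=0\}$ are distance-preserving. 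You instead propose to \emph{verify} the isometry by checking $T^tT=G_{\AN}G_{\AN}^t$. This is a legitimate and in some ways cleaner alternative: the Gram matrix of the rows of $G_{\AN}$ has entries $2$ on the first $d-1$ diagonal positions, $d/(d+1)$ in the last, $1$ among the first $d-1$ off-diagonals, and $-1$ in the last row/column, and a short calculation (using $a=1/(\sqrt{D}(\sqrt{D}-1))$ with $D=d+1$) gives $(a-1)^2+(d-1)a^2=d/(d+1)$ and $1\cdot(a-1)+(-1)\cdot a=-1$, so the identity holds. What the paper's route buys is an explanation of where the mysterious constant $a$ comes from; what yours buys is avoiding the Householder machinery entirely.

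Two minor cleanups. First, the even-$d$ covering radius of $\DN$ is $\sqrt{2d}/4$, not $\sqrt{d}/2$; your hedge (``the scalars are forced'') is correct, since $\sqrt{8/d}\,\beta^*=\beta^*/(\sqrt{2d}/4)$. Second, your digression about ``orthogonal change of coordinates implicit in $G_{\DN}G_{\DN}^t$'' is unnecessary: $\{G_{\DN}^t v:v\in\dZ^d\}$ as column vectors is literally the same point set as $\{aG_{\DN}:a\in\dZ^{1\times d}\}$ transposed, so no reindexing argument is needed---the scaled set is just $c\cdot\DN$.
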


Each of the new sample sets can be viewed as a lattice in $\dR^d$, according to Definition~\ref{definition:lattice}. For instance, $\XA$ is a lattice with the generator matrix $\sqrt{\frac{12\left(d+1\right)}{d\left(d+2\right)}}\beta^* T^t$. Also, note that the result above for $\XD$ is a tightening 
of the result in~\cite{dayan2023near} for odd dimensions.

\begin{proof}
    The lattices \Lattices require a transformation to achieve $(\de)$-completeness for a given value of $\delta$ and $\epsilon$. For a given lattice $\Lambda$, we compute a rescaling factor $w_\Lambda>0$ such that the covering radius of the lattice $w_\Lambda \Lambda$ is not bigger than $\beta^*$. This would imply that $w_\Lambda \Lambda$ is \decomp according to Lemma~\ref{lem:cover}. In particular, $w_\Lambda = \beta^*f_\Lambda^{-1}$ where $f_\Lambda$ is the covering radius of $\Lambda$. Next, we consider each of the three lattices individually.     
    

\begin{figure}[!h]
  \centering
\includegraphics[width=0.7\columnwidth]{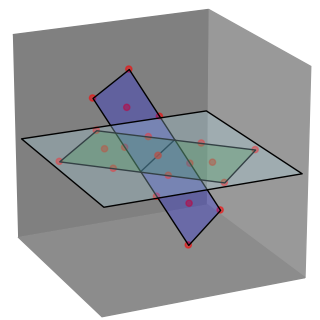}
  \caption{Visualization of embedding the lattice  $A_2^*$ originally defined in $\dR^3$ onto $\dR^2$ via the mapping $T$. The blue rectangle represents the plane $H$, where the corresponding $A_2^*$ lattice points are drawn in red. The points are generated by taking integer vectors in $\dR^d$ and applying the mapping $G^t$.  $H$ and $A_2^*$ is reflected onto the plane $H_0=\{x_3=0\}$ using the mapping $PG^t$ (denoted by the green rectangle). The third dimension is removed via the mapping $E$ to yield the embedding of $A_2^*$ in $\dR^2$.}
  \label{fig:egpt_visual}
\end{figure}

\vspace{5pt}
\noindent\emph{The lattice $\dZ^d$.} This lattice can be viewed as a set of axis-aligned unit hypercubes whose vertices are the lattice points. The center point of a cube is located at a distance of $\sqrt{d}/2$ from the cube's vertices, which yields the covering radius $f_{\ZN} =\sqrt{d}/2$, and the rescaling factor $w_{\ZN} = {\beta^*}/f_{\ZN}=2{\beta^*}/\sqrt{d}$. As a result, the sample set $\XZ:=\frac{2{\beta^*}}{\sqrt{d}}\ZN$, is a ${\beta^*}$-cover, and \decomp due to Lemma~\ref{lem:cover}. 

    
\vspace{5pt}
\noindent\emph{The lattice $\DN$.}
We use the covering radius of $\DN$, which depends on whether the dimension $d$ is odd or even~\cite[page 120]{conway2013sphere}. In particular, 
$f_{D_{d\text{-odd}}^*}=\frac{\sqrt{2d-1}}{4}$, and $f_{D_{d\text{-even}}^*}=\frac{\sqrt{2d}}{4}$. This immediately implies the definition of $\XD$.


\vspace{5pt}
\noindent\emph{The lattice $\AN$.} 
    Recall that $A_d^*$ has the generator matrix $G:=G_{\AN}\in \dR^{d\times (d+1)}$.
    As this is a mapping from $d$ to $d+1$, we start with the process of embedding $A_d^*$ in $\dR^d$ (see Figure~\ref{fig:egpt_visual}). Afterward, we show that the embedding in $\dR^d$ shares the same covering radius as the original set in $\dR^{d+1}$.

    Any row $i$ of $G$, denoted by  $G_i:=(g_{i1},g_{i2},\ldots,g_{i(d+1)})$, lies on the (hyper)plane $H: \{\sum_{j=1}^{d+1} g_{ij} = 0\}$. Thus, $A_d^*$ itself is contained in that $d$-dimensional plane. It remains to find a transformation of $\AN$ such that the dimension is reduced to $d$ while maintaining the structure of the points in $\AN$.

In the first step, we reflect $\AN$ lattice points onto the plane $H_0:=\{x_{d+1}=0\}$ using the Householder matrix 
\begin{align}\label{eq:reflection}
 P:=\begin{pNiceArray}{cw{c}{1cm}c|c}[margin]
            \Block{3-3}<\Large>{I_d - \frac{1}{D-\sqrt{D}}\mathds{1}}
            & & & \tfrac{1}{\sqrt{D}} \\
            & & & \Vdots \\
            & & & \tfrac{1}{\sqrt{D}} \\
            \hline
            \tfrac{1}{\sqrt{D}} & \dots& \tfrac{1}{\sqrt{D}} & \tfrac{1}{\sqrt{D}}
        \end{pNiceArray},
    \end{align}
where $D:=d+1$, $I_d$ is an $d\times d$ identity matrix, and $\mathds{1}$ is the $d\times d$ matrix with $1$s in all its entries.  That is, we reflect a lattice point $p\in \dR^{d+1}$ by computing the value ${v}=P\cdot p$. (See the derivation of $P$ in \conditionaltext{\Cref{app:decomp_lattice_proof}}{ the supplementary material}.)

It remains to eliminate the $(d+1)$th dimension of the points $v:=P\cdot p$. This is accomplished by the mapping
\begin{align*}
        E=
        \begin{pmatrix}
            1 & 0 & \dots & 0 & 0 \\
            0 & 1 & \dots & 0 & 0 \\
            \vdots & \vdots & \ddots & \vdots & 0 \\
            0 & 0 & \dots & 1 & 0
        \end{pmatrix}_{d\times(d+1)}.
    \end{align*}
    
We finish by computing the an explicit mapping 
that yields the embedding of $A_d^*$ to $\dR^d$. In particular, we have the embedding $T(g):=EPG^t(g)$, for $g\in \dZ^d$, where $T$ is as specified in the statement of this theorem. (See the derivation of $T$ in  \conditionaltext{\Cref{app:decomp_lattice_proof}}{ in the supplementary material}.)


For the final part, we wish to derive an \decomp sample set $\XA$. Here, we first recall that the covering radius $f_{\AN}$ is equal to $\sqrt{\frac{d\left(d+2\right)}{12\left(d+1\right)}}$~\cite[page 115]{conway2013sphere}. Considering that reflections and embeddings are isometries, i.e.,  they preserve distances between pairs of points, we can use the same covering radius after mapping the points of $\AN$ using $T$. Thus, we obtain the rescaling coefficient $w_{\AN}:={\beta^*} f_{\AN}^{-1}$, which implies that $\XA:=
         \{w_{\AN}T\cdot v,v\in\mathbb{Z}^d\}$ is \decomp.

\end{proof}

\section{Sample complexity}\label{sec:sample_complexity}
We derive the following lower and upper bounds on the sample complexity of the sets $\XZ,\XD$, and $\XA$. 

\begin{thm}[Sample-complexity bounds]
\label{thm:general_sample_complexity}
    Consider a lattice $\Lambda\in \{\dZ^d,D_d^*,A_d^*\}$ with a covering radius $f_\Lambda$, which yields the \decomp set $\XL$ for some $\delta>0,\epsilon>0$. 
    Then,
\begin{align}\label{eq:sample bounds}
        |\XL\cap \B_{r^*}|= 
        \frac{\partial(\B_1)}{\sqrt{\det(\Lambda)}}\btheta_{r^*}^d + P_d(\btheta_{r^*}),
    \end{align}
where $\btheta_{r^*}:=2f_\Lambda\left(1+\frac{1}{\epsilon}\right)$, and $P_d(\alpha)\in \dR$ is the discrepancy function~\cite{ivic2004lattice}.\footnote{For a value $\alpha>0$, $|P_3(\alpha)|=\Omega_+(\sqrt{\alpha\log(\alpha)})$ and $|P_3(\alpha)|=O\left(\alpha^{\frac{21}{32}+\epsilon}\right)$, $|P_4(\alpha)|=\Omega(\alpha^{2})$, $|P_4(\alpha)|=O\left(\alpha\log^{2/3}\alpha\right)$, and $|P_d(\alpha)|=\Theta(\alpha^{d-2})$ for $d>4$. Notice that $P_d$ can be negative.  For two functions $f,g$, the notation $f(\alpha)=O(g(\alpha))$ (or $f(\alpha)=\Omega(g(\alpha))$)  means that there exists a constant $m_u>0$ (or $m_l>0$) such that for a large enough $\alpha$ it holds that $f(\alpha)\leq m_u(g(\alpha))$ (or $f(\alpha)\geq m_l(g(\alpha))$). 
}
\end{thm}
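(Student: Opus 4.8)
The plan is to reduce the sample-complexity count to a classical lattice-point-counting problem inside a Euclidean ball and then invoke the known asymptotics of the discrepancy function. First I would observe that by Theorem~\ref{thm:decomp_lattices}, each set $\XL$ is of the form $w_\Lambda M_\Lambda \cdot \dZ^d$ for an explicit scaling $w_\Lambda = \beta^* f_\Lambda^{-1}$ and a generator $M_\Lambda$ (namely $I$, $G_{\DN}^t$, or $T$) whose Gram determinant equals $\det(\Lambda)$ up to the purely dimensional constants recorded in the definitions. Hence $\XL \cap \B_{r^*}$ is in bijection with the integer points $v\in\dZ^d$ such that $w_\Lambda M_\Lambda v$ lies in $\B_{r^*}$, i.e.\ with lattice points of $w_\Lambda\Lambda$ inside a ball of radius $r^*$. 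The key quantitative step is to compute the dilation factor: a ball of radius $r^*$ intersected with the lattice $w_\Lambda \Lambda$ is, after dividing by $w_\Lambda$, the same as the ball of radius $r^*/w_\Lambda$ intersected with $\Lambda$. Plugging in $w_\Lambda = \beta^* f_\Lambda^{-1}$ with $\beta^* = \delta\epsilon/\sqrt{1+\epsilon^2}$ and $r^* = 2\delta(1+\epsilon)/\sqrt{1+\epsilon^2}$ gives
\[
\frac{r^*}{w_\Lambda} = \frac{r^* f_\Lambda}{\beta^*} = 2 f_\Lambda \cdot \frac{1+\epsilon}{\epsilon} = 2 f_\Lambda\Bigl(1+\tfrac{1}{\epsilon}\Bigr) = \btheta_{r^*},
\]
which is exactly the radius appearing in the statement. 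So the problem becomes: count $|\Lambda \cap \B_{\btheta_{r^*}}|$ for $\Lambda\in\{\dZ^d,\DN,\AN\}$.

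Next I would invoke the standard Gauss-circle / lattice-point asymptotic: for any full-rank lattice $\Lambda$ in $\dR^d$, the number of lattice points in a ball of radius $\alpha$ equals $\vol(\B_\alpha)/\operatorname{covol}(\Lambda) + (\text{error})$, where the covolume of $\Lambda$ is $\sqrt{\det(\Lambda)}$ in the Gram-determinant convention of Definition~\ref{definition:generator}. Writing $\vol(\B_\alpha) = \partial(\B_1)\,\alpha^d$ (with $\partial(\B_1)$ denoting the volume of the unit ball, consistent with the notation in the statement), the main term becomes $\frac{\partial(\B_1)}{\sqrt{\det(\Lambda)}}\btheta_{r^*}^d$, matching \eqref{eq:sample bounds}. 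The error term is by definition the discrepancy function $P_d(\btheta_{r^*})$; for $\dZ^d$ it is the classical higher-dimensional Gauss-circle error, and for general lattices the same asymptotics hold (the discrepancy bounds cited in the footnote — $O(\alpha^{21/32+\epsilon})$ for $d=3$, $O(\alpha\log^{2/3}\alpha)$ for $d=4$, $\Theta(\alpha^{d-2})$ for $d>4$, with the matching $\Omega$ lower bounds — are lattice-independent up to constants, because an invertible linear change of variables maps $\Lambda$ to $\dZ^d$ and maps the ball to an ellipsoid, and the error-term estimates for ellipsoids coincide with those for balls). I would state this reduction carefully and cite \cite{ivic2004lattice} for the precise error asymptotics, noting that $A_d^*$ is handled by first using its $d$-dimensional embedding via $T$ from Theorem~\ref{thm:decomp_lattices}, which is an isometry and therefore preserves both the point count and the covolume $\sqrt{\det(\AN)}$.

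A subtlety worth flagging is the boundary behavior: whether we use the closed or open ball $\B_{r^*}$ affects only lattice points lying exactly on the sphere of radius $\btheta_{r^*}$, and the number of such points is itself absorbed into the discrepancy term $P_d$ (its order of magnitude is no larger than the stated error bounds), so the identity \eqref{eq:sample bounds} holds as written with $P_d$ interpreted as the appropriate signed error. I would also remark that the equality in \eqref{eq:sample bounds} is essentially a \emph{definition} of $P_d(\btheta_{r^*})$ as the deviation from the volume heuristic — the mathematical content is that this deviation obeys the cited bounds — so the ``proof'' is really the chain of reductions (rescaling, change of basis, reduction to ball-counting) plus an appeal to known analytic number theory.

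\textbf{Main obstacle.} I expect the main obstacle to be rigorously justifying that the non-cubic lattices $\DN$ and $\AN$ (and in particular the reduced $A_d^*$ embedding, which is not axis-aligned and whose generator $T$ has irrational entries through $a = 1/(d+1-\sqrt{d+1})$) inherit the \emph{same} discrepancy asymptotics as $\dZ^d$. The clean statement is that the lattice-point count in a ball for an arbitrary lattice equals the count of $\dZ^d$ in a suitable ellipsoid, and the Gauss-circle-type error estimates are known to be the same for ellipsoids as for balls in every dimension — but pinning down the exact references for the ellipsoidal versions of the $d=3$ and $d=4$ bounds (which are more delicate than the $d\ge 5$ case, where the elementary $\Theta(\alpha^{d-2})$ bound suffices) is where the care is needed. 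Everything else — the rescaling computation yielding $\btheta_{r^*}$, the determinant bookkeeping, the isometry argument for $A_d^*$ — is routine.
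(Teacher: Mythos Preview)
Your proposal is correct and matches the paper's approach essentially line for line: rescale by $w_\Lambda=\beta^*/f_\Lambda$ to reduce $|\XL\cap\B_{r^*}|$ to $|\Lambda\cap\B_{\btheta_{r^*}}|$, then pass via the Gram matrix to $\dZ^d$-points in the ellipsoid $\{a\in\dZ^d:a\,\text{Gr}_\Lambda\,a^t\le\btheta_{r^*}^2\}$ and invoke the ellipsoid discrepancy bounds of~\cite{ivic2004lattice}. Your flagged obstacle about the irrational entries of $T$ dissolves once you notice that only the Gram matrix enters the quadratic form: since $EP$ acts isometrically on the hyperplane containing $\AN$, one has $T^tT=G_{\AN}G_{\AN}^t$, and $G_{\AN}$ has purely rational entries, so the quadratic form is rational for all three lattices---precisely the hypothesis the cited rational-ellipsoid bounds require.
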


\begin{proof}
  We estimate the size of the sample set $\XL$ induced by the lattice $\Lambda$ and the scaling factor $f_\Lambda$ by exploiting the relation between $\XL$ and the grid lattice~$\dZ^d$. In particular, a ball with respect to $\XL$ can be viewed as a rescaled ball for the  $\dZ^d$ lattice. This allows the use of bounds on the number of $\dZ^d$ points within an ellipse. 

  Fix a ball radius $R>0$. 
  Due to the rescaling performed in Theorem~\ref{thm:decomp_lattices}  we transition from the lattice $\Lambda$, which is a $f_\Lambda$-cover for $\dR^d$, into the set $\XL$, which is a ${\beta^*}$-cover for $\dR^d$. In particular, the rescaling factor is $w_\Lambda:={\beta^*}/f_\Lambda$, which is multiplied by $\Lambda$ to obtain $\XL$ (for $\AN$ we also applied an isometric transformation, but this does not change the scale reasoning). Thus, the ball $\B_R$ with respect to $\XL$ can be viewed as the ball $\B_{\btheta}$, where $\btheta_R=R w_\Lambda=\frac{R}{{\beta^*}}f_\Lambda$. Thus, $|\XL\cap \B_R|=|\Lambda\cap\B_{\btheta_R}|$. For the remainder of the proof, we wish to bound the expression $|\Lambda\cap\B_{\btheta_R}|$.

  Let $v=aG_\Lambda$ be a lattice $\Lambda$ point, where $a\in\mathbb{Z}^d$. By definition of the \emph{Gram matrix} $\text{Gr}_\Lambda:=G_\Lambda G_\Lambda^t$, we obtain 
  \[
    a\text{Gr}_\Lambda a^t:=aG_\Lambda G^t_\Lambda a^t=\|aG_\Lambda\|^2=\|v\|^2.
  \]
This leads to the relation
  \begin{align*}
      \B_{\btheta_R}\cap \Lambda &= \{v\in\Lambda |\,\|v\| \leq \btheta_R\} =\{a\in\dZ^d|a \text{Gr}_\Lambda a^t\leq \btheta_R^2\} \\ & =E_{\btheta_R^2}\left(\text{Gr}_\Lambda\right)\cap \dZ^d,
  \end{align*}
  where $E_{s}\left(A\right):=\{x\in\dR^d|x A x^t\leq s\}$ for a matrix $A$ and $s>0$. 
  
  Next, observe that the Gram matrix $Gr_\Lambda$ of the lattices \Lattices contains only rational numbers, and hence defines a \emph{rational quadratic form}, which allows us employ \emph{rational ellipsoid bounds}~\cite{ivic2004lattice} for $|E_{\btheta_R}\left(\text{Gr}_\Lambda\right)\cap \dZ^d|$. Hence,
\begin{align}
    |\XL\cap \B_R
|&=|\B_{\btheta_R}\cap \Lambda|=\left|E_{\btheta_R}\left(\text{Gr}_\Lambda\right)\cap \dZ^d\right|\nonumber\\
&=\frac{\partial(\B_{\btheta_R})}{\sqrt{\det(\Lambda)}}+P_d(\btheta_R)=\frac{\partial(\B_1)}{\sqrt{\det(\Lambda)}}\btheta_R^d+P_d(\btheta_R). \label{eq:sample_complexity}
\end{align}
The expression in Equation~\eqref{eq:sample bounds} immediately follows by plugging
\[\btheta_{r^*}=\frac{r^*}{{\beta^*}}f_\Lambda = \frac{2\delta(1+\epsilon)}{\sqrt{1+\epsilon^2}}\cdot \frac{\sqrt{1+\epsilon^2}}{\delta\epsilon} f_\Lambda = \frac{2(1+\eps)}{\eps}f_\Lambda.\]
\end{proof}

\niceparagraph{Discussion.} 
The expression in Equation~\eqref{eq:sample bounds} is identical for our three sample sets, except for the value of the covering radius $f_\Lambda$. This highlights the fact that a smaller covering radii leads to a lower sample complexity. Also, notice that the value $f_\Lambda$ is raised to the power of $d$ in Equation~\eqref{eq:sample bounds}, which emphasizes the difference between the sets in terms of sample complexity. 
See a plot of the sample complexity in theory and practice in~\Cref{fig:limit_graph_upper}, wherein $\XA$ has the lowest sample complexity (except in dimensions $3$ where it coincides with $\XD$). Notice that the theoretical bounds are well-aligned with the practical values. 

For example, consider $d=12$, where the $\XA$ sample set is  $\approx 4$ times smaller than $\XD$.  Even when this difference is not as big, e.g., in dimension $6$ where the size of $\XA$ is  $\approx 1.63$ times smaller than $\XD$, we observe tremendous impact in terms of the running of the motion-planning algorithm in experiments. This follows from the fact that the sample complexity studied here corresponds to the branching factor of the underlying search algorithm, which is known to have a significant impact on the running time. In the next session, we show that the superiority of $\XA$ is maintained also for the collision-check complexity metric.

\begin{figure}[thb]
\includegraphics[width=\columnwidth]{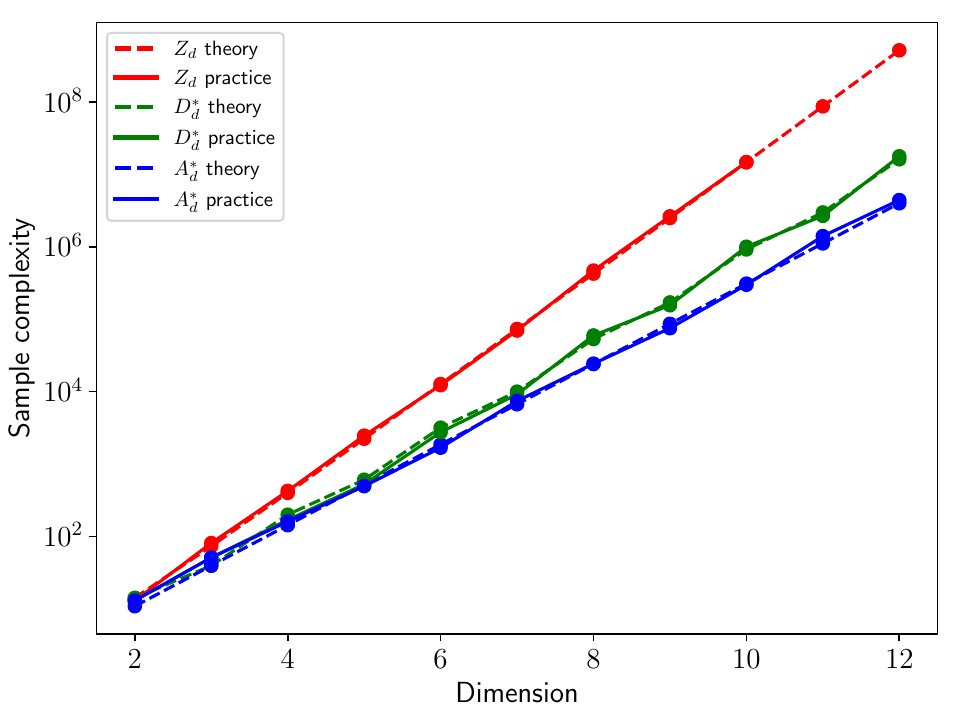}
\caption{A sample-complexity  plot for the sample sets $\XZ,\XD$, and $\XA$ with $\delta=1$ and $\eps=2$. The dashed line represents the theoretical approximation (\Cref{eq:sample bounds}), where the asymptotic error term $P_d$ is excluded. The solid line depicts the practical value, i.e., the  number of lattice points within the $r^*$-ball in practice. Missing values are due to memory limitations.}
\label{fig:limit_graph_upper}
\end{figure}

\section{Collision-check complexity}\label{sec:collision_complexity}
In this section, we seek to analyze the collision-check complexity of our sample sets $\XZ,\XD,\XA$, as the length of the graph edges is a better proxy for the computational complexity of constructing the resulting PRM than sample complexity. 

Recall that the collision-check complexity of a lattice-based sample set $\X_\Lambda$  is 
$CC_{\X_\Lambda}=\sum_{x\in \X_\Lambda\cap \B_{r^*}}\|x\|$.
A naive approach to upper-bound $CC_{\X_\Lambda}$ is to multiply the number of points in a $r^*$-ball (which corresponds to the sample complexity of $\X_\Lambda$) with the radius $r^*$, i.e.,
\begin{align}
    CC_{\X_\Lambda}\leq r^*\cdot |X_\Lambda \cap \B_{r^*}|=r^*\cdot \frac{\partial(B_1)}{\sqrt{\det(\Lambda)}}\btheta^d_{r^*} + r^*\cdot P_d(\btheta_{r^*}). \label{eq:cc_naive}
\end{align}
The following is a tighter bound, which reduces the coefficient of the dominant factor of the naive bound.

\begin{thm}[CC complexity bound]\label{thm:CC}
  Consider a lattice $\Lambda\in \{\dZ^d,D_d^*,A_d^*\}$ with a covering radius $f_\Lambda$, which yields the \decomp set $\X:=\XL$ for some $\delta>0,\epsilon>0$ and dimension $d\geq 2$. 
  Then,
    \begin{align}
        CC_\X\leq \zeta \cdot r^* \cdot \frac{\partial(B_1)}{\sqrt{\det(\Lambda)}}  \btheta_{r^*}^d + r^* \cdot P_d(\btheta_{r^*}), \label{eq:CC_thm}
    \end{align}
    where $\btheta_{r^*}:=2f_\Lambda\left(1+\frac{1}{\epsilon}\right),\zeta:=\left(1 - \frac{\xi^{d+2} - \xi}{ d\xi - (d+1)}\right)<1$ and     
    $\xi:=\left(\frac{d}{d+1}\right)^d$. 
\end{thm}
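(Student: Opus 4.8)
The plan is to bound the sum $CC_\X = \sum_{x \in \X \cap \B_{r^*}} \|x\|$ by converting it into a radial integral and comparing it against the naive bound $r^* \cdot |\X \cap \B_{r^*}|$. First I would recall from the proof of Theorem~\ref{thm:general_sample_complexity} that the lattice $\X = \XL$ is a $\beta^*$-cover of $\dR^d$, which means the Voronoi cells of the lattice points have circumradius at most $\beta^*$; equivalently, every point of $\B_{r^*}$ lies within distance $\beta^*$ of some lattice point. The key idea is to use this covering property to relate the discrete sum to the integral $\int_{\B_{R}} \|y\| \, \mathrm{d}y$ for an appropriate radius $R$: each lattice point $x$ "represents" its Voronoi cell, and $\|x\|$ differs from $\|y\|$ on that cell by at most $\beta^*$. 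Rather than tracking these error terms cell-by-cell, I would instead bound the sum from above by integrating over a slightly inflated ball and use the homogeneity of the integrand. Concretely, using the counting estimate from Equation~\eqref{eq:sample_complexity} restricted to annuli $\B_{\rho} \setminus \B_{\rho'}$, I would write $CC_\X = \int_0^{r^*} |\X \cap (\B_{r^*} \setminus \B_\rho)| \, \mathrm{d}\rho$ (layer-cake / Abel summation), since $\|x\| = \int_0^{\|x\|} \mathrm{d}\rho = \int_0^{r^*} \mathds{1}[\|x\| > \rho] \, \mathrm{d}\rho$.

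Next, I would substitute the count $|\X \cap (\B_{r^*} \setminus \B_\rho)| = \frac{\partial(\B_1)}{\sqrt{\det(\Lambda)}}(\btheta_{r^*}^d - \btheta_\rho^d) + (P_d(\btheta_{r^*}) - P_d(\btheta_\rho))$, where $\btheta_\rho = \rho f_\Lambda / \beta^*$ scales linearly in $\rho$. Integrating the main term over $\rho \in [0, r^*]$ gives $\frac{\partial(\B_1)}{\sqrt{\det(\Lambda)}} \int_0^{r^*} (\btheta_{r^*}^d - \btheta_\rho^d)\,\mathrm{d}\rho = \frac{\partial(\B_1)}{\sqrt{\det(\Lambda)}} \cdot \btheta_{r^*}^d \cdot r^* \cdot \left(1 - \frac{1}{d+1}\right) = \frac{d}{d+1} r^* \cdot \frac{\partial(\B_1)}{\sqrt{\det(\Lambda)}} \btheta_{r^*}^d$, which already beats the naive coefficient of $1$. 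The error term integrates to at most $r^* \cdot P_d(\btheta_{r^*})$ up to sign, matching the statement. This, however, only yields $\zeta = \frac{d}{d+1}$, which is weaker than the claimed $\zeta = 1 - \frac{\xi^{d+2} - \xi}{d\xi - (d+1)}$ with $\xi = (d/(d+1))^d$; so the naive layer-cake integral is not enough, and the improvement must come from a sharper treatment.

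The refinement I expect the authors use — and which I would pursue — is to exploit the fact that the \emph{innermost} lattice points contribute disproportionately little. The origin $o$ is a lattice point with $\|o\| = 0$, and more generally no lattice point other than the origin lies strictly inside $\B_{2f_\Lambda w_\Lambda} = \B_{\xi'}$ for some explicit inner radius related to the packing/minimal-distance of $\Lambda$ (here the relevant scale is governed by $\xi = (d/(d+1))^d$, presumably arising from the ratio between the covering radius and the minimal vector length, or from a comparison of $\btheta$ at a special inner radius). The correct approach is to split $\int_0^{r^*} = \int_0^{r_0} + \int_{r_0}^{r^*}$ where on $[0, r_0]$ the count $|\X \cap (\B_{r^*} \setminus \B_\rho)|$ is \emph{constant} (equal to the full count minus the isolated origin, or exactly the full count), so that portion of the integral contributes a full $r_0 \cdot |\X \cap \B_{r^*}|$ rather than the reduced amount — wait, that goes the wrong direction — so instead one uses that on $[0, r_0]$ the \emph{excluded} ball $\B_\rho$ contains only the origin, making $|\X \cap (\B_{r^*}\setminus\B_\rho)|$ essentially the full count, and the geometric-series structure $\sum_k \xi^k$-type terms in $\zeta$ come from iterating the annular estimate at the geometrically-spaced radii where new shells of lattice points appear.

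The main obstacle will be pinning down exactly where the quantity $\xi = (d/(d+1))^d$ enters and justifying the precise form of $\zeta$: this requires identifying the correct inner radius $r_0$ (or sequence of radii) below which the lattice count is frozen, proving that claim uses the covering radius together with a lower bound on the lattice's minimal distance, and then carefully evaluating the resulting integral $\int_0^{r^*} \btheta_\rho^d \, \mathrm{d}\rho$ with the modified lower region. I would handle the three lattices $\dZ^d, D_d^*, A_d^*$ uniformly by expressing everything in terms of $f_\Lambda$ and $\det(\Lambda)$ as in Theorem~\ref{thm:general_sample_complexity}, so that the only lattice-specific input is the covering radius; the factor $\xi$ must then be a dimension-only quantity, which strongly suggests it comes from comparing $\btheta$ evaluated at $r^*$ versus at the radius $\frac{d}{d+1} r^*$ (or an iterate thereof), with the series $\frac{\xi^{d+2} - \xi}{d\xi - (d+1)} = \sum \text{(geometric terms)}$ telescoping accordingly. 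Verifying $\zeta < 1$ is then a short calculus check on this rational expression in $\xi \in (0, 1)$.
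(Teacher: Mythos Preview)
Your layer-cake identity $CC_\X = \int_0^{r^*} |\X \cap (\B_{r^*}\setminus \B_\rho)|\,\mathrm{d}\rho$ and the ensuing integration of the main term are correct, and in fact yield a \emph{tighter} bound than the one stated in the theorem: you have the inequality direction backwards. Numerically $\tfrac{d}{d+1} < \zeta$ for every $d\ge 2$ (e.g.\ $\tfrac{2}{3}\approx 0.667$ versus $\zeta\approx 0.8$ at $d=2$; more conceptually, an upper Riemann sum of a nonnegative integrand dominates the integral), so your first computation already proves the theorem with room to spare. All of the subsequent speculation about packing radii, minimal lattice vectors, and ``frozen'' inner shells is unnecessary and on the wrong track.

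The paper's proof is precisely a discrete---and therefore looser---version of your argument. It partitions $\B_{r^*}$ into finitely many annuli at the geometrically spaced radii $r_i := \td^{\,i} r^*$ with $\td := \tfrac{d}{d+1}$ and $i=0,1,\dots,k$, bounding $\|x\|$ on each annulus by its outer radius $r_i$; the ratio $\td$ is chosen because it minimizes the single-split expression $r^{d+1} - r\, r_1^{\,d} + r_1^{\,d+1}$ in $r_1$, and the paper then takes $k=d$. The quantity $\xi = \td^{\,d} = \bigl(\tfrac{d}{d+1}\bigr)^d$ is nothing more than this ratio raised to the $d$-th power, and the stated $\zeta$ is the resulting finite geometric sum---there is no lattice-specific content in it whatsoever. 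Your continuous integral is the $k\to\infty$ limit of this construction; the paper stops at $k=d$ so that the innermost radius $r_d \approx r^*/e$ remains large enough for the asymptotic discrepancy estimates on $P_d$ to be meaningful, which is exactly why its $\zeta$ sits strictly above your $\tfrac{d}{d+1}$. The only place where your sketch and the paper's argument are equally loose is the treatment of the $P_d$ error contributions, which both handle heuristically.
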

\begin{proof}
  We improve the naive bound in Equation~\eqref{eq:cc_naive} by partitioning the $r$-ball into annuli. Consider a sequence of ${k+1\geq 2}$  radii ${0<r_k<\ldots<r_0=r^*}$, where $r_i:=\td^i r$ and $\td:=\frac{d}{d+1}$. This leads to the bound
\begin{align}
\label{eq:cc_eval}
CC_\X&\leq \sum_{i=0}^{k-1}r_i |\X\cap (\B_{r_i}\setminus \B_{r_{i+1}})| + r_k|\X\cap \B_{r_k}|\nonumber\\
  &= \sum_{i=0}^{k-1}r_i\left(|\X\cap \B_{r_i}|- |\X\cap \B_{r_{i+1}}|\right)+ r_k|\X\cap \B_{r_k}| \nonumber\\
  &= r^*|\X\cap \B_{r}| + \sum_{i=1}^{k}(r_i - r_{i-1})|\X\cap \B_{r_i}| \nonumber\\
  & =r^*\left(\frac{\partial(B_1)}{\sqrt{\det(\Lambda)}}\btheta^d_{r} +P_d(\btheta^{d-2}_{r})\right) \nonumber\\&+ \sum_{i=1}^{k}(r_i - r_{i-1}) \left(\frac{\partial(B_1)}{\sqrt{\det(\Lambda)}}\btheta^d_{r_i} +P_d(\btheta^{d-2}_{r_i})\right) \nonumber\\
  &= \frac{\partial(B_1)}{\sqrt{\det(\Lambda)}} \left(\underbrace{r^* \btheta^d_{r^*} + \sum_{i=1}^k(r_i-r_{i-1}) \btheta^d_{r_i}}_{:=\gamma}\right) + r^* P_d(\btheta_{r^*}).
\end{align}

Next, it can be shown that  
\[\gamma= r^* \btheta^d_{r^*} \left(1 - \frac{\xi^{d+2} - \xi}{ d\xi - (d+1)}\right)=r^* \btheta^d_{r^*} \zeta,\]
where 
$\btheta_{r_i}= r_i\frac{\btheta_{r^*}}{r^*}, k=d,$ and $\xi:=\td^d=\left(\frac{d}{d+1}\right)^d$. See \conditionaltext{\Cref{app:CC}}{ the supplementary material} for the full details of this derivation, including the motivation behind the definitions of $r_i$ and $k$.

By plugging the value of $\gamma$ in Equation~\eqref{eq:cc_eval}, Equation~\eqref{eq:CC_thm} follows. 
Notice that $\td < 1$. Therefore, $\td^{d(d+1)}-1 < 0$ and $\td^{d+1} - 1 < 0$, so our expression in the parenthesis is less than 1, which implies an improvement over the trivial bound (Equation~\eqref{eq:cc_naive}). 
\end{proof}

\niceparagraph{Discussion.}
Theorem~\ref{thm:CC} leads to a constant-factor improvement over the naive bound concerning the coefficient of the main bound term.  The value of $\gamma$  ranges from $\approx 0.751$ for $d=2$ and monotonically increases towards $1$ as $d\rightarrow\infty$ (see a plot in \conditionaltext{\Cref{fig:annuli_bound:app}}{ the supplementary material}).  Although our result currently leads to modest improvement, we hope it will pave the way for tighter bounds in the future. Nevertheless, our analysis emphasizes the impact of the coverage quality of lattices on the running time, which is more substantial than what the sample-complexity bound suggests. 

See a plot of the theoretical bound, along with the practical values in Figure~\ref{fig:limit_graph_upper}, where both values show similar trends. With that said, the discrepancy between the leading value in \Cref{eq:CC_thm} and the practical value is bigger than in the sample-complexity case, which suggests that the result in Theorem~\ref{thm:CC} could be tightened. Finally, note that collision-check complexity is roughly an order of magnitude larger than the sample complexity, particularly in higher dimensions. 
 
\begin{figure}[thb]
\includegraphics[width=\columnwidth]{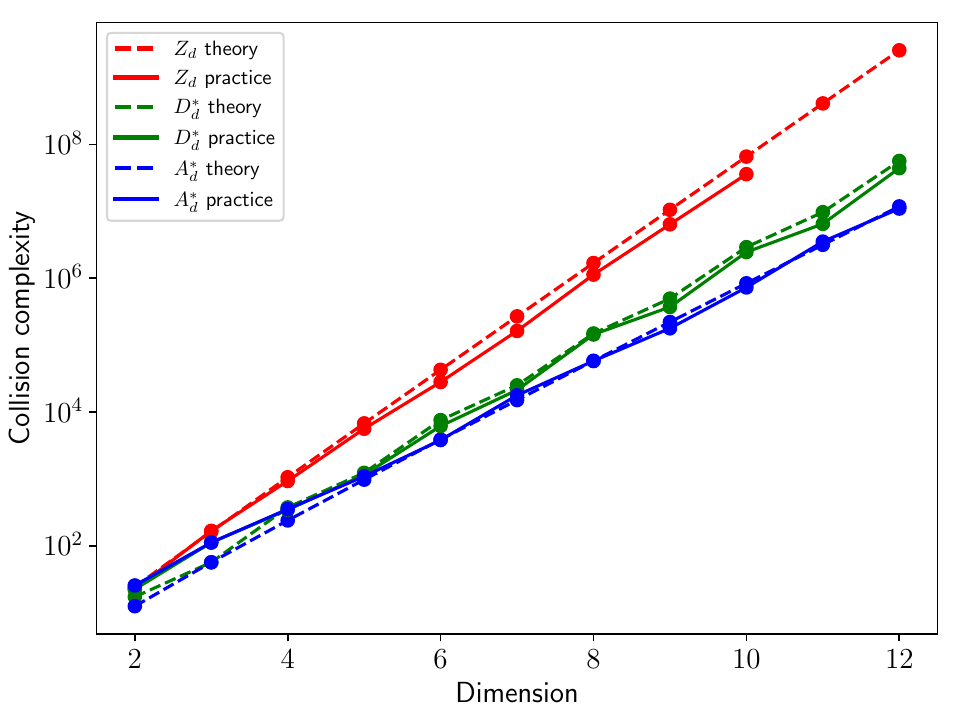}
\caption{A collision-check complexity  plot for the sample sets $\XZ,\XD$, and $\XA$ with $\delta=1$ and $\eps=2$. The dashed line represents the theoretical approximation (\Cref{eq:CC_thm}), where the asymptotic error term $P_d$ is excluded.} 
\label{fig:limit_graph_upper}
\end{figure}

\section{Algorithmic implications}
In this section, we examine the algorithmic implications of our theoretical results. We discuss the types of sampling-based planners (SBPs) that can benefit from lattice-based sampling (LBS), as defined in Theorem~\ref{thm:decomp_lattices}. We then discuss how to efficiently generate the points of a given LBS $\mathcal{X}_{\Lambda}^{\delta,\varepsilon}$. Finally, 
we provide a pseudo-code for an SBP that we use in the experimental results, while highlighting algorithmic aspects that leverage the regularity of LBS.

\subsection{Batch planners}
LBS is well suited for batch SBPs that generate a priori a batch (or several batches) of samples, implicitly or explicitly (see below), and process them simultaneously, where the resulting graph structure is agnostic to ordering between the samples, as in Equation~\eqref{eq:graph}. In our context, an LBS $\XL$ is regarded as a sample batch.  The multi-query planners PRM~\cite{kavraki1996probabilistic}, PRM*~\cite{karaman2011sampling}, and SPARS~\cite{DobsonB14}, and the single-query planners BIT*~\cite{GammellBS20}, AIT*~\cite{StrubG22}, TMIT*~\cite{ThomasonSG22}, IRIS~\cite{FuKSA23}, LRA*~\cite{MandalikaSS18}, GLS~\cite{MandalikaCSS19}, FMT*~\cite{JSCP15} and LazySP~\cite{DellinS16}, and the multi-robot planners dRRT~\cite{SoloveySH16} and dRRT*~\cite{ShomeSDHB20}, to name just a few examples, fit this description. 

LBS does not fit, in its current form, within incremental SBPs (e.g., RRT~\cite{lavalle2006planning}, RRG, or RRT*~\cite{karaman2011sampling}), which require the ability to add new samples one by one as time permits (as in, e.g., uniform random sampling or Halton sequences). The resulting graph structure in those algorithms depends not only on the physical location of the samples but also on the order in which they were processed. 
(We do believe that LBS could be adapted for {\em incremental} processing, as we discuss in the final section.) 

\subsection{Construction of lattice points}
Next, we provide details for generating the lattice $\Lambda$ points within an $R$-ball in Algorithm~\ref{alg:getNN}. Note that if $R=r^*$,  the above point set can be viewed as the neighbor set of the origin vertex of the graph $\G_{\M(\Lambda,r^*)}$ (assuming that $\B_{r^*}\subset \C_f$). To compute $\Lambda \cap R$, we run a BFS search over the integer vectors $\mathbb{Z}^d$ starting from the origin vertex, and compute for each vector $v$ the resulting lattice point through $x:=vG_{\Lambda}$ while discarding points outside of $R$, where $G_{\Lambda}$ is the generator matrix of $\Lambda$. 

\begin{algorithm}
\caption{getLatticeNeighbors($G_\Lambda,R$)}\label{alg:getNN}
  $\open=\{\zv\}$ \tcp{initiallize search with center vertex}
  $\opennext=\emptyset$  \tcp{next layer of OPEN} 
  \visited$=\emptyset$\;
  \While{!\open.\textup{empty()} $\vee$ !$\opennext$.\textup{empty()}}{
    \If{\textup{OPEN.empty()}} {
    \tcc{switch to the next layer of neighbors}
      OPEN=OPEN$_\textup{next}$\;
      OPEN$_\textup{next}=\emptyset$\;
    }
    $p=$ OPEN.pop()\;
    \If {$p\in$ \visited}{
        \Continue
    }
    VISITED.insert($p$) \;
    \tcc{iterate over basis integer vectors $\{e_1,\ldots, e_d\}$, where $e_i$ contains "$1$" in the $i$'th coordinate, and "$0$" elsewhere, in both directions}
    \For {$i\in\{1,\dots,d\},sign\in\{\pm1\}$}  { 
      $e = sign \cdot e_i$\;
      
      $p_\textup{new} =  p+ e \cdot G_\Lambda$ \tcp{obtain next lattice point}
      \If{$p_\textup{new}\notin\visited\cup\open\cup \opennext$ and $\|p\|\leq R $}
      {
OPEN$_\textup{next}$.insert($p_\textup{new}$)\;
      }
    }
  }
  \Return{\visited}
\end{algorithm}

\subsection{Implicit A*}
To demonstrate the impact of LBS in practice, we use a single-query planner where an implicitly-represented PRM graph $\G_{\M(\XL,r^*)}$ is explored using a search heuristic, similar to BIT*~\cite{GammellBS20}, and GLS~\cite{MandalikaCSS19}. Those state-of-the-art approaches are well-suited for settings where samples are generated in large batches, as LBS facilitates. We focus on the single-query setting, as it allows us to experiment with more complex problem scenarios (e.g., in terms of dimensions and tightness) than in a multi-query setting, where the entire configuration space needs to be explored, which requires additional memory and compute time. 

The planner we use, which is termed for simplicity \emph{implicit A*} (iA*), can be viewed as a simplified version of BIT* with a single sample batch searched using the A* algorithm. iA* generates a sample set $\X$ from a given sample distribution  ($\XZ,\XD,\XA$ or uniform random sampling). Instead of constructing the entire PRM graph $G:=G_{\M(\X,r)}$ resulting from $\X$ and a given radius parameter $r$, iA* constructs a partial graph $G'\subset G$ in an implicit manner, where the construction is guided by the underlying A* search. That is, when a vertex $v$ of $G$ is expanded, its neighbors $N_v$ within an $r$-neighborhood are retrieved from $\X$, and the edges between $v$ and every $u\in N_v$ are collision-checked and added to the explored portion of the graph $G'$. For LBS, we set $r:=r^*$. 

We consider two flavors of iA*. In the first flavors, 
denoted by \glo, vertex neighbors (as $N_v$ above) are retrieved by calling a \emph{global} nearest-neighbor (NN) data structure. Before starting the A* search, this data structure is initialized with the set $\X$. 

Although the benefits of LBS over randomized sampling are already apparent for the \glo flavor (especially $\XA$), as will be shown in the next section, the performance of iA* can be further improved by exploiting the \emph{local} regular structure of lattices. In the second flavor of iA*, denoted by \loc,  which only applies to lattice-based sets, vertex neighbors are efficiently retrieved without NN data structures. Given a lattice-based sample set $\X$ and a connection radius $r>0$, denote by $N_x:=\B_r(x)\cap \X$ the $r$ nearest-neighbors of a vertex $x\in \X$. Then, for another sample $x'\in \X_\Lambda$ it holds that $N_{x'}:=\B_r(x')\cap \X=(x'-x)+N_x$, i.e., $N_{x'}$ is a translation of $N_x$. Hence, the computation of the neighbor set $N_0$, discussed above, is only performed from scratch once at the beginning of the run of the search algorithm, where the $N_v$ is easily obtained from vector operations.

We provide pseudo-code for iA*-\loc in 
Algorithm~\ref{alg:ImpA}. The algorithm follows the structure of the standard A* algorithm by extracting vertices from the open list, with some small modifications. The algorithm returns a path from $q_s$ to $q_g$ on the graph $G_\M(\X,r^*)$ (or reports that none exists otherwise), where $\X:=\XL$. Without loss of generality, we assume that $q_s\in \X$, and moreover, $q_s$ is the origin. Notice that $\X$ is given to the algorithm as an implicit parameter, as is the free space $\C_f$. As mentioned earlier, the neighbor set of the origin vertex is computed once (line~\ref{alg:line:neighbor_0}), and this information is then reused to compute the neighbor set of a general vertex $q$ (line~\ref{alg:line:neighbor_q}). This is where iA*-\loc differs from iA*-\glo, where the latter computes the neighbor set by running a nearest-neighbor search algorithm, as is common in SBPs (i.e., iA*-\glo requires an explicit representation of the full sample set). 

From an implementation standpoint, we mention that additional improvements can be obtained for iA*-\loc by associating every sample $q\in \X_\Lambda$  with the integer vector $v\in \dZ^d$ such that $q:=vG_\Lambda$. This allows for efficient tracking of explored vertices. For simplicity, we omit those fine details from the pseudo-code, which can be found in our code repository.

\begin{algorithm}
\caption{iA*-\loc($\C_f,\X,r^*,q_s,q_g$)}\label{alg:ImpA}
  $g(q_s)= 0$ \tcp{cost-to-go of $q_s$}
  OPEN $= \{q_s\}$\;
  VISITED $=\{q_s\}$ \;
  PREV$[q_s]=\emptyset$ \tcp{Predecessor vertices along shortest path to $q_s$}
  $N_0\gets \textup{getLatticeNeighbors}(\X,r^*)$ \tcp{ neighbors of $q_s$} \label{alg:line:neighbor_0}
  \While{\open\xspace not empty}{
    \tcc{extract vertex $q$ minimizing $g(q)+\|q_g-q\|$}
    $q\gets \textup{OPEN.pop\_min}()$\;
    \If{$q==q_e$}{
      \Return {path reconstructed using PREV }
    }
    \tcc{compute neighbors of $q$}
    $N_q = N_0 + q$\; \label{alg:line:neighbor_q}
    \If{$\|q-q_g\| < r^*$} {
      $N_q \gets N_q \cup \{q_g\}$ \tcp{add $q_g$ to neighbor set}
    }
    \For{$n$ in $N_q$} {
    \tcc{collision-check edge from $q$ to $n$}
      \If{$\overline{qn}\not\subset \C_f$}{
        \Continue
      }
      $g_\textup{new}(n) = g(q) + \|q-n\|$ \tcp{potential cost-to-come of $n$}
      $g_\textup{cur}(n)=\infty$ \tcp{current cost-to-come of $n$} 
      
      \If{$n\in $ VISITED}{
        $g_\textup{cur}(n) = g(n)$\;
      }
      \If{$g_\textup{new}(n) < g_\textup{cur}(n)$}{
        \If{$n\notin$ VISITED}{
          VISITED.insert($n$)
        } {
          $g(n)=g_\textup{new}(n)$ ;\
        }
        $PREV[n]=q$ \;
        \If{$n\notin$ OPEN}{
          OPEN.insert$(n)$\;
        }
      }
    } 
  }
  \Return {$\emptyset$}
\end{algorithm}


\section{Experimental results}\label{sec:experiments}
We study practical aspects of our theoretical findings in motion planning for multi-robot systems and a manipulator arm. We report comparisons between the three lattice-based sets, as well as uniform random sampling. An additional experiment studying the effect of the parameters~$\delta$ and~$\eps$ is reported in\conditionaltext{ the appendix}{ the supplementary material}. 

\begin{figure*}[]
\hspace*{-0.8cm}
  \centering
\subfloat[Kenny ($\uparrow$), Narrow ($\downarrow$)]{
\includegraphics[width=0.364\columnwidth,clip]{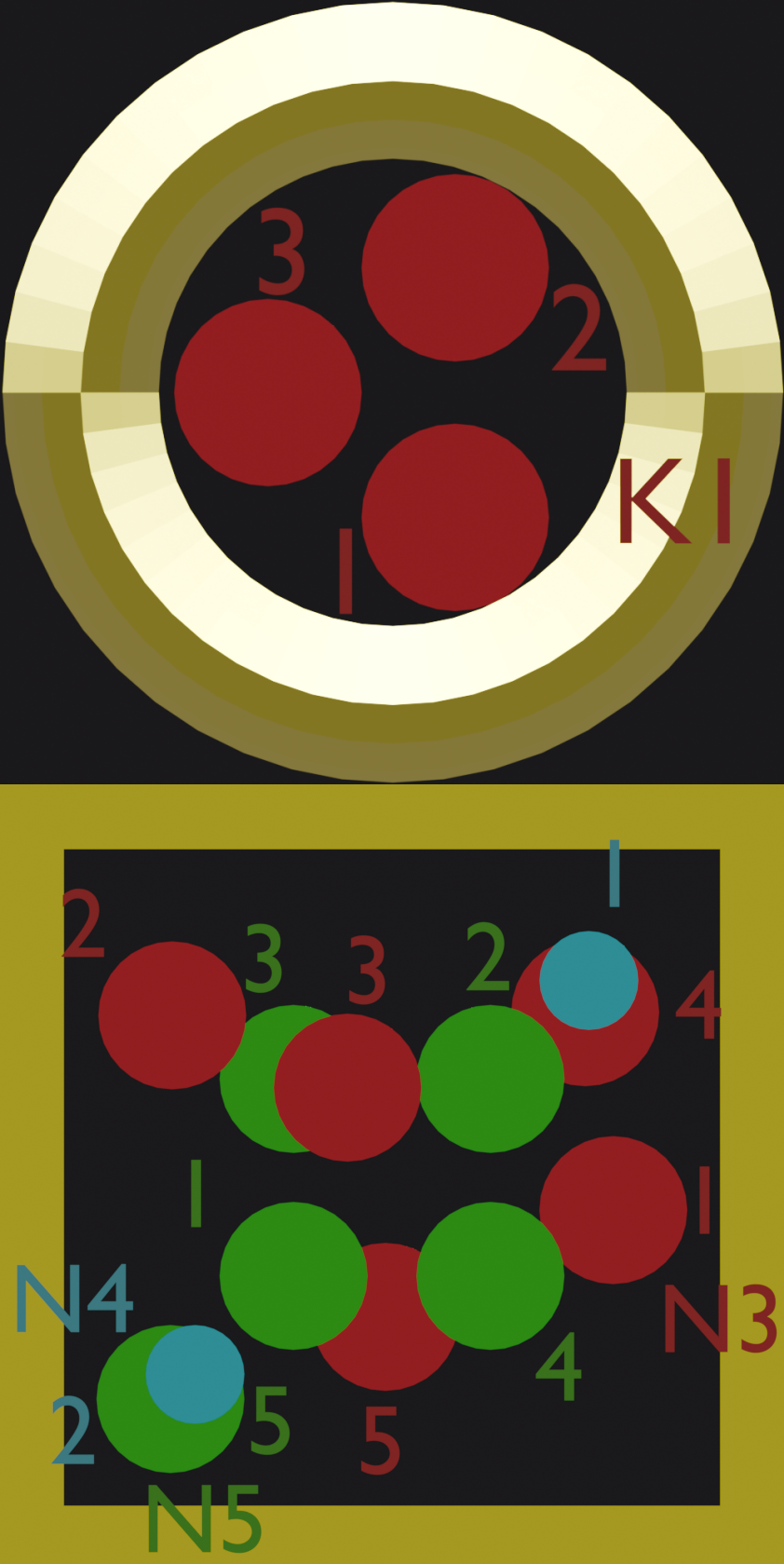}
    }
\subfloat[Zigzag]{
\includegraphics[width=0.3\columnwidth,clip]{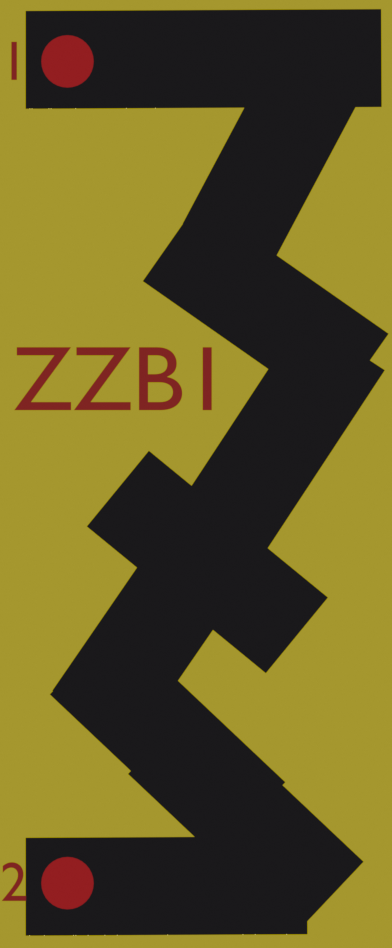}
    }
\subfloat[Unique Maze]{
\includegraphics[width=0.725\columnwidth,clip]{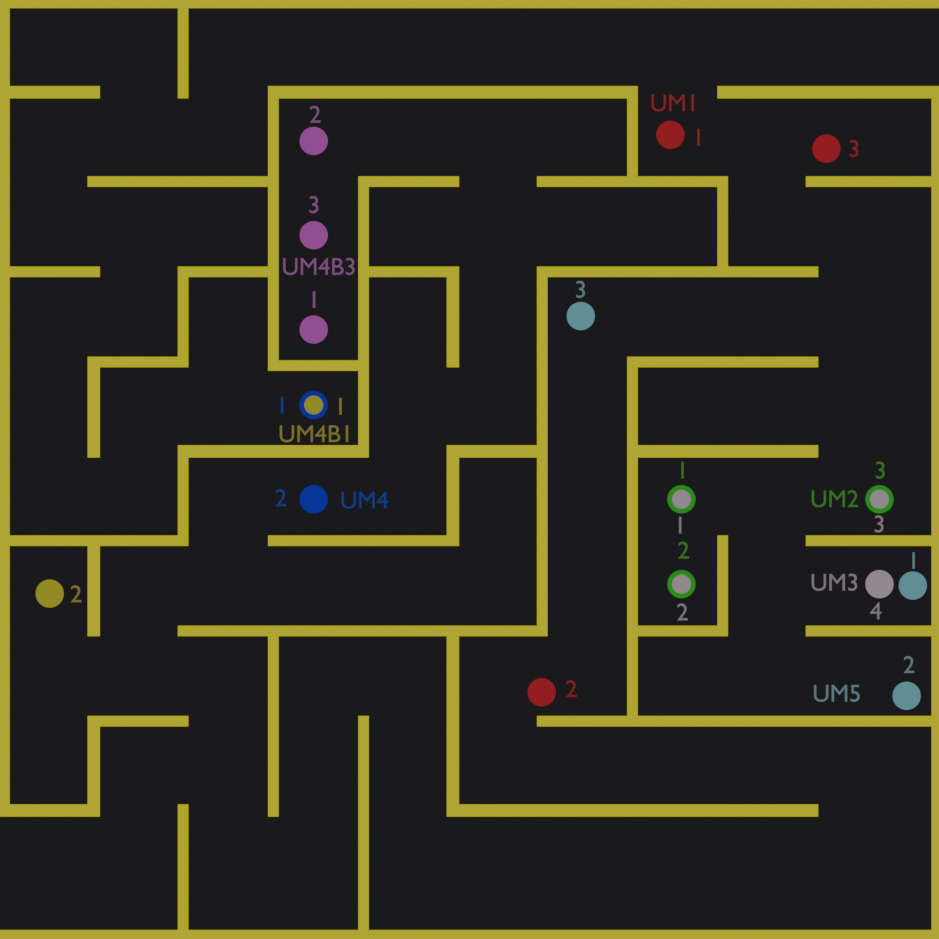}
    }
\subfloat[Bugtrap]{\includegraphics[width=0.725\columnwidth,trim={1.1cm 1.1cm 1.1cm 1.1cm},clip]{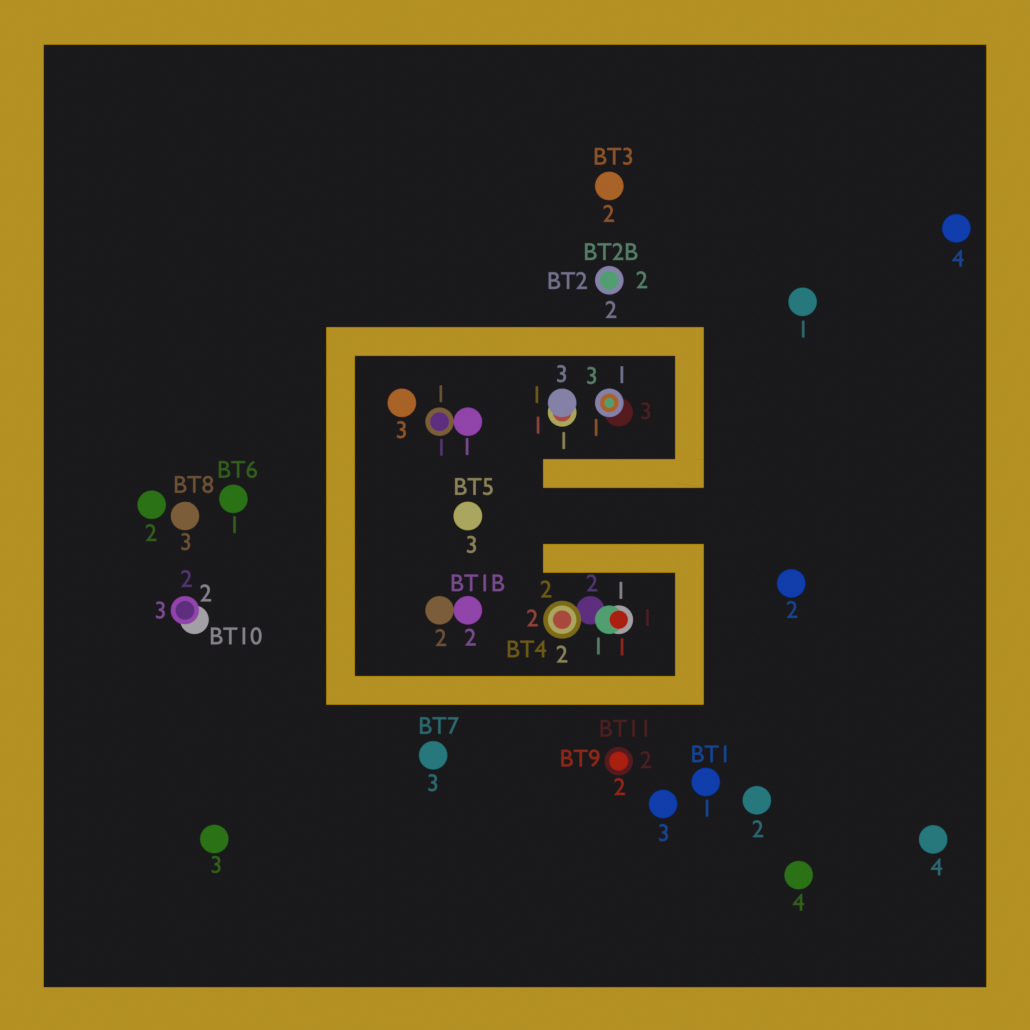}
    }

  \caption{A subset of the scenarios used in the experimental results for the multi-robot setting. Some of the figures depict several scenarios, where each scenario consists of a workspace environment, along with an initial configuration specifying the number of robots, their initial positions, and a permutation of their target positions. Each configuration is drawn in a different color, where the scenario name is indicated by the first letters of the workspace name along with a number. (a) (Top) For the "Kenny" workspace, there is a single scenario K1, where robot $1$ starts in the bottom position, robot $2$ in the top position, and robot $3$ in the remaining position, where the disc size corresponds to the robot geometry. In this map, the robots are forced to perform a simultaneous movement. (a) (Bottom)   
  For the "Narrow Room" workspace, we illustrate the scenarios N3,N4, and N5. Due to overlap, the discs in N4 are slightly shrunk for better visualizations. (b) The Zigzag scenario contains narrow, winding passages in which one of the robots can hide in a pocket to let the other robot pass. Swaps can also occur in the top or bottom corners, albeit with greater care. (c),(d) Those workspaces are taken from OMPL~\cite{sucan2012the-open-motion-planning-library}.}
  \label{fig:scenarios}
\end{figure*}

%

\subsection{Implementation details and planners}
We evaluate the performance of the iA* planner for LBS using the \loc and \glo flavors, and for uniform random sampling using the \glo variant.
The experiments were performed on an ASUS Vivobook 16x laptop equipped with an Intel Core i9-13900H CPU, 32GB DDR4 RAM, and SSD storage, running Ubuntu 22.04.5 LTS OS. 

The planners were implemented in C\texttt{++} within OMPL~\cite{sucan2012the-open-motion-planning-library}, with FCL~\cite{Pan2012FCL} for collision detection, and GNAT~\cite{gipson2013resolution} for nearest-neighbor search (where applicable). The manipulator simulations also rely on the VAMP library~\cite{vamp_2024}, which significantly speeds up collision checking via fine-grain parallelization of edge collision checking.

\subsection{Scenarios}
We test the planners on various motion-planning problems in two settings. 

\niceparagraph{Multi-robot system.} The first setting considers multi-robot systems with a total number of degrees of freedom $d\in \{4,6,8,10,12\}$, where an $m$ disc-robot system is viewed as a single-robot system whose configuration space is $\mathbb{R}^{2m}$. As such, we construct sample sets in $\mathbb{R}^{2m}$ by relying on  $2m$-dimensional lattices (or uniform random sampling). 

Each scenario consists of a multi-robot system of $m$ labeled planar disc robots that need to (simultaneously) exchange positions (i.e., robot $i\in \{0,\ldots,m-1\}$ moves to the start position of robot $i+1 \mod m$) while avoiding collisions with each other and static obstacles. 

This setting is considered for two reasons. First, such multi-robot systems can be viewed as Euclidean systems, which allows applying our theoretical results (see discussion in Section~\ref{sec:future} of extensions to more general systems). (Specifically, the configuration space of an $m$-robots system is $\dR^{2m}$.) Second, this setting allows us to test systems of various dimensions while still being able to visualize the problem setting (which is in 2D).  Third, it provides a simple approach for determining the value $\delta$ (see below).

A subset of the multi-robot test scenarios is found in Figure~\ref{fig:scenarios} (additional scenarios are found in\conditionaltext{~\Cref{fig:scenarios:app}}{ the supplementary material} along with a detailed description). The scenarios present various difficulty levels for the planners, where the most difficult scenarios consist of narrow passages for the individual robots and a significant amount of coordination between the robots, giving rise to narrow passages in the full configuration space. 

Unless stated otherwise, the parameters $\delta$ and $\eps$ were specified in the following manner. We assigned $\eps:=10$ to focus on running time scalability rather than solution quality. The parameter $\delta$ was initially set to be the clearance of the start configuration (capturing both distances from obstacles and between robots), which was decreased until a solution using $\XA$ was found. We discuss the automatic tuning of $\delta$ and $\eps$  in Section~\ref{sec:future}.

\niceparagraph{Manipulator arm}
The second setting considers a single Panda manipulator arm with $d=7$ (see scenarios in Figure~\ref{fig:manip_tests}). Here, we demonstrate the integration of lattice-based sampling within the novel VAMP library, which significantly speeds up collision checking~\cite{vamp_2024}.

Below, we evaluate several aspects of the lattice-based sampling, which we first consider in the multi-robot setting. The final part of this section considers the manipulator case.  

\subsection{Comparison between lattice-based sample sets}
In the first set of experiments, we study the running time and solution quality (in terms of path length) of the three lattice-based sample sets $\XZ,\XD$, and $\XA$ within the \loc flavor of iA* for a selected set of scenarios. Although in some scenarios, the performance between the sample sets can be comparable, in terms of running time, especially for $\XD$ and $\XA$, here we highlight situations where large gaps occur. 

The results are reported in Table~\ref{tbl:lattice_comparison}. (Results for additional scenarios are provided in\conditionaltext{~\Cref{tbl:lattice_comparison:app} in the appendix}{ the supplementary material}.) In terms of running time, $\XA$ outperforms the other sample sets, as predicted by our theoretical results with respect to sample complexity and collision-check complexity. $\XZ$ results in at least one order of magnitude (up to 2 orders) slower running times than the other two sample sets. $\XA$ outperforms $\XD$, being at least $3\times$, and sometimes as much as $10\times$, faster. The notations "dnf" and "-" indicate a running time threshold of 1000 seconds was exceeded and failure to find a solution, respectively. 

Regarding solution quality, $\XZ$ outperforms the other sample sets, except for the last scenario (unless it does not manage to find a solution) due to a denser graph. The difference in the solution length suggests that the completeness-cover relation Lemma~\ref{lem:cover} can be tightened by, e.g., reducing the value of ${\beta^*}$, albeit we emphasize this is a worst-case bound. From a practical perspective, the difference in the path length can be reduced via post-processing techniques with negligible computational cost. To summarize, $\XA$-sampling can drastically reduce computational cost while achieving comparable solution quality to the other sample distribution. For this reason, we omit comparisons with $\XZ$ and $\XD$ in the remainder of this section. 

\begin{table}[]
\caption{Comparison of running time and solution length using lattices-based sample sets (where the underlying lattice is denoted in the table) in the iA*-\loc algorithm. Solution length is normalized with respect to the length obtained using $\XA$. } \label{tbl:lattice_comparison}
\centering
\begin{tabular}{|c||ccc|cc|}
\hline
 & \multicolumn{3}{c|}{\cellcolor[HTML]{EFEFEF} Time (s)} & \multicolumn{2}{c|}{\cellcolor[HTML]{EFEFEF} Length (r)} \\ \cline{2-6} 
\multirow{-2}{*}{\begin{tabular}[c]{@{}c@{}}Scenario\\ (robot \#)\end{tabular}} & \multicolumn{1}{c|}{\cellcolor[HTML]{FFFFC7}$\ZN$} & \multicolumn{1}{c|}{\cellcolor[HTML]{FFFFC7}$\DN$} & \cellcolor[HTML]{FFFFC7}$\AN$ & \multicolumn{1}{c|}{\cellcolor[HTML]{FFFFC7}$\ZN$} & \cellcolor[HTML]{FFFFC7}$\DN$ \\ \hline \hline
\cellcolor[HTML]{ECF4FF}N1(5) & \multicolumn{1}{c|}{165.35} & \multicolumn{1}{c|}{4.59} & 0.36 & \multicolumn{1}{c|}{0.65} & 0.79 \\
\cellcolor[HTML]{ECF4FF}N1B(6) & \multicolumn{1}{c|}{dnf} & \multicolumn{1}{c|}{328.30} & 15.08 & \multicolumn{1}{c|}{dnf} & 0.89 \\ \hline
\cellcolor[HTML]{ECF4FF}BT10(2) & \multicolumn{1}{c|}{-} & \multicolumn{1}{c|}{1.20} & 0.30 & \multicolumn{1}{c|}{-} & 0.92 \\
\cellcolor[HTML]{ECF4FF}BT5(3) & \multicolumn{1}{c|}{0.54} & \multicolumn{1}{c|}{0.14} & 0.06 & \multicolumn{1}{c|}{0.38} & 0.51 \\
\cellcolor[HTML]{ECF4FF}BT1(4) & \multicolumn{1}{c|}{146.69} & \multicolumn{1}{c|}{50.81} & 3.51 & \multicolumn{1}{c|}{0.95} & 1.03 \\
\hline
\cellcolor[HTML]{ECF4FF}K1(3) & \multicolumn{1}{c|}{32.31} & \multicolumn{1}{c|}{4.97} & 1.37 & \multicolumn{1}{c|}{0.82} & 0.89 \\ \hline
\cellcolor[HTML]{ECF4FF}UM4(2) & \multicolumn{1}{c|}{-} & \multicolumn{1}{c|}{8.47} & 2.43 & \multicolumn{1}{c|}{-} & 0.90 \\
\cellcolor[HTML]{ECF4FF}UM2(3) & \multicolumn{1}{c|}{13.35} & \multicolumn{1}{c|}{1.22} & 0.04 & \multicolumn{1}{c|}{1.04} & 1.52 \\
\hline
\end{tabular}
\end{table}

\begin{table}[]
\caption{Comparison of running time and solution length between $\XA$ (using \loc and \glo) and uniform random sampling. For \rnd we report the average values in terms of running and solution length (the latter is given as normalized value with  respect to the solution length with $\XA$).}
\label{tbl:lattice_vs_random}
\centering
\begin{tabular}{|c||ccc|c|c|}
\hline
 & \multicolumn{3}{c|}{\cellcolor[HTML]{EFEFEF} Time (s)} & \cellcolor[HTML]{EFEFEF}Length (r) & \cellcolor[HTML]{EFEFEF}Success (\%) \\ \cline{2-6} 
\multirow{-2}{*}{\begin{tabular}[c]{@{}c@{}}Scenario\\ (robot \#)\end{tabular}} & \multicolumn{1}{c|}{\cellcolor[HTML]{FFFFC7}\begin{tabular}[c]{@{}c@{}}$\AN$\\ \loc \end{tabular}} & \multicolumn{1}{c|}{\cellcolor[HTML]{FFFFC7}\begin{tabular}[c]{@{}c@{}}$\AN$\\ \glo \end{tabular}} & \cellcolor[HTML]{FFFFC7}\begin{tabular}[c]{@{}c@{}}\rnd\\ \glo\end{tabular} & \cellcolor[HTML]{FFFFC7}\begin{tabular}[c]{@{}c@{}}\rnd\\ \glo\end{tabular} & \cellcolor[HTML]{FFFFC7}\begin{tabular}[c]{@{}c@{}}\rnd\\ \glo\end{tabular} \\ \hline\cellcolor[HTML]{ECF4FF}N1(5) & \multicolumn{1}{c|}{0.36} & \multicolumn{1}{c|}{3.05} & 4.16 & 1.48 & 80 \\
\cellcolor[HTML]{ECF4FF}N1(5) & \multicolumn{1}{c|}{0.36} & \multicolumn{1}{c|}{3.05} & 4.16 & 1.48 & 80 \\
\cellcolor[HTML]{ECF4FF}N2(5) & \multicolumn{1}{c|}{0.41} & \multicolumn{1}{c|}{2.67} & 2.74 & 2.43 & 65 \\
\cellcolor[HTML]{ECF4FF}N3(5) & \multicolumn{1}{c|}{0.59} & \multicolumn{1}{c|}{3.83} & 5.44 & 2.02 & 85 \\ \hline
\cellcolor[HTML]{ECF4FF}BT3(3) & \multicolumn{1}{c|}{5.38} & \multicolumn{1}{c|}{14.15} & 62.22 & 1.05 & 100 \\
\cellcolor[HTML]{ECF4FF}BT8(3) & \multicolumn{1}{c|}{12.17} & \multicolumn{1}{c|}{19.31} & 169.32 & 1.00 & 100 \\
\cellcolor[HTML]{ECF4FF}BT8B(3) & \multicolumn{1}{c|}{3.17} & \multicolumn{1}{c|}{3.60} & 41.63 & 1.16 & 100 \\ \hline
\cellcolor[HTML]{ECF4FF}UM4(2) & \multicolumn{1}{c|}{2.43} & \multicolumn{1}{c|}{2.93} & 12.71 & 0.96 & 70 \\
\cellcolor[HTML]{ECF4FF}UM1(3) & \multicolumn{1}{c|}{6.68} & \multicolumn{1}{c|}{58.62} & 49.35 & 0.98 & 100 \\
\cellcolor[HTML]{ECF4FF}UM2(3) & \multicolumn{1}{c|}{0.04} & \multicolumn{1}{c|}{2.94} & 4.49 & 1.95 & 75 \\ \hline
\cellcolor[HTML]{ECF4FF}ZZB1(2) & \multicolumn{1}{c|}{0.44} & \multicolumn{1}{c|}{0.49} & 10.44 & 0.89 & 100 \\
\cellcolor[HTML]{ECF4FF}ZZB2(2) & \multicolumn{1}{c|}{0.71} & \multicolumn{1}{c|}{2.51} & 272.70 & 0.89 & 100 \\
\cellcolor[HTML]{ECF4FF}ZZB3(2) & \multicolumn{1}{c|}{0.47} & \multicolumn{1}{c|}{7.69} & 341.84 & 0.88 & 100 \\ \hline
\end{tabular}
\end{table}

\subsection{Comparison with randomized sampling}
We compare the performance of iA* when using $\XA$-sampling and uniform random sampling (denoted by \rnd). We show that the advantage of $\XA$ stems not only from algorithmic speedups due to the regular lattice structure (as in \loc) but also from structural properties leading to a more efficient coverage of space. Hence, we run both \glo and \loc using $\XA$, where \rnd is tested only with \glo. 

For a given scenario, we fix the parameters $\delta$ and $\eps$ used to derive $\XA$. To derive a set of random samples $\XR$, we compute the number of $\XA$ samples within the scenario (while ignoring collision with obstacles and between robots) and produce the same number of points via uniform random sampling. Importantly, when running \glo with $\XR$ we specify the standard connection radius  $r_\text{rnd}(n)=\psi \left(\frac{\log n}{n}\right)^{1/d}$, where $n:=|\XR|$ and $\psi$ is a constant, such that asymptotic optimality is guaranteed~\cite{karaman2011sampling}. The results are reported in  Table~\ref{tbl:lattice_vs_random}, where \rnd is averaged over 20 repetitions. (Results for additional scenarios are provided in\conditionaltext{~\Cref{tbl:lattice_vs_random:app} in the appendix}{ the supplementary material}.)

Note that \rnd achieves a perfect success rate only in some of the scenarios. Furthermore, its running time is typically at least $5\times$ slower than \loc, and in some scenarios, up to three orders of magnitude slower. This time gap can be partially attributed to \rnd constructing and maintaining a nearest-neighbor data structure. However, notice that in some scenarios, $\AN$-\glo significantly outperforms \rnd. One explanation of that is $\AN$ requires a smaller connection radius than \rnd, which leads to a lower sample complexity and collision check complexity. Another reason is that \rnd, due to poor coverage of space, is forced to make detours and so considers additional vertices and collision checks. We provide further evidence for those points in\conditionaltext{~\Cref{tbl:lattice_vs_random:app} in the appendix}{ the supplementary material}. We also report results for \rnd where the connection radius $r_\text{rnd}(n)$ is substituted with $r^*$, which further reduces success rates and emphasizes the efficiency of space coverage using $\XA$.



Overall, we conclude that locality can substantially speed up performance from an algorithmic perspective, while the structural properties of $\XA$ also improve performance in terms of success rate, running time, and solution quality. 

\subsection{Tests on a Panda arm}
Here, we evaluate the performance of iA*, which is implemented in the VAMP environment. Specifically, we compare iA*-\loc, using the sample set $\XA$, 

with iA*-\rnd, where the former again proves to be significantly better
(see results in Table~\ref{table:manip_tests}). We set $\eps=10$ and lowered the parameter $\delta$ until iA*-\loc managed to find a solution. 

In the "1x" column, we report the results for \rnd, where the number of samples equals $|\XA|$, as we have previously done. Here, other than the limbo scenario, \rnd achieves a similar running time to \loc. However, other than in two scenarios, \rnd achieves poor success rates. 

We then increase the number of samples used by \rnd by $10$ and $100$ (reported in the columns "10x" and "100x") to find how many samples are necessary so that \rnd achieves near-perfect success rates. Indeed, in the window and sleeve scenarios, the success rate improves with additional samples, but with a significant running-time penalty.  We also attempted this modification in the room scenario, but it was too computationally heavy even for 10x.

\begin{figure*}[htbp]
    \centering
    \subfloat[Window]{              \includegraphics[width=0.3\textwidth,clip]{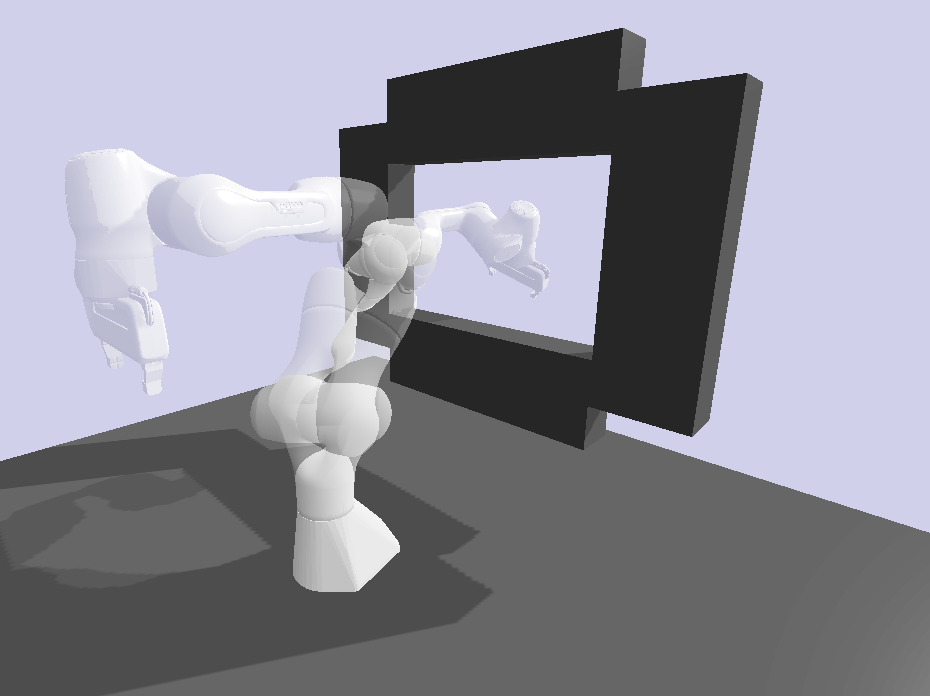}
    \label{fig:manip:window}
    }
    \subfloat[Sleeve]{              \includegraphics[width=0.292\textwidth,clip]{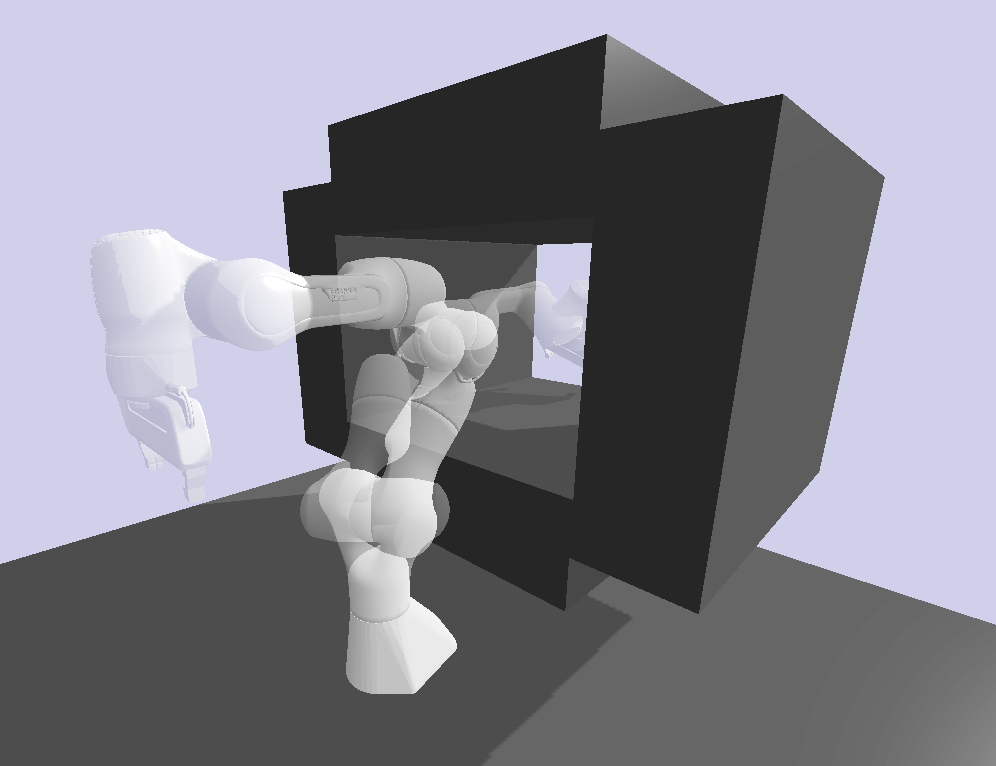}
    \label{fig:manip:sleeve}
    }
    \subfloat[Limbo]{              \includegraphics[width=0.3\textwidth,clip]{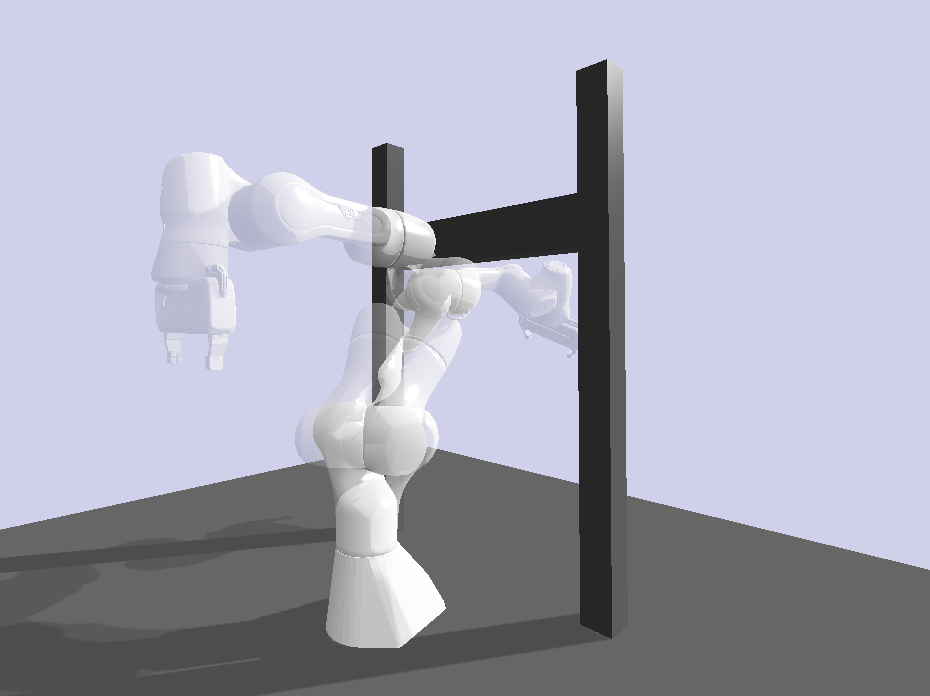}
    \label{fig:manip:limbo}
    }
    
    \vspace{0.5em}  

    \subfloat[Sandwich]{              \includegraphics[width=0.323\textwidth,clip]{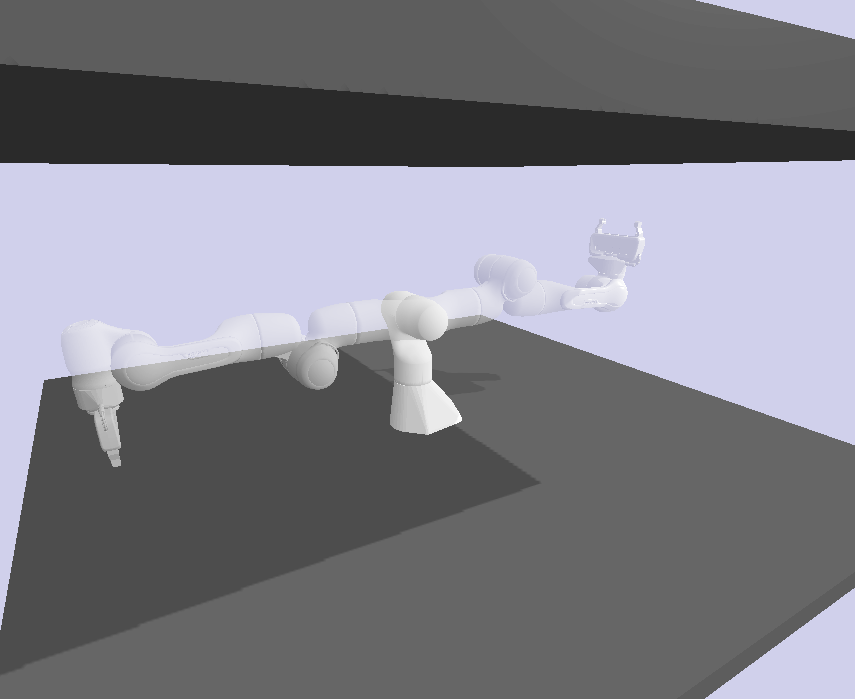}
    \label{fig:manip:sandwich}
    }
    \subfloat[Room]{              \includegraphics[width=0.5\textwidth,clip]{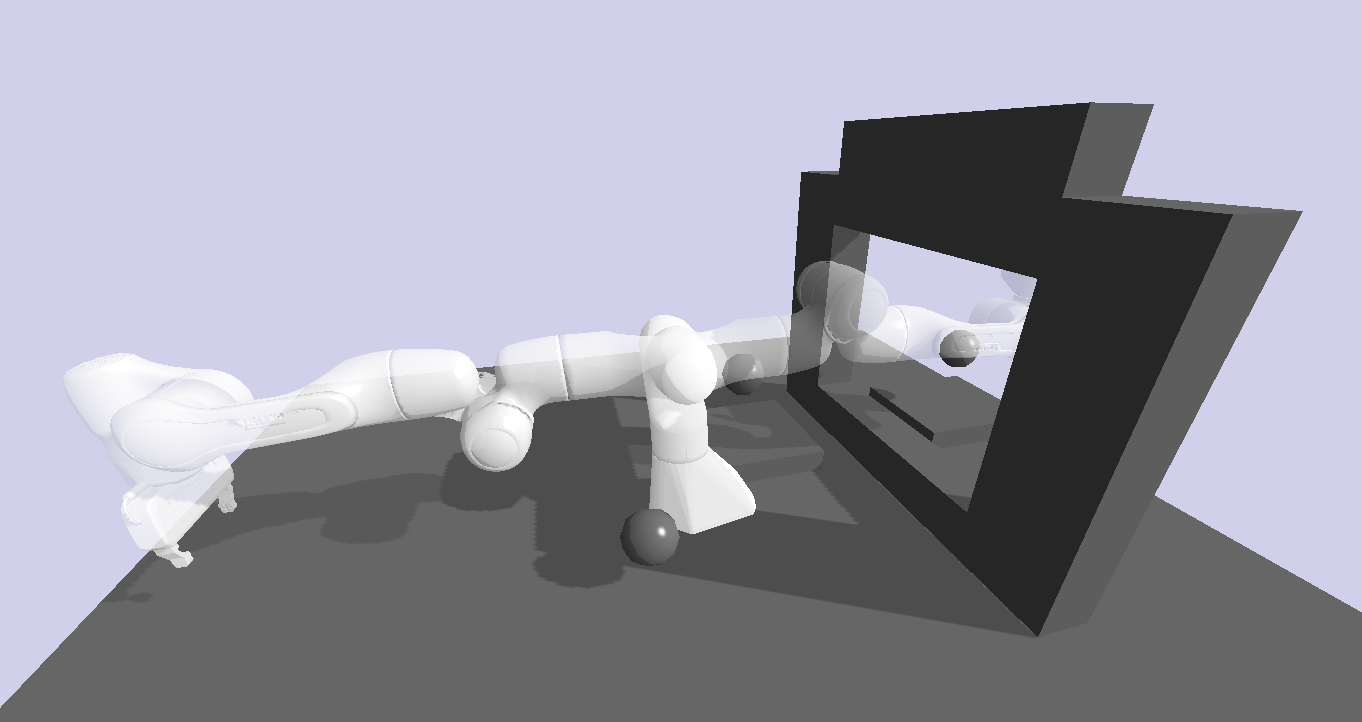}
    \label{fig:manip:room}
    }
    \caption{Manipulator tests in the VAMP environment, depicting the start and target configurations. Window: The arm moves through a window frame.
    Sleeve: The arm moves through an elongated window frame.
    Limbo: The arm ducks under a pole. 
    Sandwich: The arm is being constrained to move in a low-ceiling room.
    Room: The arm moves through a low window frame, while dodging several obstacles on the floor.}
    \label{fig:manip_tests}
\end{figure*}

\begin{table*}[]
\centering
\caption{Manipulator test results, where \rnd results are averaged over 100 runs. Running times are reported in milliseconds.}
\label{table:manip_tests}
\begin{tabular}{ccccc|ccc|ccc|}
\cline{3-11}
\cellcolor[HTML]{FFFFFF} & \multicolumn{1}{c|}{\cellcolor[HTML]{FFFFFF}} & \multicolumn{3}{c|}{\cellcolor[HTML]{ECF4FF}1x samples} & \multicolumn{3}{c|}{\cellcolor[HTML]{ECF4FF}10x samples} & \multicolumn{3}{c|}{\cellcolor[HTML]{ECF4FF}100x samples} \\ \hline
\multicolumn{1}{|c|}{Scenario} & \multicolumn{2}{c}{\cellcolor[HTML]{EFEFEF}Total (ms)} & \cellcolor[HTML]{EFEFEF}Length (r) & \cellcolor[HTML]{EFEFEF}Success (\%) & \cellcolor[HTML]{EFEFEF}Total (ms) & \cellcolor[HTML]{EFEFEF}Length (r) & \cellcolor[HTML]{EFEFEF}Success (\%) & \cellcolor[HTML]{EFEFEF}Total (ms) & \cellcolor[HTML]{EFEFEF}Length (r) & \cellcolor[HTML]{EFEFEF}Success (\%) \\ \cline{2-11} 
\multicolumn{1}{|c|}{} & \multicolumn{1}{c|}{\cellcolor[HTML]{FFFFC7}\begin{tabular}[c]{@{}c@{}}$\AN$ \\ \loc\end{tabular}} & \cellcolor[HTML]{FFFFC7}\begin{tabular}[c]{@{}c@{}}\rnd\\ \glo\end{tabular} & \cellcolor[HTML]{FFFFC7}\begin{tabular}[c]{@{}c@{}}\rnd\\ \glo\end{tabular} & \cellcolor[HTML]{FFFFC7}\begin{tabular}[c]{@{}c@{}}\rnd\\ \glo\end{tabular} & \cellcolor[HTML]{FFFFC7}\begin{tabular}[c]{@{}c@{}}\rnd\\ \glo\end{tabular} & \cellcolor[HTML]{FFFFC7}\begin{tabular}[c]{@{}c@{}}\rnd\\ \glo\end{tabular} & \cellcolor[HTML]{FFFFC7}\begin{tabular}[c]{@{}c@{}}\rnd\\ \glo\end{tabular} & \cellcolor[HTML]{FFFFC7}\begin{tabular}[c]{@{}c@{}}\rnd\\ \glo\end{tabular} & \cellcolor[HTML]{FFFFC7}\begin{tabular}[c]{@{}c@{}}\rnd\\ \glo\end{tabular} & \cellcolor[HTML]{FFFFC7}\begin{tabular}[c]{@{}c@{}}\rnd\\ \glo\end{tabular} \\ \hline
\multicolumn{1}{|c|}{\cellcolor[HTML]{ECF4FF}window} & \multicolumn{1}{c|}{3.6} & 2.9 & 1.3 & 13 & 43.31 & 1.13 & 34 & 526.2 & 1.06 & 94 \\
\multicolumn{1}{|c|}{\cellcolor[HTML]{ECF4FF}sleeve} & \multicolumn{1}{c|}{160.2} & 108.9 & 0.77 & 26 & 1510.8 & 0.69 & 51 & 18800.2 & 0.66 & 100 \\
\multicolumn{1}{|c|}{\cellcolor[HTML]{ECF4FF}limbo} & \multicolumn{1}{c|}{82.2} & 675.7 & 0.7 & 96 & - & - & - & - & - & - \\
\multicolumn{1}{|c|}{\cellcolor[HTML]{ECF4FF}sandwich} & \multicolumn{1}{c|}{9.7} & 5.1 & 1.6 & 100 & - & - & - & - & - & - \\
\multicolumn{1}{|c|}{\cellcolor[HTML]{ECF4FF}room} & \multicolumn{1}{c|}{9718} & 14823 & 1.0 & 9 & - & - & - & - & - & - \\ \hline
\end{tabular}
\end{table*}

\section{Limitations and future work}\label{sec:future}
We leveraged foundational results in lattice theory to develop a theoretical framework for generating efficient sample sets for motion planning that endow their planners with finite-time guarantees, which are vastly superior to previous asymptotic properties. We demonstrated the practical potential of the framework in challenging motion-planning scenarios wherein our $A_d^*$-based sampling procedures lead to substantial speeds over previous methods (deterministic and random).
Below, we discuss several limitations of our work, which motivate further research directions.

\niceparagraph{Multi-resolution search.} The choice of the sampling parameters $\delta$ and $\epsilon$ can significantly impact the sample and collision-check complexity of the sampling distribution, and hence the planner's performance. In this work, we selected those values according to a rule of thumb, which might not be generally applicable to broader problem settings, or lead to feasible solutions.  We plan on exploring algorithmic extension, which would allow the automatic selection of those parameters during runtime. One promising direction is exploiting multi-resolution search~\cite{saxena2022amra,FuSSA23}, which allows for adjusting sampling densities according to the space's structure. This has the potential of not only simplifying the usage of our sample sets but also significantly improving the planner's performance.  

\niceparagraph{Non-Euclidean systems.} Our current work focuses on geometric systems (whose state space is Euclidean)  and path-length cost functions, which limits its applicability. In the future, we plan to extend our work to more general settings, which entails at least two challenges. First, the adaptation of our sampling distributions to non-Euclidean spaces is non-trivial due to the heterogeneity of the space coordinates, e.g., position and rotation components in SE(3)~\cite{yershova2004deterministic}, or state derivatives in dynamical systems~\cite{janson2018deterministic}. Second, those settings call for more general cost functions, which requires revising the concept of \decomps and its relation to geometric coverage (particularly, Lemma~\ref{lem:cover}). We plan to address those challenges incrementally. As a first step, we plan on generalizing our theory to linear systems with quadratic costs, where we believe that the structure of LQR controllers~\cite{liberzon2011calculus} can be useful. More general approaches could be obtained from exploiting the locally-linear structure of nonlinear systems using differential geometry~\cite{BulloLewis04}.

\niceparagraph{Incremental sampling.} Uniform random sampling is naturally amenable for densification, i.e., the incremental addition of new sampling points, making them applicable for anytime planner like RRT or RRT*~\cite{LaVKuf01,karaman2011sampling}. In contrast, lattice-based sampling requires the introduction of large batches, which currently limits their applicability. One way to bridge this gap could be generalizing Halton sequences~\cite{kuipers2012uniform}, a popular method for deterministic incremental low-discrepancy sampling, to the $A^*_d$ lattice. Halton sequences ensue from transforming points from the standard grid lattice $\dZ^d$, and it would be interesting to understand the structure emerging from feeding those points through the generator matrix of the lattice $A^*_d$. Another interesting direction is incrementally introducing the elements of lattice-based samples according to a random permutation.

\section*{Acknowledgments}
This research was partly supported by Israel's Ministry of Innovation, Science and Technology,  Ministry of Transport,  National Road Safety Authority, and Netivei Israel, Grant No. 0006730.

We thank Rom Pinchasi, Roy Meshulam, Dan Halperin, and Oren Salzman, and Steve LaValle for fruitful discussions. We also thank Ido Jacobi, Roy Steinberg and Yaniv Hasidoff for comments on the manuscript. 

\ifincludeappendix
\appendix
  \subsection{Additional details for Theorem~1}\label{app:decomp_lattice_proof}
We provide details that were omitted in the proof of Theorem~1. First, we derive the matrix $P$.
Given the planes $H$ and  $H_0:=\{x_{d+1}=0\}$, we wish  to find a plane $H_{ref}$ that is half-way (angle-wise) between $H$ and $H_0$. This would allow to reflect points in $H$ onto $H_0$ through $H_{ref}$ where the reflection is achieved using the Householder matrix $P:=I-2\hat{n}_{ref}\hat{n}^t_{ref}$, where $\hat{n}_{ref}\in \dR^{d+1}$ is the normal of $H_{ref}$~\cite{householder1958unitary}. That is, we reflect a lattice point $p\in \dR^{d+1}$ by computing the value \({p_{\text{reflected}}=P\cdot p}\).

Next, we show that the normal 
\begin{equation*}
    \hat{n}_{ref}:=\frac{1}{\sqrt{2-\frac{2}{\sqrt{d+1}}}}\cdot \left(-\tfrac{1}{\sqrt{d+1}},\dots,-\tfrac{1}{\sqrt{d+1}},1-\tfrac{1}{\sqrt{d+1}}\right)
\end{equation*}
satisfies those requirements.\footnote{We obtained the expression for $\hat{n}_{ref}$ by first considering $d=2$, where the task is more tangible, and then generalizing to higher dimensions.}  Consider the Householder matrix 
\begin{align}
\label{eq:reflection}
 P&= 
 I-2\hat{n}_{ref}\hat{n}^t_{ref}\nonumber\\
 &=\begin{pNiceArray}{cw{c}{1cm}c|c}[margin]
            \Block{3-3}<\Large>{I_d - \frac{1}{D-\sqrt{D}}\mathds{1}} 
            & & & \dfrac{1}{\sqrt{D}} \\
            & & & \Vdots \\
            & & & \dfrac{1}{\sqrt{D}} \\
            \hline
            \dfrac{1}{\sqrt{D}} & \dots& \dfrac{1}{\sqrt{D}} & \dfrac{1}{\sqrt{D}}
    \end{pNiceArray},
\end{align}
where $D:=d+1$, $I_d$ is an $d\times d$ identity matrix, and $\mathds{1}$ is the $d\times d$ matrix with $1$s in all its entries. 

Consider a point $p\in A^*_d$. Next, we show that it is reflected onto the plane  $H_0$, i.e., for $v=P\cdot p$, we get $v_{d+1}=0$. To do that, we move to the basis of the integer vector space, and show that for all $1\leq i\leq d$, taking the base element $e_i=(0,\dots,1,\dots,0)$, the $(d+1)$th element of $v=PG^t\cdot e_i$ (i.e., using the generator and then the reflector) is zero. First, for all $i<d$ it holds that 
    \begin{align*}
        PG^t\cdot e_i=P\cdot
        \begin{pmatrix}
        1 &  0&  \dots&  0& -1& 0 &\dots& 0
        \end{pmatrix}^t.
    \end{align*}
Now, considering that the elements of the final row of $P$ are all equal to $1/\sqrt{D}$, we obtain a zero in the $(d+1)$th dimension. It remains to calculate the expression resulting from multiplying with $e_d$:
        \begin{align*}
        PG^t\cdot e_d=P\cdot
        \begin{pmatrix}
        -\frac{D-1}{D} &  \frac{1}{D}&  \dots&  \frac{1}{D}
        \end{pmatrix}^t.
    \end{align*}
    Looking specifically at the last element, we see that it is equal to     \begin{align*}
        \frac{1}{\sqrt{D}}\cdot\frac{1-D}{D} + (D-1)\frac{1}{D}\frac{1}{\sqrt{D}}=\frac{1-D+D-1}{D\sqrt{D}}=0.
    \end{align*}

    That is, by applying the transformation $P$ on the lattice points, we reflect them onto the $x_{d+1}=0$ plane. It remains to get rid of the $(d+1)$th dimension. This is accomplished by the mapping
\begin{align*}
        E=
        \begin{pmatrix}
            1 & 0 & \dots & 0 & 0 \\
            0 & 1 & \dots & 0 & 0 \\
            \vdots & \vdots & \ddots & \vdots & 0 \\
            0 & 0 & \dots & 1 & 0
        \end{pmatrix}_{d\times(d+1)}.
    \end{align*}
    
It remains to compute the  explicit embedding $T(g):=EPG^t(g)$, for $g\in \dZ^d$. We first calculate 
    \begin{align*}
        \left(EP\right)^t=
        \begin{pNiceArray}{cw{c}{1cm}c}[margin]
            \Block{3-3}<\Large>{I_d - \frac{1}{D-\sqrt{D}}\mathds{1}} 
            & &  \\
            & &  \\
            & &  \\
            \hline
            \dfrac{1}{\sqrt{D}} & \dots & \dfrac{1}{\sqrt{D}}
        \end{pNiceArray}_{d\times(d+1)}.
    \end{align*}
 Next, it can be shown that
    \begin{align}
        T^t&=G\left(EP\right)^t\nonumber\\
        &=\begin{pmatrix}
            1 & -1 &  0  & \dots & 0 \\
            1 & 0  &  -1 & \dots & 0 \\
            \vdots & \vdots  &  \vdots  & \ddots & \vdots \\
            1 & 0  &  0  & \dots & -1 \\
            \frac{1}{D - \sqrt{D}} - 1 & \frac{1}{D - \sqrt{D}} & \frac{1}{D - \sqrt{D}} & \dots & \frac{1}{D - \sqrt{D}}
        \end{pmatrix}_{d\times(d+1)}\!\!\!\!\!\!.
    \end{align}


\subsection{Additional details for Theorem~3}\label{app:CC}
We provide details omitted from the main body of the text. 
We start with a simplified derivation of a single annulus, which would inform the more advanced construction. Fix ${0<r_1<
  r^*}$ forced it to a single line, and define $\btheta_{r'} := \frac{r'}{{\beta^*}}f_\Lambda$, and observe that 
\begin{align}
  CC_\X&\leq  r^*\cdot
\left|\X\cap (\B_{r^*}\setminus \B_{r_1})\right| + r_1\cdot
\left|\X\cap \B_{r_1}\right|\nonumber \\ & = r^*\left(|\X\cap \B_{r^*}|-|\X \cap \B_{r_1}|\right) + r_1 \left|\X\cap \B_{r_1}\right| \nonumber\\
& = r^*|\X\cap \B_{r^*}|+ (r_1-r^*) |\X\cap \B_{r_1}| \nonumber \\  
  & = r^*\frac{\partial(B_1)}{\sqrt{\det(\Lambda)}}\btheta^d_{r^*} +r^* P_d(\btheta_{r^*}) + (r_1-r^*) \frac{\partial(B_1)}{\sqrt{\det(\Lambda)}}\btheta^d_{r_1}\nonumber\\& + (r_1-r^*) P_d(\btheta_{r_1}) \nonumber
\\
& = \frac{\partial(B_1)}{\sqrt{\det(\Lambda)}}\theta^d\left({r^*}^{d+1}+{r_1}^{d+1}-r^*{r_1}^{d}\right)\nonumber\\&+ r P_d(\btheta_{r^*}) + (r_1-r^*) P_d(\btheta_{r_1})\nonumber \\ & = \frac{\partial(B_1)}{\sqrt{\det(\Lambda)}}\theta^d\left({r^*}^{d+1}+{r_1}^{d+1}-r^*{r_1}^{d}\right)+ r^* P_d(\btheta_{r^*}), \label{eq:CC1}
\end{align}
where the sample complexity bound in Equation~(5) is used. For simplicity, we bound throughout the error term with $r^* P_d(\btheta_{r^*})$.
Next, we optimize the value $r_1$ to minimize the expression in Equation~\eqref{eq:CC1}.

Consider the function $f(r_1)={r^*}^{d+1}-{r^*} r^d_1 + {r^*}^{d+1}_1$. We look for the minimum of $f(r_1)$ by requiring that
\begin{align*}
            f'(r_1)=-{r^*} dr_1^{d-1}+(d+1)r_1^d=0,
\end{align*}
which yields the value $r'_1:=\frac{d}{d+1}{r^*}$. This value is  a minimum since
\begin{align*}
 f^{(2)}(r_1)|_{r'_1}=&\left(-{r^*}(d-1)r_1^{d-2}+d(d+1)r_1^{d-1}\right)|_{r_1'}\\
        =&{r^*}^{d-1}\left(\frac{d^d}{(d+1)^{d-2}}-\frac{d^{d-2}(d-1)}{(d+1)^{d-2}}\right)\\
        =&{r^*}^{d-1}d^{d-2}\frac{d^2-d+1}{(d+1)^{d-2}},
    \end{align*}
    and we know that $d^2-d+1>0$ for all $d\geq 2$.
    
Now, we apply the above line of reasoning in a recursive manner by considering a sequence of $k+1\geq 2$ radii ${0<r_k<\ldots<r_0={r^*}}$ where $r_i:=\td^i r^*$, where $\td:=\frac{d}{d+1}$. This leads to the bound
\begin{align}
\label{eq:cc_eval_app}
CC_\X&\leq \sum_{i=0}^{k-1}r_i |\X\cap (\B_{r_i}\setminus \B_{r_{i+1}})| + r_k|\X\cap \B_{r_k}|\nonumber\\
  &= \frac{\partial(B_1)}{\sqrt{\det(\Lambda)}} \left(\underbrace{{r^*} \btheta^d_{r^*} + \sum_{i=1}^k(r_i-r_{i-1}) \btheta^d_{r_i}}_{:=\gamma}\right) + {r^*} P_d(\btheta_{r^*}).
\end{align}

We show that 
\[\gamma:=r \btheta^d_{r^*} + \sum_{i=1}^k(r_i-r_{i-1}) \btheta^d_{r_i}= {r^*} \btheta^d_{r^*} \left(1 - \frac{\xi^{d+2} - \xi}{ d\xi - (d+1)}\right),\]
where $r_i=\td^i {r^*},\td:=\frac{d}{d+1}, \btheta_{r_i}= r_i\frac{\btheta_{r^*}}{r^*}, k=d,$ and $\xi:=\td^d=\left(\frac{d}{d+1}\right)^d$. In particular,
\begin{align}
  \gamma &={r^*} \btheta^d_{r^*} + \sum_{i=1}^k(r_i-r_{i-1}) r_i^d\frac{\btheta^d_{r^*}}{{r^*}^d} \nonumber\\
  & = {r^*} \btheta^d_{r^*} + \sum_{i=1}^k{r^*}\td^{i-1}(\td-1) \td^{di} {r^*}^d\frac{\btheta^d_{r^*}}{{r^*}^d} \nonumber\\ 
  &=  {r^*} \btheta^d_{r^*} + \sum_{i=1}^k{r^*} (\td-1) \td^{di+ i -1} \btheta^d_{r^*} \nonumber\\ 
  &=   {r^*} \btheta^d_{r^*} \left(1 + \sum_{i=1}^k (\td-1) \td^{di+ i -1} \right)\nonumber\\
  &= {r^*} \btheta^d_{r^*} \left(1 + \frac{\td-1}{\td}\sum_{i=1}^k \td^{(d+1)i} \right)\nonumber\\
  &= {r^*} \btheta^d_{r^*} \left(1 + \frac{\td-1}{\td}\frac{\left(\td^{d+1}\right)^{k+1} - \td^{d+1}}{\td^{d+1} - 1} \right)\nonumber\\
  &= {r^*} \btheta^d_{r^*} \left(1 + \td^d(\td-1)\frac{\left(\td^{d+1}\right)^k - 1}{\td^{d+1} - 1} \right).\nonumber\\
\end{align}

Taking $k=d$ results in $r_k=(\frac{d}{d+1})^d {r^*}\approx\frac{1}{e}{r^*}$. 
To use the sample set analysis, we need a large enough $r$ value, so assuming the original $r$ is large enough, we can deduce safely that $\frac{r}{e}$ is also large enough. 
Notice also that $\td - 1 = \frac{-1}{d+1}$, and thus $(d+1)\td=d$, so returning to our expression, and substituting $\xi:=\td^d=\left(\frac{d}{d+1}\right)^d$, we obtain 
\begin{align*}
    \gamma&={r^*} \btheta^d_{r^*} \left(1 - \frac{\td^d\left(\td^{d(d+1)} - 1\right)}{(d+1)(\td^{d+1} - 1)}\right)\\
    &= {r^*} \btheta^d_{r^*} \left(1 - \frac{\xi\left(\xi^{d+1} - 1\right)}{(d+1)(\td \xi - 1)}\right) 
    \\&= {r^*} \btheta^d_{r^*} \left(1 - \frac{\xi^{d+2} - \xi}{ d\xi - (d+1)}\right)
    :={r^*} \btheta^d_{r^*}\zeta.
\end{align*}

We finish this section with a plot of the value $\gamma$ in Figure~\ref{fig:annuli_bound:app}.

\subsection{Additional experimental results}
Additional scenarios, which were omitted from the main paper, are given in Figure~\ref{fig:scenarios:app}. Extended results comparing lattice-based samples using the \loc algorithm are provided in Table~\ref{tbl:lattice_comparison:app}.

\begin{figure}[H]
\centering  
\includegraphics[width=0.9\columnwidth]{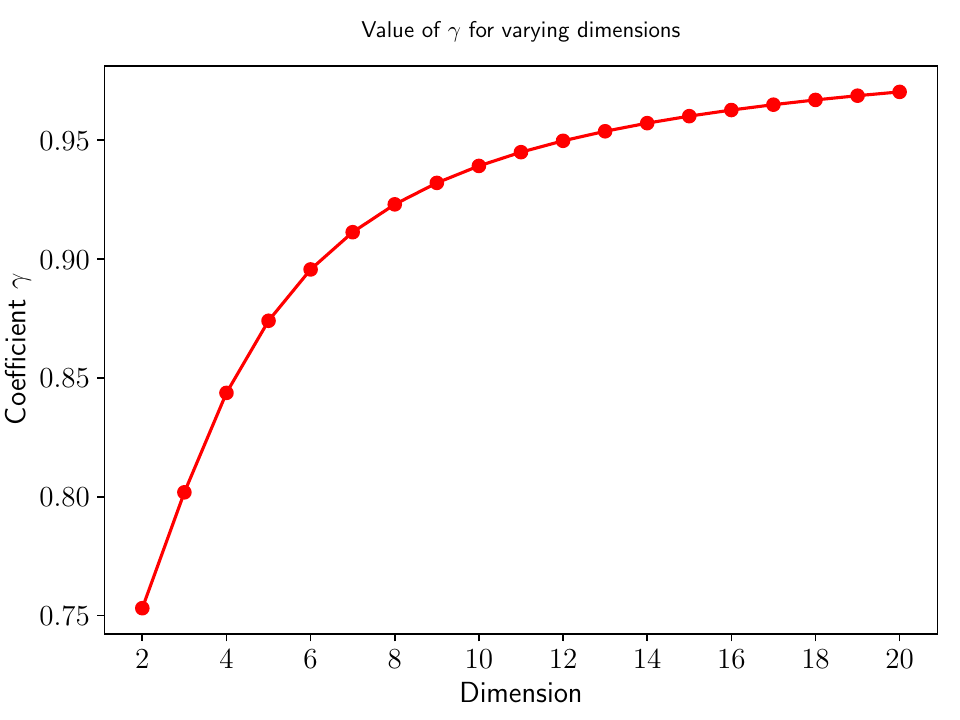}
\caption{Plot of the improvement factor $\gamma$.}
\label{fig:annuli_bound:app}
\end{figure}

\begin{figure*}[thb]
  \centering
\subfloat[Zigzag-bypass (short)]{\includegraphics[width=1.15\columnwidth,clip]{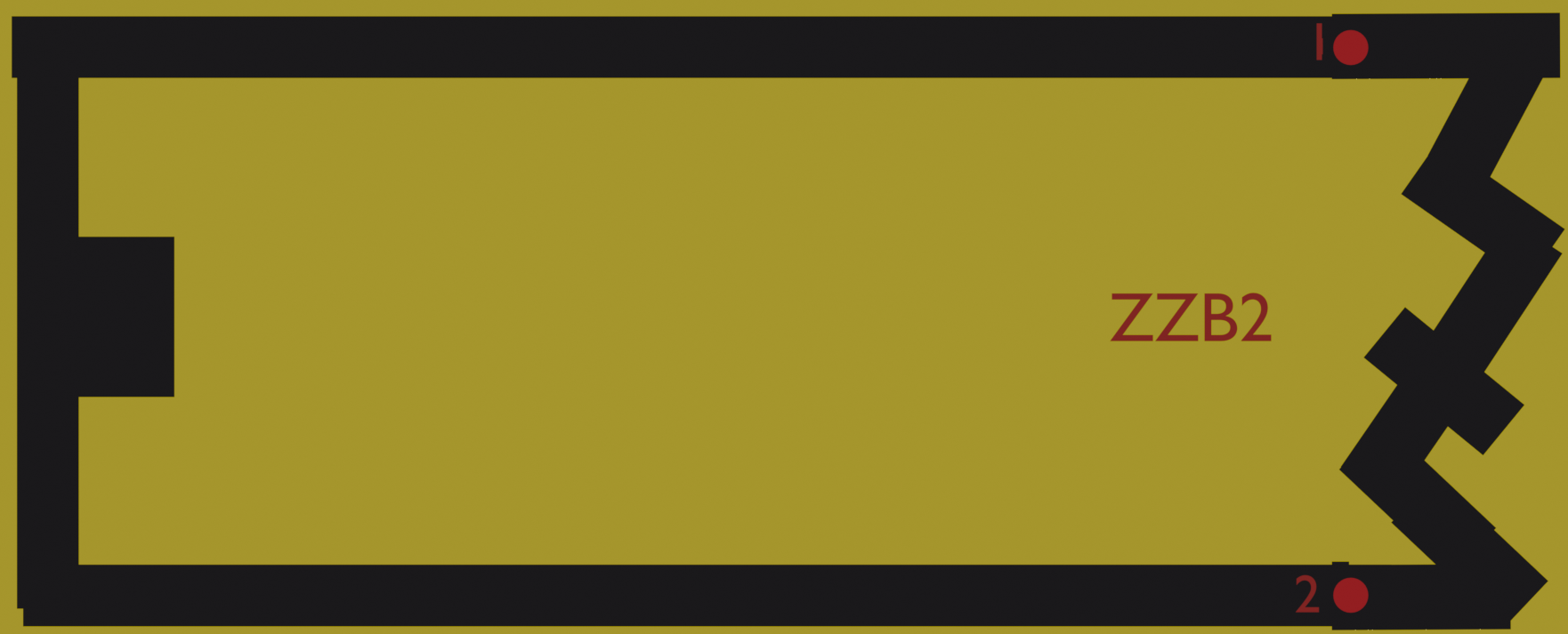}
    }
\subfloat[Narrow (more scenarios)]{\includegraphics[width=0.465\columnwidth,clip]{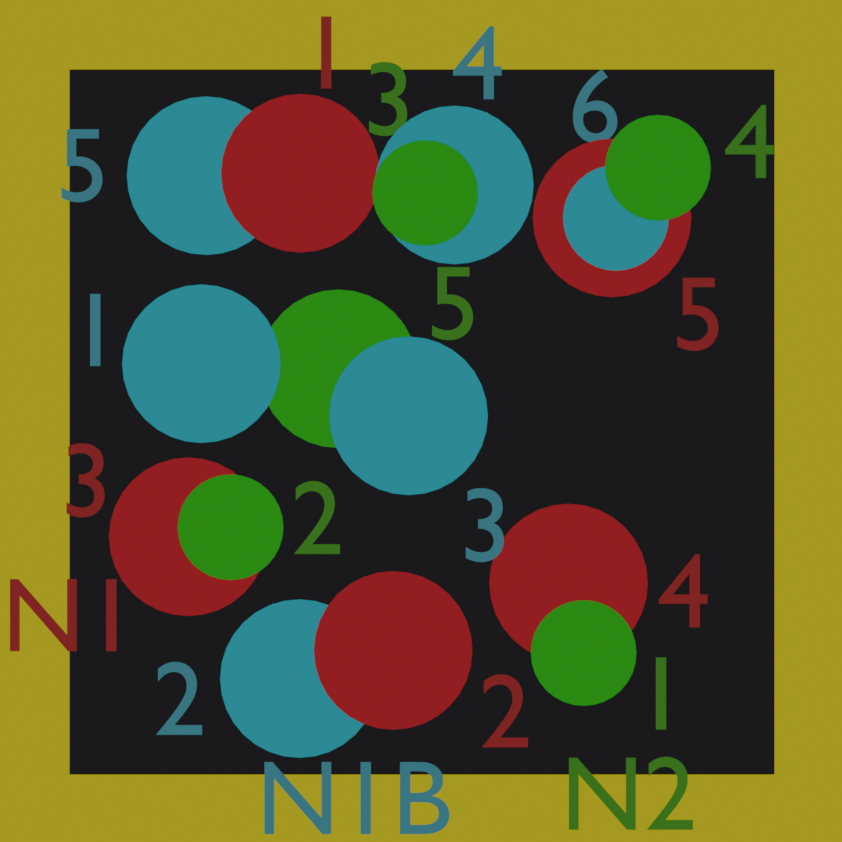}
    }
  \caption{Additional scenarios used in the experiments. The scenario ZZB3, which is not illustrated here, is similar to ZZB2, only that the horizontal hallways are twice as long.}
  \label{fig:scenarios:app}
\end{figure*}

\begin{figure*}[tbh]
  \centering
\subfloat{\includegraphics[width=\columnwidth,clip]{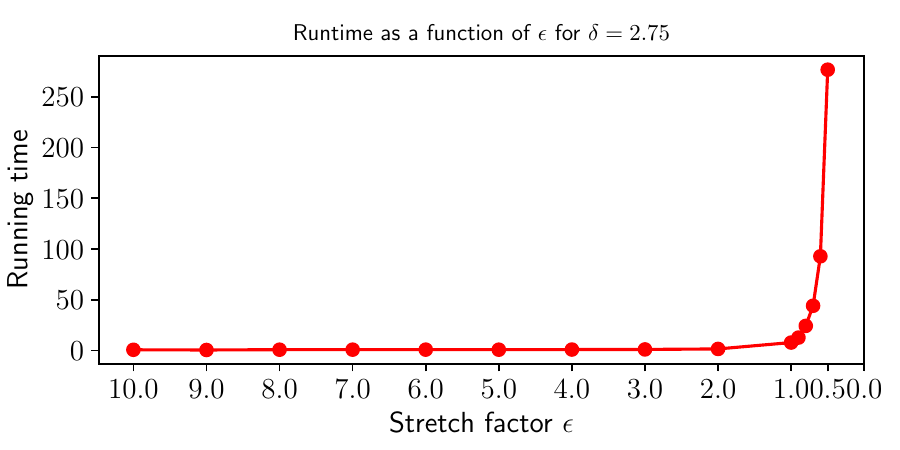}
    }
    \subfloat{\includegraphics[width=\columnwidth,clip]{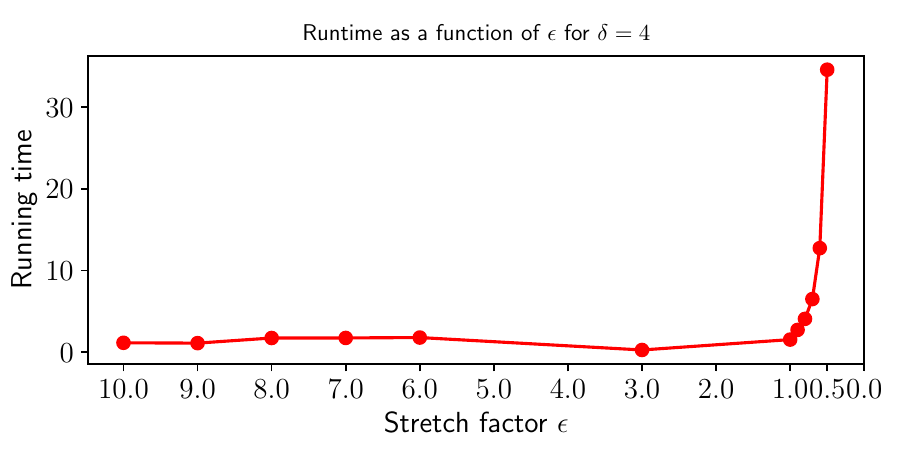}
    }
    \newline
\subfloat{\includegraphics[width=\columnwidth,clip]{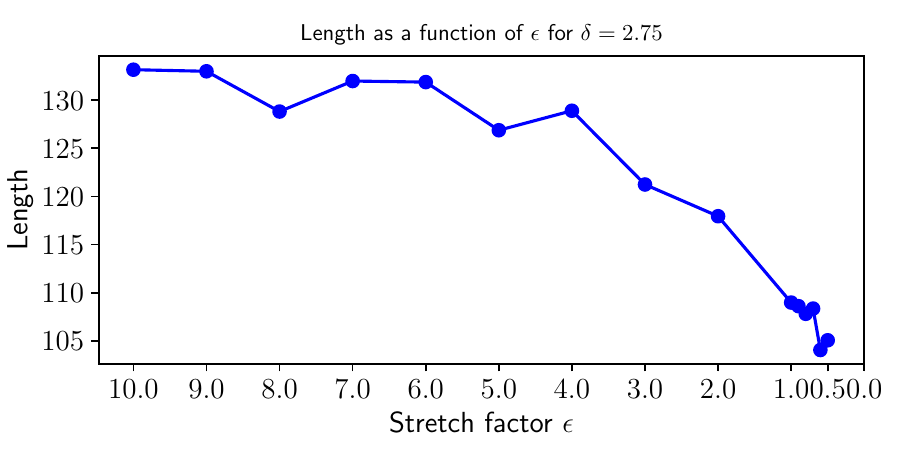}
    }
\subfloat{\includegraphics[width=\columnwidth,clip]{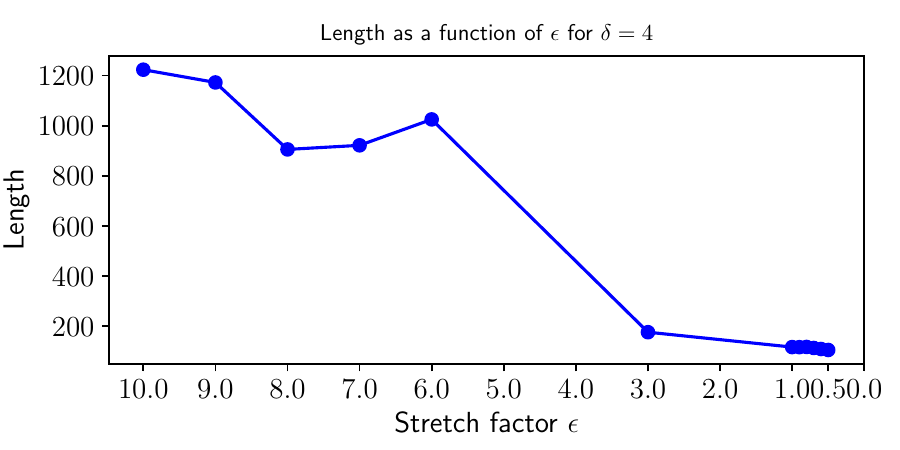}
    }
      \caption{Effect of the parameters $\delta,\epsilon$ on the performance of \loc with $\XA$ for $\delta=2.5$ (left) and $\delta=4$ (right). We report the running time (top) and solution length (bottom). The absence of data points for the parameters $\delta=4, \eps\in \{2,4,5\}$ indicates a solution failure.  
  }
  \label{fig:parameters:app}
\end{figure*}

\begin{table}[tbh]
\caption{Extended comparison of running time and solution length using lattices-based sample sets (where the underlying lattice is denoted in the table) in the iA*-\loc algorithm. Solution length is normalized with respect to the length obtained using $\XA$. }
\centering
\label{tbl:lattice_comparison:app}
\begin{tabular}{|c||ccc|cc|}
\hline
 & \multicolumn{3}{c|}{\cellcolor[HTML]{EFEFEF} Time (s)} & \multicolumn{2}{c|}{\cellcolor[HTML]{EFEFEF} Length (r)} \\ \cline{2-6} 
\multirow{-2}{*}{\begin{tabular}[c]{@{}c@{}}Scenario\\ (robot \#)\end{tabular}} & \multicolumn{1}{c|}{\cellcolor[HTML]{FFFFC7}$\ZN$} & \multicolumn{1}{c|}{\cellcolor[HTML]{FFFFC7}$\DN$} & \cellcolor[HTML]{FFFFC7}$\AN$ & \multicolumn{1}{c|}{\cellcolor[HTML]{FFFFC7}$\ZN$} & \cellcolor[HTML]{FFFFC7}$\DN$ \\ \hline \hline
\cellcolor[HTML]{ECF4FF}N4(2) & \multicolumn{1}{c|}{0.00} & \multicolumn{1}{c|}{0.00} & 0.00 & \multicolumn{1}{c|}{0.62} & 0.74 \\
\cellcolor[HTML]{ECF4FF}N1(5) & \multicolumn{1}{c|}{165.35} & \multicolumn{1}{c|}{4.59} & 0.36 & \multicolumn{1}{c|}{0.65} & 0.79 \\
\cellcolor[HTML]{ECF4FF}N2(5) & \multicolumn{1}{c|}{62.68} & \multicolumn{1}{c|}{1.81} & 0.41 & \multicolumn{1}{c|}{0.85} & 0.95 \\
\cellcolor[HTML]{ECF4FF}N3(5) & \multicolumn{1}{c|}{142.27} & \multicolumn{1}{c|}{2.91} & 0.59 & \multicolumn{1}{c|}{0.65} & 0.87 \\
\cellcolor[HTML]{ECF4FF}N5(5) & \multicolumn{1}{c|}{dnf} & \multicolumn{1}{c|}{4.82} & 3.32 & \multicolumn{1}{c|}{dnf} & 0.82 \\
\cellcolor[HTML]{ECF4FF}N1B(6) & \multicolumn{1}{c|}{dnf} & \multicolumn{1}{c|}{328.30} & 15.08 & \multicolumn{1}{c|}{dnf} & 0.89 \\ \hline
\cellcolor[HTML]{ECF4FF}BT4(2) & \multicolumn{1}{c|}{0.04} & \multicolumn{1}{c|}{0.01} & 0.01 & \multicolumn{1}{c|}{0.69} & 0.85 \\
\cellcolor[HTML]{ECF4FF}BT10(2) & \multicolumn{1}{c|}{-} & \multicolumn{1}{c|}{1.20} & 0.30 & \multicolumn{1}{c|}{-} & 0.92 \\
\cellcolor[HTML]{ECF4FF}BT5(3) & \multicolumn{1}{c|}{0.54} & \multicolumn{1}{c|}{0.14} & 0.06 & \multicolumn{1}{c|}{0.38} & 0.51 \\
\cellcolor[HTML]{ECF4FF}BT1(4) & \multicolumn{1}{c|}{146.69} & \multicolumn{1}{c|}{50.81} & 3.51 & \multicolumn{1}{c|}{0.95} & 1.03 \\
\cellcolor[HTML]{ECF4FF}BT6(4) & \multicolumn{1}{c|}{dnf} & \multicolumn{1}{c|}{153.40} & 12.36 & \multicolumn{1}{c|}{dnf} & 1.04 \\
\cellcolor[HTML]{ECF4FF}BT7(4) & \multicolumn{1}{c|}{240.88} & \multicolumn{1}{c|}{5.38} & 4.36 & \multicolumn{1}{c|}{0.95} & 0.96 \\ \hline
\cellcolor[HTML]{ECF4FF}K1(3) & \multicolumn{1}{c|}{32.31} & \multicolumn{1}{c|}{4.97} & 1.37 & \multicolumn{1}{c|}{0.82} & 0.89 \\ \hline
\cellcolor[HTML]{ECF4FF}UM4(2) & \multicolumn{1}{c|}{-} & \multicolumn{1}{c|}{8.47} & 2.43 & \multicolumn{1}{c|}{-} & 0.90 \\
\cellcolor[HTML]{ECF4FF}UM1(3) & \multicolumn{1}{c|}{482.17} & \multicolumn{1}{c|}{25.15} & 6.68 & \multicolumn{1}{c|}{0.84} & 1.16 \\
\cellcolor[HTML]{ECF4FF}UM2(3) & \multicolumn{1}{c|}{13.35} & \multicolumn{1}{c|}{1.22} & 0.04 & \multicolumn{1}{c|}{1.04} & 1.52 \\
\cellcolor[HTML]{ECF4FF}UM4B3(3) & \multicolumn{1}{c|}{99.35} & \multicolumn{1}{c|}{1.03} & 0.66 & \multicolumn{1}{c|}{1.59} & 0.89 \\
\cellcolor[HTML]{ECF4FF}UM3(4) & \multicolumn{1}{c|}{236.31} & \multicolumn{1}{c|}{223.87} & 64.57 & \multicolumn{1}{c|}{0.63} & 0.97 \\ \hline
\cellcolor[HTML]{ECF4FF}ZZB1(2) & \multicolumn{1}{c|}{1.93} & \multicolumn{1}{c|}{1.01} & 0.44 & \multicolumn{1}{c|}{0.94} & 0.94 \\
\cellcolor[HTML]{ECF4FF}ZZB2(2) & \multicolumn{1}{c|}{2.91} & \multicolumn{1}{c|}{0.93} & 0.71 & \multicolumn{1}{c|}{0.94} & 0.94 \\
\cellcolor[HTML]{ECF4FF}ZZB3(2) & \multicolumn{1}{c|}{2.26} & \multicolumn{1}{c|}{0.84} & 0.47 & \multicolumn{1}{c|}{0.95} & 0.95 \\ \hline\end{tabular}
\end{table}

\begin{table*}[tbh]
\centering
\begin{tabular}{|c|cccl|ccl|cl|cl|}
\hline
 & \multicolumn{4}{c|}{\cellcolor[HTML]{EFEFEF} Total time (s)} & \multicolumn{3}{c|}{\cellcolor[HTML]{EFEFEF} Search time (s)} & \multicolumn{2}{c|}{\cellcolor[HTML]{EFEFEF}Length (r)} & \multicolumn{2}{c|}{\cellcolor[HTML]{EFEFEF}Success (\%)} \\ \cline{2-12} 
\multirow{-2}{*}{\begin{tabular}[c]{@{}c@{}}Scenario\\ (Robot \#)\end{tabular}} & \multicolumn{1}{c|}{\cellcolor[HTML]{FFFFC7}\begin{tabular}[c]{@{}c@{}}$\AN$\\ \loc\end{tabular}} & \multicolumn{1}{c|}{\cellcolor[HTML]{FFFFC7}\begin{tabular}[c]{@{}c@{}}$\AN$\\ \glo\end{tabular}} & \multicolumn{1}{c|}{\cellcolor[HTML]{FFFFC7}\begin{tabular}[c]{@{}c@{}}\rnd\\ \glo\end{tabular}} & \multicolumn{1}{c|}{\cellcolor[HTML]{FFFFC7}\begin{tabular}[c]{@{}c@{}}\rndm\\ \glo\end{tabular}} & \multicolumn{1}{c|}{\cellcolor[HTML]{FFFFC7}\begin{tabular}[c]{@{}c@{}}$\AN$\\ \glo\end{tabular}} & \multicolumn{1}{c|}{\cellcolor[HTML]{FFFFC7}\begin{tabular}[c]{@{}c@{}}\rnd\\ \glo\end{tabular}} & \multicolumn{1}{c|}{\cellcolor[HTML]{FFFFC7}\begin{tabular}[c]{@{}c@{}}\rndm\\ \glo\end{tabular}} & \multicolumn{1}{c|}{\cellcolor[HTML]{FFFFC7}\begin{tabular}[c]{@{}c@{}}\rnd\\ \glo\end{tabular}} & \multicolumn{1}{c|}{\cellcolor[HTML]{FFFFC7}\begin{tabular}[c]{@{}c@{}}\rndm\\ \glo\end{tabular}} & \multicolumn{1}{c|}{\cellcolor[HTML]{FFFFC7}\begin{tabular}[c]{@{}c@{}}\rnd\\ \glo\end{tabular}} & \multicolumn{1}{c|}{\cellcolor[HTML]{FFFFC7}\begin{tabular}[c]{@{}c@{}}\rndm\\ \glo\end{tabular}} \\ \hline
\cellcolor[HTML]{ECF4FF}N1(5) & \multicolumn{1}{c|}{0.36} & \multicolumn{1}{c|}{3.05} & \multicolumn{1}{c|}{4.16} & 3.40 & \multicolumn{1}{c|}{0.84} & \multicolumn{1}{c|}{3.37} & 2.59 & \multicolumn{1}{c|}{1.48} & 1.46 & \multicolumn{1}{c|}{80.00} & 90 \\
\cellcolor[HTML]{ECF4FF}N2(5) & \multicolumn{1}{c|}{0.41} & \multicolumn{1}{c|}{2.67} & \multicolumn{1}{c|}{2.74} & 4.28 & \multicolumn{1}{c|}{0.82} & \multicolumn{1}{c|}{2.11} & 3.62 & \multicolumn{1}{c|}{2.43} & 3.31 & \multicolumn{1}{c|}{65.00} & 95 \\
\cellcolor[HTML]{ECF4FF}N3(5) & \multicolumn{1}{c|}{0.59} & \multicolumn{1}{c|}{3.83} & \multicolumn{1}{c|}{5.44} & 4.22 & \multicolumn{1}{c|}{1.72} & \multicolumn{1}{c|}{4.65} & 3.39 & \multicolumn{1}{c|}{2.02} & 1.56 & \multicolumn{1}{c|}{85.00} & 85 \\
\cellcolor[HTML]{ECF4FF}N5(5) & \multicolumn{1}{c|}{3.32} & \multicolumn{1}{c|}{31.48} & \multicolumn{1}{c|}{23.42} & 26.19 & \multicolumn{1}{c|}{20.02} & \multicolumn{1}{c|}{18.62} & 21.14 & \multicolumn{1}{c|}{0.89} & 0.88 & \multicolumn{1}{c|}{100.00} & 100 \\ \hline
\cellcolor[HTML]{ECF4FF}BT9(2) & \multicolumn{1}{c|}{0.13} & \multicolumn{1}{c|}{0.13} & \multicolumn{1}{c|}{0.77} & 0.42 & \multicolumn{1}{c|}{0.13} & \multicolumn{1}{c|}{0.77} & 0.42 & \multicolumn{1}{c|}{1.10} & 1.41 & \multicolumn{1}{c|}{95.00} & 40 \\
\cellcolor[HTML]{ECF4FF}BT10(2) & \multicolumn{1}{c|}{0.30} & \multicolumn{1}{c|}{0.31} & \multicolumn{1}{c|}{1.16} & 0.46 & \multicolumn{1}{c|}{0.31} & \multicolumn{1}{c|}{1.16} & 0.46 & \multicolumn{1}{c|}{1.13} & 1.27 & \multicolumn{1}{c|}{95.00} & 75 \\
\cellcolor[HTML]{ECF4FF}BT1B(3) & \multicolumn{1}{c|}{34.83} & \multicolumn{1}{c|}{47.58} & \multicolumn{1}{c|}{118.27} & 62.88 & \multicolumn{1}{c|}{47.27} & \multicolumn{1}{c|}{118.11} & 62.70 & \multicolumn{1}{c|}{0.93} & 0.99 & \multicolumn{1}{c|}{100.00} & 100 \\
\cellcolor[HTML]{ECF4FF}BT2(3) & \multicolumn{1}{c|}{5.62} & \multicolumn{1}{c|}{7.08} & \multicolumn{1}{c|}{22.67} & 28.17 & \multicolumn{1}{c|}{6.97} & \multicolumn{1}{c|}{22.61} & 28.11 & \multicolumn{1}{c|}{0.93} & 1.12 & \multicolumn{1}{c|}{100.00} & 95 \\
\cellcolor[HTML]{ECF4FF}BT2B(3) & \multicolumn{1}{c|}{9.58} & \multicolumn{1}{c|}{14.40} & \multicolumn{1}{c|}{41.67} & 21.23 & \multicolumn{1}{c|}{14.13} & \multicolumn{1}{c|}{4.36} & 21.09 & \multicolumn{1}{c|}{1.00} & 1.05 & \multicolumn{1}{c|}{100.00} & 95 \\
\cellcolor[HTML]{ECF4FF}BT3(3) & \multicolumn{1}{c|}{5.38} & \multicolumn{1}{c|}{14.15} & \multicolumn{1}{c|}{62.22} & 32.10 & \multicolumn{1}{c|}{12.80} & \multicolumn{1}{c|}{61.54} & 31.39 & \multicolumn{1}{c|}{1.05} & 1.11 & \multicolumn{1}{c|}{100.00} & 100 \\
\cellcolor[HTML]{ECF4FF}BT5(3) & \multicolumn{1}{c|}{0.06} & \multicolumn{1}{c|}{0.38} & \multicolumn{1}{c|}{0.27} & 0.19 & \multicolumn{1}{c|}{0.12} & \multicolumn{1}{c|}{0.14} & 0.05 & \multicolumn{1}{c|}{0.57} & 0.57 & \multicolumn{1}{c|}{100.00} & 85 \\
\cellcolor[HTML]{ECF4FF}BT8(3) & \multicolumn{1}{c|}{12.17} & \multicolumn{1}{c|}{19.31} & \multicolumn{1}{c|}{169.32} & 79.52 & \multicolumn{1}{c|}{18.87} & \multicolumn{1}{c|}{169.12} & 79.31 & \multicolumn{1}{c|}{1.00} & 1.05 & \multicolumn{1}{c|}{100.00} & 100 \\
\cellcolor[HTML]{ECF4FF}BT8B(3) & \multicolumn{1}{c|}{3.17} & \multicolumn{1}{c|}{3.60} & \multicolumn{1}{c|}{41.63} & 24.23 & \multicolumn{1}{c|}{3.55} & \multicolumn{1}{c|}{41.60} & 24.20 & \multicolumn{1}{c|}{1.16} & 1.20 & \multicolumn{1}{c|}{100.00} & 100 \\
\cellcolor[HTML]{ECF4FF}BT11(3) & \multicolumn{1}{c|}{17.21} & \multicolumn{1}{c|}{35.21} & \multicolumn{1}{c|}{51.19} & 31.23 & \multicolumn{1}{c|}{34.34} & \multicolumn{1}{c|}{50.77} & 30.80 & \multicolumn{1}{c|}{0.88} & 0.95 & \multicolumn{1}{c|}{100.00} & 100 \\
\cellcolor[HTML]{ECF4FF}BT1(4) & \multicolumn{1}{c|}{3.51} & \multicolumn{1}{c|}{97.33} & \multicolumn{1}{c|}{63.69} & 68.39 & \multicolumn{1}{c|}{13.88} & \multicolumn{1}{c|}{18.83} & 22.18 & \multicolumn{1}{c|}{1.02} & 1.04 & \multicolumn{1}{c|}{100.00} & 100 \\
\cellcolor[HTML]{ECF4FF}BT6(4) & \multicolumn{1}{c|}{12.36} & \multicolumn{1}{c|}{124.16} & \multicolumn{1}{c|}{106.73} & 95.75 & \multicolumn{1}{c|}{43.87} & \multicolumn{1}{c|}{61.96} & 50.18 & \multicolumn{1}{c|}{1.01} & 1.03 & \multicolumn{1}{c|}{100.00} & 100 \\
\cellcolor[HTML]{ECF4FF}BT7(4) & \multicolumn{1}{c|}{4.36} & \multicolumn{1}{c|}{95.89} & \multicolumn{1}{c|}{60.65} & 56.06 & \multicolumn{1}{c|}{15.06} & \multicolumn{1}{c|}{15.24} & 9.29 & \multicolumn{1}{c|}{1.00} & 1.01 & \multicolumn{1}{c|}{100.00} & 100 \\ \hline
\cellcolor[HTML]{ECF4FF}UM4(2) & \multicolumn{1}{c|}{2.43} & \multicolumn{1}{c|}{2.93} & \multicolumn{1}{c|}{12.71} & 2.26 & \multicolumn{1}{c|}{2.90} & \multicolumn{1}{c|}{12.69} & 2.24 & \multicolumn{1}{c|}{0.96} & 1.41 & \multicolumn{1}{c|}{70.00} & 5 \\
\cellcolor[HTML]{ECF4FF}UM4B1(2) & \multicolumn{1}{c|}{4.81} & \multicolumn{1}{c|}{5.68} & \multicolumn{1}{c|}{17.38} & 5.03 & \multicolumn{1}{c|}{5.64} & \multicolumn{1}{c|}{17.35} & 5.01 & \multicolumn{1}{c|}{0.86} & 1.12 & \multicolumn{1}{c|}{90.00} & 45 \\
\cellcolor[HTML]{ECF4FF}UM1(3) & \multicolumn{1}{c|}{6.68} & \multicolumn{1}{c|}{58.62} & \multicolumn{1}{c|}{49.35} & 31.92 & \multicolumn{1}{c|}{47.14} & \multicolumn{1}{c|}{42.58} & 24.86 & \multicolumn{1}{c|}{0.98} & 1.09 & \multicolumn{1}{c|}{100.00} & 100 \\
\cellcolor[HTML]{ECF4FF}UM2(3) & \multicolumn{1}{c|}{0.04} & \multicolumn{1}{c|}{2.94} & \multicolumn{1}{c|}{4.49} & 3.44 & \multicolumn{1}{c|}{0.21} & \multicolumn{1}{c|}{2.97} & 1.83 & \multicolumn{1}{c|}{1.95} & 2.23 & \multicolumn{1}{c|}{75.00} & 40 \\
\cellcolor[HTML]{ECF4FF}UM5(3) & \multicolumn{1}{c|}{2.87} & \multicolumn{1}{c|}{31.79} & \multicolumn{1}{c|}{29.71} & 21.31 & \multicolumn{1}{c|}{19.01} & \multicolumn{1}{c|}{22.18} & 13.63 & \multicolumn{1}{c|}{1.05} & 1.26 & \multicolumn{1}{c|}{95.00} & 85 \\ \hline
\cellcolor[HTML]{ECF4FF}ZZB1(2) & \multicolumn{1}{c|}{0.44} & \multicolumn{1}{c|}{0.49} & \multicolumn{1}{c|}{10.44} & 0.47 & \multicolumn{1}{c|}{0.48} & \multicolumn{1}{c|}{10.43} & 0.45 & \multicolumn{1}{c|}{0.89} & 1.01 & \multicolumn{1}{c|}{100.00} & 5 \\
\cellcolor[HTML]{ECF4FF}ZZB2(2) & \multicolumn{1}{c|}{0.71} & \multicolumn{1}{c|}{2.51} & \multicolumn{1}{c|}{272.70} & 2.66 & \multicolumn{1}{c|}{1.22} & \multicolumn{1}{c|}{271.96} & 1.83 & \multicolumn{1}{c|}{0.89} & 2.62 & \multicolumn{1}{c|}{100.00} & 65 \\
\cellcolor[HTML]{ECF4FF}ZZB3(2) & \multicolumn{1}{c|}{0.47} & \multicolumn{1}{c|}{7.69} & \multicolumn{1}{c|}{341.84} & 5.33 & \multicolumn{1}{c|}{1.18} & \multicolumn{1}{c|}{338.15} & 1.45 & \multicolumn{1}{c|}{0.88} & 2.58 & \multicolumn{1}{c|}{100.00} & 65 \\ \hline
\end{tabular}
\caption{Comparison of running time and solution length between $\XA$ (using \loc and \glo) and uniform random sampling. For random sampling we report the average values in terms of running and solution length (the latter is given as normalized value with  respect to the solution length with $\XA$). }
\label{tbl:lattice_vs_random:app}
\end{table*}

\subsection{Comparison with Random Sampling}
Extended results where $\XA$-samples are compared with \rnd are given in Table~\ref{tbl:lattice_vs_random:app}. Here, we consider two versions of random sampling. The first version, denoted by \rnd, which is identical to the one considered in the main paper, uses random sampling together with the asymptotically optimal connection radius $r_{\textup{rnd}}(n)$, which is commonly used in practice. The second version, denoted by \rndm uses the radius as ${r^*}$ used for lattice-based sampling. The latter is used to further emphasize the inferiority of uniform random sampling as compared to $\XA$ due to identical parameters between $\XA$-\glo and \rndm-\glo (except for the sampling distribution). In particular, the move to \rndm  severely reduces the success rates in some of the scenarios.

Another addition in Table~\ref{tbl:lattice_vs_random:app} is the running time of the search algorithm (under "search time"). Recall that the total running time for \glo consists of the (i) construction of the sample set and the nearest-neighbor data structure and the (ii) running the search algorithm. Although both $\XA$-\glo and \rnd use the same number of samples, the construction time is usually larger in the former due to an additional step of constructing the lattice samples over the entire configuration space, which is currently implemented in a naive and unoptimized manner. In this sense, the comparison between $\XA$-\glo and \rnd is not entirely fair. Thus, we also report the running time of the search algorithm, which can be the computational bottleneck, especially for more complicated robot geometries where the collision-check operation is more expensive~\cite{KleinbortSH16}. Although the search time for $\XA$-\glo is usually lower for most scenarios, we argue that with more expensive collision checks, the advantage of lattice-based sample sets would be even more prominent.

\subsection{Effect of parameter choice}
We report the effect of the choice of the $\delta$ and $\eps$ parameters on solution length and running time for the \loc algorithm using $\XA$ sampling. We specifically focus on the ZZB3 scenario due to the availability of several homotopy classes for the solution, where each class has a different length and level of difficulty. For instance, in one class, the robots use the rightmost part of the workspace, which consists of a winding path, and exchange positions halfway between---leading to a relatively short solution length. In a second class, the robots use the long passage to the left, which consists of long straight-line motions and yields a significantly longer solution length.

We set $\delta\in \{2.75,4\}$ and report the solution length and running time in Figure~\ref{fig:parameters:app} for $\eps\in \{0.5,0.6,\ldots,1,2,\ldots,10\}$. Observe that for $\delta=2.75$ the planner obtains a low-length solution already for high $\eps$ values, whereas $\delta=4$ initially uncovers an inefficient solution length-wise but eventually settles on the better homotopy class when $\eps$ is reduced. From values of $\eps\leq 1$ the length relatively stabilizes, while the runtime jumps at several orders of magnitude, which highlights the exponential dependence of sample and collision-check complexity on the value $\eps$. Finding the middle-ground $\eps$ value is an important goal, which we leave for future work. 

Notice that the planner fails to find a solution for $\delta=4$ and $\eps\in\{2,4,5\}$. Due to our \decomps result, this implies that no $4$-clear solution exists. Despite this, the planner does succeed for some values of $\eps$, which suggests that our sufficient conditions for \decomps are not necessary. The success could also be explained by the specific arrangement of the points in $\XA$, which coincidentally induces a connected component via the second homotopy class for this specific scenario. It should also be noted that the sample set $\X_{\AN}^{4,\eps}$ can be viewed (via Lemma~1) as the sample set $\X_{\AN}^{2.5,\eps'}$ for $\eps$ small enough, which explains the success of the planner with  $\delta=4$ and smaller $\eps$ values. 


\fi

\bibliographystyle{plainnat}
\bibliography{references}

\end{document}


%
\title{Effective Sampling for Robot Motion Planning 
Through the Lens of Lattices \\ (Supplementary Material)}

\author{\authorblockN{Itai Panasoff and Kiril Solovey}
\authorblockA{Viterbi Faculty of Electrical and Computer Engineering\\
Technion--Israel Institute of Technology, Haifa, Israel\\
itaip@campus.technion.ac.il, kirilsol@technion.ac.il}
}



\maketitle

\newcommand{\ignore}[1]{}

\def\P{\mathcal{P}} \def\C{\mathcal{C}} \def\H{\mathcal{H}}
\def\F{\mathcal{F}} \def\U{\mathcal{U}} \def\L{\mathcal{L}}
\def\O{\mathcal{O}} \def\I{\mathcal{I}} \def\S{\mathcal{S}}
\def\G{\mathcal{G}} \def\Q{\mathcal{Q}} \def\I{\mathcal{I}}
\def\T{\mathcal{T}} \def\L{\mathcal{L}} \def\N{\mathcal{N}}
\def\V{\mathcal{V}} \def\B{\mathcal{B}} \def\D{\mathcal{D}}
\def\W{\mathcal{W}} \def\R{\mathcal{R}} \def\M{\mathcal{M}}
\def\X{\mathcal{X}} \def\A{\mathcal{A}} \def\Y{\mathcal{Y}}
\def\L{\mathcal{L}}

\def\dS{\mathbb{S}} \def\dT{\mathbb{T}} \def\dC{\mathbb{C}}
\def\dG{\mathbb{G}} \def\dD{\mathbb{D}} \def\dV{\mathbb{V}}
\def\dH{\mathbb{H}} \def\dN{\mathbb{N}} \def\dE{\mathbb{E}}
\def\dR{\mathbb{R}} \def\dM{\mathbb{M}} \def\dm{\mathbb{m}}
\def\dB{\mathbb{B}} \def\dI{\mathbb{I}} \def\dM{\mathbb{M}}
\def\dZ{\mathbb{Z}}

\def\open{\textup{OPEN}}
\def\opennext{\textup{OPEN}_\textup{next}}
\def\visited{\textup{VISITED}}

\def\E{\mathbf{E}} 

\def\eps{\varepsilon}
\def\epsilon{\varepsilon}

\def\itai#1{\textcolor{blue}{(\textbf{Itai:} #1})}
\def\kiril#1{\textcolor{red}{(\textbf{Kiril:} #1})}
\def\yaniv#1{\textcolor{cyan}{(\textbf{Yaniv:} #1})}
\def\ido#1{\textcolor{brown}{(\textbf{Ido:} #1})}
\def\roy#1{\textcolor{teal}{(\textbf{Roy:} #1})}

\def\niceparagraph#1{\vspace{5pt} \noindent \textbf{#1}}

\def\dt{\,\mathrm{d}t}
\def\dx{\,\mathrm{d}x}
\def\dy{\,\mathrm{d}y}
\def\drho{\,\mathrm{d}\rho}

\theoremstyle{definition}
\newtheorem{definition}{Definition}
\newtheorem{problem}{Problem}
\theoremstyle{theorem}
\newtheorem{lemma}{Lemma}
\newtheorem{cor}{Corollary}
\newtheorem{thm}{Theorem}
\newtheorem{claim}{Claim}

\newcommand{\prm}{{\tt PRM}\xspace}
\newcommand{\prmstar}{{\tt PRM}$^*$\xspace}
\newcommand{\rrt}{{\tt RRT}\xspace}
\newcommand{\est}{{\tt EST}\xspace}
\newcommand{\grrt}{{\tt GEOM-RRT}\xspace}
\newcommand{\rrtstar}{{\tt RRT}$^*$\xspace}
\newcommand{\rrg}{{\tt RRG}\xspace}
\newcommand{\btt}{{\tt BTT}\xspace}
\newcommand{\fmt}{{\tt FMT}$^*$\xspace}
\newcommand{\dfmt}{{\tt DFMT}$^*$\xspace}
\newcommand{\dprm}{{\tt DPRM}$^*$\xspace}
\newcommand{\mstar}{{\tt M}$^*$\xspace}
\newcommand{\drrtstar}{{\tt dRRT}$^*$\xspace}
\newcommand{\sst}{{\tt SST}\xspace}
\newcommand{\sststar}{{\tt SST}$^*$\xspace}
\newcommand{\stride}{{\tt STRIDE}\xspace}
\newcommand{\aorrt}{{\tt AO-RRT}\xspace}
\newcommand{\aorrtrebuilding}{{\tt Multi-tree AO-RRT}\xspace}
\newcommand{\aorrtnopruning}{{\tt AO-RRT}\xspace}
\newcommand{\aoest}{{\tt AO-EST}\xspace}
\newcommand{\kpiece}{{\tt KPIECE}\xspace}
\newcommand{\hybrrttwo}{{\tt HybAO-RRT}\xspace}
\newcommand{\hybrrttwostride}{{\tt HybAO-RRT-STRIDE}\xspace}
\newcommand{\hybrrttwoest}{{\tt HybRRT2\!.\!0-EST}\xspace}
\newcommand{\rrttwo}{{\tt AO-RRT}\xspace}
\newcommand{\aorrtprune}{{\tt AO-RRT Pruning}\xspace}
\newcommand{\rrtbc}{{\tt BCRRT}\xspace}
\newcommand{\rrtbctwo}{{\tt BCRRT2\!.\!0}\xspace}

\newcommand{\xmin}{x_{\textup{min}}}
\newcommand{\Xnear}{X_{\textup{near}}}
\newcommand{\Xgoal}{\X_{\textup{goal}}}
\newcommand{\xgoal}{x_{\textup{goal}}}
\newcommand{\xinit}{x_{\textup{init}}}
\newcommand{\xnew}{x_{\textup{new}}}
\newcommand{\xnear}{x_{\textup{near}}}
\newcommand{\xrand}{x_{\textup{rand}}}
\newcommand{\xrandtwo}{x_{\textup{rand2}}}
\newcommand{\ygoal}{y_{\textup{goal}}}
\newcommand{\Ygoal}{\Y_{\textup{goal}}}
\newcommand{\Yfree}{\Y_{\textup{free}}}
\newcommand{\Xfree}{\X_{\textup{free}}}
\newcommand{\yinit}{y_{\textup{init}}}
\newcommand{\ynew}{y_{\textup{new}}}
\newcommand{\ynear}{y_{\textup{near}}}
\newcommand{\yrand}{y_{\textup{rand}}}
\newcommand{\ymin}{y_{\textup{min}}}
\newcommand{\xparent}{x_{\textup{parent}}}
\newcommand{\cmin}{c_{\textup{min}}}
\newcommand{\cmax}{c_{\textup{max}}}
\newcommand{\crand}{c_{\textup{rand}}}
\newcommand{\cnew}{c_{\textup{new}}}
\newcommand{\cnear}{c_{\textup{near}}}
\newcommand{\Tprop}{T_{\textup{prop}}}
\newcommand{\trand}{t_{\textup{rand}}}
\newcommand{\tnew}{t_{\textup{new}}}
\newcommand{\urand}{u_{\textup{rand}}}
\newcommand{\unew}{u_{\textup{new}}}
\newcommand{\pinew}{\pi_{\textup{new}}}
\newcommand{\pimin}{\pi_{\textup{min}}}

\newcommand{\zv}{\vec{0}}

\newcommand{\randomstate}{\textsc{random-state}}
\newcommand{\sample}{\textsc{sample}}
\newcommand{\nearest}{\textsc{nearest}}
\newcommand{\near}{\textsc{near}}
\newcommand{\steer}{\textsc{steer}}
\newcommand{\collisionfree}{\textsc{collision-free}}
\newcommand{\propagate}{\textsc{propagate}}
\newcommand{\newstate}{\textsc{new-state}}
\newcommand{\propstate}{\textsc{prop-state}}
\newcommand{\propcost}{\textsc{prop-cost}}
\newcommand{\cost}{\textsc{cost}\xspace}
\newcommand{\tracepath}{\textsc{trace-path}}
\newcommand{\nulll}{\textsc{null}}

\newcommand{\de}{\delta,\epsilon}
\newcommand{\decomp}{$(\delta,\epsilon)$-complete\xspace}
\newcommand{\decomps}{$(\delta,\epsilon)$-completeness\xspace}

\newcommand{\ZN}{\mathbb{Z}^d}
\newcommand{\DN}{D_d^*}
\newcommand{\AN}{A_d^*}
\newcommand{\Lattices}{$\ZN,\DN,\AN$ }
\newcommand{\XL}{\X_{\Lambda}^{\delta,\epsilon}}
\newcommand{\XZ}{\X_{\mathbb{Z}^d}^{\delta,\epsilon}}
\newcommand{\XD}{\X_{D_d^*}^{\delta,\epsilon}}
\newcommand{\XA}{\X_{A_d^*}^{\delta,\epsilon}}
\newcommand{\XR}{\X_{\text{rnd}}^{\delta,\epsilon}}

\newcommand{\td}{\tilde{d}}

\newcommand{\vol}{\textup{vol}}

\newcommand{\btheta}{\Bar{\theta}}

\newcommand{\Rcov}{\R_{\text{cover}}}

\newcommand{\glo}{\textsc{glo}\xspace}
\newcommand{\loc}{\textsc{loc}\xspace}
\newcommand{\rnd}{\textsc{rnd}\xspace}
\newcommand{\rndm}{\textsc{rnd}$^-$\xspace}

\newif\ifincludeappendix
\includeappendixtrue  
\newcommand{\appendixtext}[1]{%
  \ifincludeappendix
    in~\ref{#1}.%
  \else
    in the supplementary material)%
  \fi
}
\newcommand{\conditionaltext}[2]{%
  \ifincludeappendix
    #1%
  \else
    #2%
  \fi
}


%

\IEEEpeerreviewmaketitle
\setcounter{equation}{100}
\subsection{Additional details for Theorem~1}\label{app:decomp_lattice_proof}
We provide details that were omitted in the proof of Theorem~1. First, we derive the matrix $P$.
Given the planes $H$ and  $H_0:=\{x_{d+1}=0\}$, we wish  to find a plane $H_{ref}$ that is half-way (angle-wise) between $H$ and $H_0$. This would allow to reflect points in $H$ onto $H_0$ through $H_{ref}$ where the reflection is achieved using the Householder matrix $P:=I-2\hat{n}_{ref}\hat{n}^t_{ref}$, where $\hat{n}_{ref}\in \dR^{d+1}$ is the normal of $H_{ref}$~\cite{householder1958unitary}. That is, we reflect a lattice point $p\in \dR^{d+1}$ by computing the value \({p_{\text{reflected}}=P\cdot p}\).

Next, we show that the normal 
\begin{equation*}
    \hat{n}_{ref}:=\frac{1}{\sqrt{2-\frac{2}{\sqrt{d+1}}}}\cdot \left(-\tfrac{1}{\sqrt{d+1}},\dots,-\tfrac{1}{\sqrt{d+1}},1-\tfrac{1}{\sqrt{d+1}}\right)
\end{equation*}
satisfies those requirements.\footnote{We obtained the expression for $\hat{n}_{ref}$ by first considering $d=2$, where the task is more tangible, and then generalizing to higher dimensions.}  Consider the Householder matrix 
\begin{align}
\label{eq:reflection}
 P&= 
 I-2\hat{n}_{ref}\hat{n}^t_{ref}\nonumber\\
 &=\begin{pNiceArray}{cw{c}{1cm}c|c}[margin]
            \Block{3-3}<\Large>{I_d - \frac{1}{D-\sqrt{D}}\mathds{1}} 
            & & & \dfrac{1}{\sqrt{D}} \\
            & & & \Vdots \\
            & & & \dfrac{1}{\sqrt{D}} \\
            \hline
            \dfrac{1}{\sqrt{D}} & \dots& \dfrac{1}{\sqrt{D}} & \dfrac{1}{\sqrt{D}}
    \end{pNiceArray},
\end{align}
where $D:=d+1$, $I_d$ is an $d\times d$ identity matrix, and $\mathds{1}$ is the $d\times d$ matrix with $1$s in all its entries. 

Consider a point $p\in A^*_d$. Next, we show that it is reflected onto the plane  $H_0$, i.e., for $v=P\cdot p$, we get $v_{d+1}=0$. To do that, we move to the basis of the integer vector space, and show that for all $1\leq i\leq d$, taking the base element $e_i=(0,\dots,1,\dots,0)$, the $(d+1)$th element of $v=PG^t\cdot e_i$ (i.e., using the generator and then the reflector) is zero. First, for all $i<d$ it holds that 
    \begin{align*}
        PG^t\cdot e_i=P\cdot
        \begin{pmatrix}
        1 &  0&  \dots&  0& -1& 0 &\dots& 0
        \end{pmatrix}^t.
    \end{align*}
Now, considering that the elements of the final row of $P$ are all equal to $1/\sqrt{D}$, we obtain a zero in the $(d+1)$th dimension. It remains to calculate the expression resulting from multiplying with $e_d$:
        \begin{align*}
        PG^t\cdot e_d=P\cdot
        \begin{pmatrix}
        -\frac{D-1}{D} &  \frac{1}{D}&  \dots&  \frac{1}{D}
        \end{pmatrix}^t.
    \end{align*}
    Looking specifically at the last element, we see that it is equal to     \begin{align*}
        \frac{1}{\sqrt{D}}\cdot\frac{1-D}{D} + (D-1)\frac{1}{D}\frac{1}{\sqrt{D}}=\frac{1-D+D-1}{D\sqrt{D}}=0.
    \end{align*}

    That is, by applying the transformation $P$ on the lattice points, we reflect them onto the $x_{d+1}=0$ plane. It remains to get rid of the $(d+1)$th dimension. This is accomplished by the mapping
\begin{align*}
        E=
        \begin{pmatrix}
            1 & 0 & \dots & 0 & 0 \\
            0 & 1 & \dots & 0 & 0 \\
            \vdots & \vdots & \ddots & \vdots & 0 \\
            0 & 0 & \dots & 1 & 0
        \end{pmatrix}_{d\times(d+1)}.
    \end{align*}
    
It remains to compute the  explicit embedding $T(g):=EPG^t(g)$, for $g\in \dZ^d$. We first calculate 
    \begin{align*}
        \left(EP\right)^t=
        \begin{pNiceArray}{cw{c}{1cm}c}[margin]
            \Block{3-3}<\Large>{I_d - \frac{1}{D-\sqrt{D}}\mathds{1}} 
            & &  \\
            & &  \\
            & &  \\
            \hline
            \dfrac{1}{\sqrt{D}} & \dots & \dfrac{1}{\sqrt{D}}
        \end{pNiceArray}_{d\times(d+1)}.
    \end{align*}
 Next, it can be shown that
    \begin{align}
        T^t&=G\left(EP\right)^t\nonumber\\
        &=\begin{pmatrix}
            1 & -1 &  0  & \dots & 0 \\
            1 & 0  &  -1 & \dots & 0 \\
            \vdots & \vdots  &  \vdots  & \ddots & \vdots \\
            1 & 0  &  0  & \dots & -1 \\
            \frac{1}{D - \sqrt{D}} - 1 & \frac{1}{D - \sqrt{D}} & \frac{1}{D - \sqrt{D}} & \dots & \frac{1}{D - \sqrt{D}}
        \end{pmatrix}_{d\times(d+1)}\!\!\!\!\!\!.
    \end{align}


\subsection{Additional details for Theorem~3}\label{app:CC}
We provide details omitted from the main body of the text. 
We start with a simplified derivation of a single annulus, which would inform the more advanced construction. Fix ${0<r_1<
  r^*}$ forced it to a single line, and define $\btheta_{r'} := \frac{r'}{{\beta^*}}f_\Lambda$, and observe that 
\begin{align}
  CC_\X&\leq  r^*\cdot
\left|\X\cap (\B_{r^*}\setminus \B_{r_1})\right| + r_1\cdot
\left|\X\cap \B_{r_1}\right|\nonumber \\ & = r^*\left(|\X\cap \B_{r^*}|-|\X \cap \B_{r_1}|\right) + r_1 \left|\X\cap \B_{r_1}\right| \nonumber\\
& = r^*|\X\cap \B_{r^*}|+ (r_1-r^*) |\X\cap \B_{r_1}| \nonumber \\  
  & = r^*\frac{\partial(B_1)}{\sqrt{\det(\Lambda)}}\btheta^d_{r^*} +r^* P_d(\btheta_{r^*}) + (r_1-r^*) \frac{\partial(B_1)}{\sqrt{\det(\Lambda)}}\btheta^d_{r_1}\nonumber\\& + (r_1-r^*) P_d(\btheta_{r_1}) \nonumber
\\
& = \frac{\partial(B_1)}{\sqrt{\det(\Lambda)}}\theta^d\left({r^*}^{d+1}+{r_1}^{d+1}-r^*{r_1}^{d}\right)\nonumber\\&+ r P_d(\btheta_{r^*}) + (r_1-r^*) P_d(\btheta_{r_1})\nonumber \\ & = \frac{\partial(B_1)}{\sqrt{\det(\Lambda)}}\theta^d\left({r^*}^{d+1}+{r_1}^{d+1}-r^*{r_1}^{d}\right)+ r^* P_d(\btheta_{r^*}), \label{eq:CC1}
\end{align}
where the sample complexity bound in Equation~(5) is used. For simplicity, we bound throughout the error term with $r^* P_d(\btheta_{r^*})$.
Next, we optimize the value $r_1$ to minimize the expression in Equation~\eqref{eq:CC1}.

Consider the function $f(r_1)={r^*}^{d+1}-{r^*} r^d_1 + {r^*}^{d+1}_1$. We look for the minimum of $f(r_1)$ by requiring that
\begin{align*}
            f'(r_1)=-{r^*} dr_1^{d-1}+(d+1)r_1^d=0,
\end{align*}
which yields the value $r'_1:=\frac{d}{d+1}{r^*}$. This value is  a minimum since
\begin{align*}
 f^{(2)}(r_1)|_{r'_1}=&\left(-{r^*}(d-1)r_1^{d-2}+d(d+1)r_1^{d-1}\right)|_{r_1'}\\
        =&{r^*}^{d-1}\left(\frac{d^d}{(d+1)^{d-2}}-\frac{d^{d-2}(d-1)}{(d+1)^{d-2}}\right)\\
        =&{r^*}^{d-1}d^{d-2}\frac{d^2-d+1}{(d+1)^{d-2}},
    \end{align*}
    and we know that $d^2-d+1>0$ for all $d\geq 2$.
    
Now, we apply the above line of reasoning in a recursive manner by considering a sequence of $k+1\geq 2$ radii ${0<r_k<\ldots<r_0={r^*}}$ where $r_i:=\td^i r^*$, where $\td:=\frac{d}{d+1}$. This leads to the bound
\begin{align}
\label{eq:cc_eval_app}
CC_\X&\leq \sum_{i=0}^{k-1}r_i |\X\cap (\B_{r_i}\setminus \B_{r_{i+1}})| + r_k|\X\cap \B_{r_k}|\nonumber\\
  &= \frac{\partial(B_1)}{\sqrt{\det(\Lambda)}} \left(\underbrace{{r^*} \btheta^d_{r^*} + \sum_{i=1}^k(r_i-r_{i-1}) \btheta^d_{r_i}}_{:=\gamma}\right) + {r^*} P_d(\btheta_{r^*}).
\end{align}

We show that 
\[\gamma:=r \btheta^d_{r^*} + \sum_{i=1}^k(r_i-r_{i-1}) \btheta^d_{r_i}= {r^*} \btheta^d_{r^*} \left(1 - \frac{\xi^{d+2} - \xi}{ d\xi - (d+1)}\right),\]
where $r_i=\td^i {r^*},\td:=\frac{d}{d+1}, \btheta_{r_i}= r_i\frac{\btheta_{r^*}}{r^*}, k=d,$ and $\xi:=\td^d=\left(\frac{d}{d+1}\right)^d$. In particular,
\begin{align}
  \gamma &={r^*} \btheta^d_{r^*} + \sum_{i=1}^k(r_i-r_{i-1}) r_i^d\frac{\btheta^d_{r^*}}{{r^*}^d} \nonumber\\
  & = {r^*} \btheta^d_{r^*} + \sum_{i=1}^k{r^*}\td^{i-1}(\td-1) \td^{di} {r^*}^d\frac{\btheta^d_{r^*}}{{r^*}^d} \nonumber\\ 
  &=  {r^*} \btheta^d_{r^*} + \sum_{i=1}^k{r^*} (\td-1) \td^{di+ i -1} \btheta^d_{r^*} \nonumber\\ 
  &=   {r^*} \btheta^d_{r^*} \left(1 + \sum_{i=1}^k (\td-1) \td^{di+ i -1} \right)\nonumber\\
  &= {r^*} \btheta^d_{r^*} \left(1 + \frac{\td-1}{\td}\sum_{i=1}^k \td^{(d+1)i} \right)\nonumber\\
  &= {r^*} \btheta^d_{r^*} \left(1 + \frac{\td-1}{\td}\frac{\left(\td^{d+1}\right)^{k+1} - \td^{d+1}}{\td^{d+1} - 1} \right)\nonumber\\
  &= {r^*} \btheta^d_{r^*} \left(1 + \td^d(\td-1)\frac{\left(\td^{d+1}\right)^k - 1}{\td^{d+1} - 1} \right).\nonumber\\
\end{align}

Taking $k=d$ results in $r_k=(\frac{d}{d+1})^d {r^*}\approx\frac{1}{e}{r^*}$. 
To use the sample set analysis, we need a large enough $r$ value, so assuming the original $r$ is large enough, we can deduce safely that $\frac{r}{e}$ is also large enough. 
Notice also that $\td - 1 = \frac{-1}{d+1}$, and thus $(d+1)\td=d$, so returning to our expression, and substituting $\xi:=\td^d=\left(\frac{d}{d+1}\right)^d$, we obtain 
\begin{align*}
    \gamma&={r^*} \btheta^d_{r^*} \left(1 - \frac{\td^d\left(\td^{d(d+1)} - 1\right)}{(d+1)(\td^{d+1} - 1)}\right)\\
    &= {r^*} \btheta^d_{r^*} \left(1 - \frac{\xi\left(\xi^{d+1} - 1\right)}{(d+1)(\td \xi - 1)}\right) 
    \\&= {r^*} \btheta^d_{r^*} \left(1 - \frac{\xi^{d+2} - \xi}{ d\xi - (d+1)}\right)
    :={r^*} \btheta^d_{r^*}\zeta.
\end{align*}

We finish this section with a plot of the value $\gamma$ in Figure~\ref{fig:annuli_bound:app}.

\subsection{Additional experimental results}
Additional scenarios, which were omitted from the main paper, are given in Figure~\ref{fig:scenarios:app}. Extended results comparing lattice-based samples using the \loc algorithm are provided in Table~\ref{tbl:lattice_comparison:app}.

\begin{figure}[H]
\centering  
\includegraphics[width=0.9\columnwidth]{Images/annuli_bound.pdf}
\caption{Plot of the improvement factor $\gamma$.}
\label{fig:annuli_bound:app}
\end{figure}

\begin{figure*}[thb]
  \centering
\subfloat[Zigzag-bypass (short)]{\includegraphics[width=1.15\columnwidth,clip]{Images/Scenarios/ZZB2H_scenario.png}
    }
\subfloat[Narrow (more scenarios)]{\includegraphics[width=0.465\columnwidth,clip]{Images/Scenarios/N1_scenarios.png}
    }
  \caption{Additional scenarios used in the experiments. The scenario ZZB3, which is not illustrated here, is similar to ZZB2, only that the horizontal hallways are twice as long.}
  \label{fig:scenarios:app}
\end{figure*}

\begin{figure*}[tbh]
  \centering
\subfloat{\includegraphics[width=\columnwidth,clip]{Images/tuning1.pdf}
    }
    \subfloat{\includegraphics[width=\columnwidth,clip]{Images/tuning2.pdf}
    }
    \newline
\subfloat{\includegraphics[width=\columnwidth,clip]{Images/tuning3.pdf}
    }
\subfloat{\includegraphics[width=\columnwidth,clip]{Images/tuning4.pdf}
    }
      \caption{Effect of the parameters $\delta,\epsilon$ on the performance of \loc with $\XA$ for $\delta=2.5$ (left) and $\delta=4$ (right). We report the running time (top) and solution length (bottom). The absence of data points for the parameters $\delta=4, \eps\in \{2,4,5\}$ indicates a solution failure.  
  }
  \label{fig:parameters:app}
\end{figure*}

\begin{table}[tbh]
\caption{Extended comparison of running time and solution length using lattices-based sample sets (where the underlying lattice is denoted in the table) in the iA*-\loc algorithm. Solution length is normalized with respect to the length obtained using $\XA$. }
\centering
\label{tbl:lattice_comparison:app}
\begin{tabular}{|c||ccc|cc|}
\hline
 & \multicolumn{3}{c|}{\cellcolor[HTML]{EFEFEF} Time (s)} & \multicolumn{2}{c|}{\cellcolor[HTML]{EFEFEF} Length (r)} \\ \cline{2-6} 
\multirow{-2}{*}{\begin{tabular}[c]{@{}c@{}}Scenario\\ (robot \#)\end{tabular}} & \multicolumn{1}{c|}{\cellcolor[HTML]{FFFFC7}$\ZN$} & \multicolumn{1}{c|}{\cellcolor[HTML]{FFFFC7}$\DN$} & \cellcolor[HTML]{FFFFC7}$\AN$ & \multicolumn{1}{c|}{\cellcolor[HTML]{FFFFC7}$\ZN$} & \cellcolor[HTML]{FFFFC7}$\DN$ \\ \hline \hline
\cellcolor[HTML]{ECF4FF}N4(2) & \multicolumn{1}{c|}{0.00} & \multicolumn{1}{c|}{0.00} & 0.00 & \multicolumn{1}{c|}{0.62} & 0.74 \\
\cellcolor[HTML]{ECF4FF}N1(5) & \multicolumn{1}{c|}{165.35} & \multicolumn{1}{c|}{4.59} & 0.36 & \multicolumn{1}{c|}{0.65} & 0.79 \\
\cellcolor[HTML]{ECF4FF}N2(5) & \multicolumn{1}{c|}{62.68} & \multicolumn{1}{c|}{1.81} & 0.41 & \multicolumn{1}{c|}{0.85} & 0.95 \\
\cellcolor[HTML]{ECF4FF}N3(5) & \multicolumn{1}{c|}{142.27} & \multicolumn{1}{c|}{2.91} & 0.59 & \multicolumn{1}{c|}{0.65} & 0.87 \\
\cellcolor[HTML]{ECF4FF}N5(5) & \multicolumn{1}{c|}{dnf} & \multicolumn{1}{c|}{4.82} & 3.32 & \multicolumn{1}{c|}{dnf} & 0.82 \\
\cellcolor[HTML]{ECF4FF}N1B(6) & \multicolumn{1}{c|}{dnf} & \multicolumn{1}{c|}{328.30} & 15.08 & \multicolumn{1}{c|}{dnf} & 0.89 \\ \hline
\cellcolor[HTML]{ECF4FF}BT4(2) & \multicolumn{1}{c|}{0.04} & \multicolumn{1}{c|}{0.01} & 0.01 & \multicolumn{1}{c|}{0.69} & 0.85 \\
\cellcolor[HTML]{ECF4FF}BT10(2) & \multicolumn{1}{c|}{-} & \multicolumn{1}{c|}{1.20} & 0.30 & \multicolumn{1}{c|}{-} & 0.92 \\
\cellcolor[HTML]{ECF4FF}BT5(3) & \multicolumn{1}{c|}{0.54} & \multicolumn{1}{c|}{0.14} & 0.06 & \multicolumn{1}{c|}{0.38} & 0.51 \\
\cellcolor[HTML]{ECF4FF}BT1(4) & \multicolumn{1}{c|}{146.69} & \multicolumn{1}{c|}{50.81} & 3.51 & \multicolumn{1}{c|}{0.95} & 1.03 \\
\cellcolor[HTML]{ECF4FF}BT6(4) & \multicolumn{1}{c|}{dnf} & \multicolumn{1}{c|}{153.40} & 12.36 & \multicolumn{1}{c|}{dnf} & 1.04 \\
\cellcolor[HTML]{ECF4FF}BT7(4) & \multicolumn{1}{c|}{240.88} & \multicolumn{1}{c|}{5.38} & 4.36 & \multicolumn{1}{c|}{0.95} & 0.96 \\ \hline
\cellcolor[HTML]{ECF4FF}K1(3) & \multicolumn{1}{c|}{32.31} & \multicolumn{1}{c|}{4.97} & 1.37 & \multicolumn{1}{c|}{0.82} & 0.89 \\ \hline
\cellcolor[HTML]{ECF4FF}UM4(2) & \multicolumn{1}{c|}{-} & \multicolumn{1}{c|}{8.47} & 2.43 & \multicolumn{1}{c|}{-} & 0.90 \\
\cellcolor[HTML]{ECF4FF}UM1(3) & \multicolumn{1}{c|}{482.17} & \multicolumn{1}{c|}{25.15} & 6.68 & \multicolumn{1}{c|}{0.84} & 1.16 \\
\cellcolor[HTML]{ECF4FF}UM2(3) & \multicolumn{1}{c|}{13.35} & \multicolumn{1}{c|}{1.22} & 0.04 & \multicolumn{1}{c|}{1.04} & 1.52 \\
\cellcolor[HTML]{ECF4FF}UM4B3(3) & \multicolumn{1}{c|}{99.35} & \multicolumn{1}{c|}{1.03} & 0.66 & \multicolumn{1}{c|}{1.59} & 0.89 \\
\cellcolor[HTML]{ECF4FF}UM3(4) & \multicolumn{1}{c|}{236.31} & \multicolumn{1}{c|}{223.87} & 64.57 & \multicolumn{1}{c|}{0.63} & 0.97 \\ \hline
\cellcolor[HTML]{ECF4FF}ZZB1(2) & \multicolumn{1}{c|}{1.93} & \multicolumn{1}{c|}{1.01} & 0.44 & \multicolumn{1}{c|}{0.94} & 0.94 \\
\cellcolor[HTML]{ECF4FF}ZZB2(2) & \multicolumn{1}{c|}{2.91} & \multicolumn{1}{c|}{0.93} & 0.71 & \multicolumn{1}{c|}{0.94} & 0.94 \\
\cellcolor[HTML]{ECF4FF}ZZB3(2) & \multicolumn{1}{c|}{2.26} & \multicolumn{1}{c|}{0.84} & 0.47 & \multicolumn{1}{c|}{0.95} & 0.95 \\ \hline\end{tabular}
\end{table}

\begin{table*}[tbh]
\centering
\begin{tabular}{|c|cccl|ccl|cl|cl|}
\hline
 & \multicolumn{4}{c|}{\cellcolor[HTML]{EFEFEF} Total time (s)} & \multicolumn{3}{c|}{\cellcolor[HTML]{EFEFEF} Search time (s)} & \multicolumn{2}{c|}{\cellcolor[HTML]{EFEFEF}Length (r)} & \multicolumn{2}{c|}{\cellcolor[HTML]{EFEFEF}Success (\%)} \\ \cline{2-12} 
\multirow{-2}{*}{\begin{tabular}[c]{@{}c@{}}Scenario\\ (Robot \#)\end{tabular}} & \multicolumn{1}{c|}{\cellcolor[HTML]{FFFFC7}\begin{tabular}[c]{@{}c@{}}$\AN$\\ \loc\end{tabular}} & \multicolumn{1}{c|}{\cellcolor[HTML]{FFFFC7}\begin{tabular}[c]{@{}c@{}}$\AN$\\ \glo\end{tabular}} & \multicolumn{1}{c|}{\cellcolor[HTML]{FFFFC7}\begin{tabular}[c]{@{}c@{}}\rnd\\ \glo\end{tabular}} & \multicolumn{1}{c|}{\cellcolor[HTML]{FFFFC7}\begin{tabular}[c]{@{}c@{}}\rndm\\ \glo\end{tabular}} & \multicolumn{1}{c|}{\cellcolor[HTML]{FFFFC7}\begin{tabular}[c]{@{}c@{}}$\AN$\\ \glo\end{tabular}} & \multicolumn{1}{c|}{\cellcolor[HTML]{FFFFC7}\begin{tabular}[c]{@{}c@{}}\rnd\\ \glo\end{tabular}} & \multicolumn{1}{c|}{\cellcolor[HTML]{FFFFC7}\begin{tabular}[c]{@{}c@{}}\rndm\\ \glo\end{tabular}} & \multicolumn{1}{c|}{\cellcolor[HTML]{FFFFC7}\begin{tabular}[c]{@{}c@{}}\rnd\\ \glo\end{tabular}} & \multicolumn{1}{c|}{\cellcolor[HTML]{FFFFC7}\begin{tabular}[c]{@{}c@{}}\rndm\\ \glo\end{tabular}} & \multicolumn{1}{c|}{\cellcolor[HTML]{FFFFC7}\begin{tabular}[c]{@{}c@{}}\rnd\\ \glo\end{tabular}} & \multicolumn{1}{c|}{\cellcolor[HTML]{FFFFC7}\begin{tabular}[c]{@{}c@{}}\rndm\\ \glo\end{tabular}} \\ \hline
\cellcolor[HTML]{ECF4FF}N1(5) & \multicolumn{1}{c|}{0.36} & \multicolumn{1}{c|}{3.05} & \multicolumn{1}{c|}{4.16} & 3.40 & \multicolumn{1}{c|}{0.84} & \multicolumn{1}{c|}{3.37} & 2.59 & \multicolumn{1}{c|}{1.48} & 1.46 & \multicolumn{1}{c|}{80.00} & 90 \\
\cellcolor[HTML]{ECF4FF}N2(5) & \multicolumn{1}{c|}{0.41} & \multicolumn{1}{c|}{2.67} & \multicolumn{1}{c|}{2.74} & 4.28 & \multicolumn{1}{c|}{0.82} & \multicolumn{1}{c|}{2.11} & 3.62 & \multicolumn{1}{c|}{2.43} & 3.31 & \multicolumn{1}{c|}{65.00} & 95 \\
\cellcolor[HTML]{ECF4FF}N3(5) & \multicolumn{1}{c|}{0.59} & \multicolumn{1}{c|}{3.83} & \multicolumn{1}{c|}{5.44} & 4.22 & \multicolumn{1}{c|}{1.72} & \multicolumn{1}{c|}{4.65} & 3.39 & \multicolumn{1}{c|}{2.02} & 1.56 & \multicolumn{1}{c|}{85.00} & 85 \\
\cellcolor[HTML]{ECF4FF}N5(5) & \multicolumn{1}{c|}{3.32} & \multicolumn{1}{c|}{31.48} & \multicolumn{1}{c|}{23.42} & 26.19 & \multicolumn{1}{c|}{20.02} & \multicolumn{1}{c|}{18.62} & 21.14 & \multicolumn{1}{c|}{0.89} & 0.88 & \multicolumn{1}{c|}{100.00} & 100 \\ \hline
\cellcolor[HTML]{ECF4FF}BT9(2) & \multicolumn{1}{c|}{0.13} & \multicolumn{1}{c|}{0.13} & \multicolumn{1}{c|}{0.77} & 0.42 & \multicolumn{1}{c|}{0.13} & \multicolumn{1}{c|}{0.77} & 0.42 & \multicolumn{1}{c|}{1.10} & 1.41 & \multicolumn{1}{c|}{95.00} & 40 \\
\cellcolor[HTML]{ECF4FF}BT10(2) & \multicolumn{1}{c|}{0.30} & \multicolumn{1}{c|}{0.31} & \multicolumn{1}{c|}{1.16} & 0.46 & \multicolumn{1}{c|}{0.31} & \multicolumn{1}{c|}{1.16} & 0.46 & \multicolumn{1}{c|}{1.13} & 1.27 & \multicolumn{1}{c|}{95.00} & 75 \\
\cellcolor[HTML]{ECF4FF}BT1B(3) & \multicolumn{1}{c|}{34.83} & \multicolumn{1}{c|}{47.58} & \multicolumn{1}{c|}{118.27} & 62.88 & \multicolumn{1}{c|}{47.27} & \multicolumn{1}{c|}{118.11} & 62.70 & \multicolumn{1}{c|}{0.93} & 0.99 & \multicolumn{1}{c|}{100.00} & 100 \\
\cellcolor[HTML]{ECF4FF}BT2(3) & \multicolumn{1}{c|}{5.62} & \multicolumn{1}{c|}{7.08} & \multicolumn{1}{c|}{22.67} & 28.17 & \multicolumn{1}{c|}{6.97} & \multicolumn{1}{c|}{22.61} & 28.11 & \multicolumn{1}{c|}{0.93} & 1.12 & \multicolumn{1}{c|}{100.00} & 95 \\
\cellcolor[HTML]{ECF4FF}BT2B(3) & \multicolumn{1}{c|}{9.58} & \multicolumn{1}{c|}{14.40} & \multicolumn{1}{c|}{41.67} & 21.23 & \multicolumn{1}{c|}{14.13} & \multicolumn{1}{c|}{4.36} & 21.09 & \multicolumn{1}{c|}{1.00} & 1.05 & \multicolumn{1}{c|}{100.00} & 95 \\
\cellcolor[HTML]{ECF4FF}BT3(3) & \multicolumn{1}{c|}{5.38} & \multicolumn{1}{c|}{14.15} & \multicolumn{1}{c|}{62.22} & 32.10 & \multicolumn{1}{c|}{12.80} & \multicolumn{1}{c|}{61.54} & 31.39 & \multicolumn{1}{c|}{1.05} & 1.11 & \multicolumn{1}{c|}{100.00} & 100 \\
\cellcolor[HTML]{ECF4FF}BT5(3) & \multicolumn{1}{c|}{0.06} & \multicolumn{1}{c|}{0.38} & \multicolumn{1}{c|}{0.27} & 0.19 & \multicolumn{1}{c|}{0.12} & \multicolumn{1}{c|}{0.14} & 0.05 & \multicolumn{1}{c|}{0.57} & 0.57 & \multicolumn{1}{c|}{100.00} & 85 \\
\cellcolor[HTML]{ECF4FF}BT8(3) & \multicolumn{1}{c|}{12.17} & \multicolumn{1}{c|}{19.31} & \multicolumn{1}{c|}{169.32} & 79.52 & \multicolumn{1}{c|}{18.87} & \multicolumn{1}{c|}{169.12} & 79.31 & \multicolumn{1}{c|}{1.00} & 1.05 & \multicolumn{1}{c|}{100.00} & 100 \\
\cellcolor[HTML]{ECF4FF}BT8B(3) & \multicolumn{1}{c|}{3.17} & \multicolumn{1}{c|}{3.60} & \multicolumn{1}{c|}{41.63} & 24.23 & \multicolumn{1}{c|}{3.55} & \multicolumn{1}{c|}{41.60} & 24.20 & \multicolumn{1}{c|}{1.16} & 1.20 & \multicolumn{1}{c|}{100.00} & 100 \\
\cellcolor[HTML]{ECF4FF}BT11(3) & \multicolumn{1}{c|}{17.21} & \multicolumn{1}{c|}{35.21} & \multicolumn{1}{c|}{51.19} & 31.23 & \multicolumn{1}{c|}{34.34} & \multicolumn{1}{c|}{50.77} & 30.80 & \multicolumn{1}{c|}{0.88} & 0.95 & \multicolumn{1}{c|}{100.00} & 100 \\
\cellcolor[HTML]{ECF4FF}BT1(4) & \multicolumn{1}{c|}{3.51} & \multicolumn{1}{c|}{97.33} & \multicolumn{1}{c|}{63.69} & 68.39 & \multicolumn{1}{c|}{13.88} & \multicolumn{1}{c|}{18.83} & 22.18 & \multicolumn{1}{c|}{1.02} & 1.04 & \multicolumn{1}{c|}{100.00} & 100 \\
\cellcolor[HTML]{ECF4FF}BT6(4) & \multicolumn{1}{c|}{12.36} & \multicolumn{1}{c|}{124.16} & \multicolumn{1}{c|}{106.73} & 95.75 & \multicolumn{1}{c|}{43.87} & \multicolumn{1}{c|}{61.96} & 50.18 & \multicolumn{1}{c|}{1.01} & 1.03 & \multicolumn{1}{c|}{100.00} & 100 \\
\cellcolor[HTML]{ECF4FF}BT7(4) & \multicolumn{1}{c|}{4.36} & \multicolumn{1}{c|}{95.89} & \multicolumn{1}{c|}{60.65} & 56.06 & \multicolumn{1}{c|}{15.06} & \multicolumn{1}{c|}{15.24} & 9.29 & \multicolumn{1}{c|}{1.00} & 1.01 & \multicolumn{1}{c|}{100.00} & 100 \\ \hline
\cellcolor[HTML]{ECF4FF}UM4(2) & \multicolumn{1}{c|}{2.43} & \multicolumn{1}{c|}{2.93} & \multicolumn{1}{c|}{12.71} & 2.26 & \multicolumn{1}{c|}{2.90} & \multicolumn{1}{c|}{12.69} & 2.24 & \multicolumn{1}{c|}{0.96} & 1.41 & \multicolumn{1}{c|}{70.00} & 5 \\
\cellcolor[HTML]{ECF4FF}UM4B1(2) & \multicolumn{1}{c|}{4.81} & \multicolumn{1}{c|}{5.68} & \multicolumn{1}{c|}{17.38} & 5.03 & \multicolumn{1}{c|}{5.64} & \multicolumn{1}{c|}{17.35} & 5.01 & \multicolumn{1}{c|}{0.86} & 1.12 & \multicolumn{1}{c|}{90.00} & 45 \\
\cellcolor[HTML]{ECF4FF}UM1(3) & \multicolumn{1}{c|}{6.68} & \multicolumn{1}{c|}{58.62} & \multicolumn{1}{c|}{49.35} & 31.92 & \multicolumn{1}{c|}{47.14} & \multicolumn{1}{c|}{42.58} & 24.86 & \multicolumn{1}{c|}{0.98} & 1.09 & \multicolumn{1}{c|}{100.00} & 100 \\
\cellcolor[HTML]{ECF4FF}UM2(3) & \multicolumn{1}{c|}{0.04} & \multicolumn{1}{c|}{2.94} & \multicolumn{1}{c|}{4.49} & 3.44 & \multicolumn{1}{c|}{0.21} & \multicolumn{1}{c|}{2.97} & 1.83 & \multicolumn{1}{c|}{1.95} & 2.23 & \multicolumn{1}{c|}{75.00} & 40 \\
\cellcolor[HTML]{ECF4FF}UM5(3) & \multicolumn{1}{c|}{2.87} & \multicolumn{1}{c|}{31.79} & \multicolumn{1}{c|}{29.71} & 21.31 & \multicolumn{1}{c|}{19.01} & \multicolumn{1}{c|}{22.18} & 13.63 & \multicolumn{1}{c|}{1.05} & 1.26 & \multicolumn{1}{c|}{95.00} & 85 \\ \hline
\cellcolor[HTML]{ECF4FF}ZZB1(2) & \multicolumn{1}{c|}{0.44} & \multicolumn{1}{c|}{0.49} & \multicolumn{1}{c|}{10.44} & 0.47 & \multicolumn{1}{c|}{0.48} & \multicolumn{1}{c|}{10.43} & 0.45 & \multicolumn{1}{c|}{0.89} & 1.01 & \multicolumn{1}{c|}{100.00} & 5 \\
\cellcolor[HTML]{ECF4FF}ZZB2(2) & \multicolumn{1}{c|}{0.71} & \multicolumn{1}{c|}{2.51} & \multicolumn{1}{c|}{272.70} & 2.66 & \multicolumn{1}{c|}{1.22} & \multicolumn{1}{c|}{271.96} & 1.83 & \multicolumn{1}{c|}{0.89} & 2.62 & \multicolumn{1}{c|}{100.00} & 65 \\
\cellcolor[HTML]{ECF4FF}ZZB3(2) & \multicolumn{1}{c|}{0.47} & \multicolumn{1}{c|}{7.69} & \multicolumn{1}{c|}{341.84} & 5.33 & \multicolumn{1}{c|}{1.18} & \multicolumn{1}{c|}{338.15} & 1.45 & \multicolumn{1}{c|}{0.88} & 2.58 & \multicolumn{1}{c|}{100.00} & 65 \\ \hline
\end{tabular}
\caption{Comparison of running time and solution length between $\XA$ (using \loc and \glo) and uniform random sampling. For random sampling we report the average values in terms of running and solution length (the latter is given as normalized value with  respect to the solution length with $\XA$). }
\label{tbl:lattice_vs_random:app}
\end{table*}

\subsection{Comparison with Random Sampling}
Extended results where $\XA$-samples are compared with \rnd are given in Table~\ref{tbl:lattice_vs_random:app}. Here, we consider two versions of random sampling. The first version, denoted by \rnd, which is identical to the one considered in the main paper, uses random sampling together with the asymptotically optimal connection radius $r_{\textup{rnd}}(n)$, which is commonly used in practice. The second version, denoted by \rndm uses the radius as ${r^*}$ used for lattice-based sampling. The latter is used to further emphasize the inferiority of uniform random sampling as compared to $\XA$ due to identical parameters between $\XA$-\glo and \rndm-\glo (except for the sampling distribution). In particular, the move to \rndm  severely reduces the success rates in some of the scenarios.

Another addition in Table~\ref{tbl:lattice_vs_random:app} is the running time of the search algorithm (under "search time"). Recall that the total running time for \glo consists of the (i) construction of the sample set and the nearest-neighbor data structure and the (ii) running the search algorithm. Although both $\XA$-\glo and \rnd use the same number of samples, the construction time is usually larger in the former due to an additional step of constructing the lattice samples over the entire configuration space, which is currently implemented in a naive and unoptimized manner. In this sense, the comparison between $\XA$-\glo and \rnd is not entirely fair. Thus, we also report the running time of the search algorithm, which can be the computational bottleneck, especially for more complicated robot geometries where the collision-check operation is more expensive~\cite{KleinbortSH16}. Although the search time for $\XA$-\glo is usually lower for most scenarios, we argue that with more expensive collision checks, the advantage of lattice-based sample sets would be even more prominent.

\subsection{Effect of parameter choice}
We report the effect of the choice of the $\delta$ and $\eps$ parameters on solution length and running time for the \loc algorithm using $\XA$ sampling. We specifically focus on the ZZB3 scenario due to the availability of several homotopy classes for the solution, where each class has a different length and level of difficulty. For instance, in one class, the robots use the rightmost part of the workspace, which consists of a winding path, and exchange positions halfway between---leading to a relatively short solution length. In a second class, the robots use the long passage to the left, which consists of long straight-line motions and yields a significantly longer solution length.

We set $\delta\in \{2.75,4\}$ and report the solution length and running time in Figure~\ref{fig:parameters:app} for $\eps\in \{0.5,0.6,\ldots,1,2,\ldots,10\}$. Observe that for $\delta=2.75$ the planner obtains a low-length solution already for high $\eps$ values, whereas $\delta=4$ initially uncovers an inefficient solution length-wise but eventually settles on the better homotopy class when $\eps$ is reduced. From values of $\eps\leq 1$ the length relatively stabilizes, while the runtime jumps at several orders of magnitude, which highlights the exponential dependence of sample and collision-check complexity on the value $\eps$. Finding the middle-ground $\eps$ value is an important goal, which we leave for future work. 

Notice that the planner fails to find a solution for $\delta=4$ and $\eps\in\{2,4,5\}$. Due to our \decomps result, this implies that no $4$-clear solution exists. Despite this, the planner does succeed for some values of $\eps$, which suggests that our sufficient conditions for \decomps are not necessary. The success could also be explained by the specific arrangement of the points in $\XA$, which coincidentally induces a connected component via the second homotopy class for this specific scenario. It should also be noted that the sample set $\X_{\AN}^{4,\eps}$ can be viewed (via Lemma~1) as the sample set $\X_{\AN}^{2.5,\eps'}$ for $\eps$ small enough, which explains the success of the planner with  $\delta=4$ and smaller $\eps$ values. 



\bibliographystyle{plainnat}
\bibliography{references}










%
%
%
%



